\newcommand{\xx}{\boldsymbol{x}}
\newcommand{\bz}{\boldsymbol{z}}
\newcommand{\bx}{\boldsymbol{x}}
\newcommand{\bxi}{\boldsymbol{\xi}}
\newcommand{\by}{\boldsymbol{y}}
\newcommand{\beps}{\boldsymbol{\epsilon}}
\newcommand{\btheta}{\boldsymbol{\theta}}
\newcommand{\boldzero}{\mathbf{0}}
\newcommand{\diff}{\mathrm{d}}
\newcommand{\noise}{\boldsymbol{\epsilon}}
\newcommand{\bm}{\boldsymbol{w}_t}
\newcommand{\rbm}{\bar{\boldsymbol{w}}_t}
\newcommand{\nablaxx}{\nabla_{\hspace{-0.5mm}\xx}}
\newcommand{\Real}{\mathbb{R}}
\newcommand{\Normal}{\mathcal{N}}
\newcommand{\Identity}{\boldsymbol{I}}
\newcommand{\Expectation}{\mathbb{E}}
\newcommand{\bw}{\text{\boldmath{$w$}}}
\DeclareMathOperator*{\argmin}{arg\,min}
\newtheorem{theorem}{Theorem}[section]
\newtheorem{proposition}[theorem]{Proposition}
\newtheorem{lemma}[theorem]{Lemma}
\newtheorem{corollary}[theorem]{Corollary}
\newtheorem{definition}[theorem]{Definition}
\newtheorem{assumption}[theorem]{Assumption}
\newtheorem{remark}{Remark}
\title{\textit{SA-Solver}: Stochastic Adams Solver for Fast Sampling of Diffusion Models}
\author{%
  Shuchen Xue\textnormal{$^{1,4}$}\thanks{Work done during an internship at Huawei Noah's Ark Lab. Email: \nolinkurl{xueshuchen17@mails.ucas.ac.cn}, \hspace*{1.35em}\nolinkurl{luoweijian@stu.pku.edu.cn}}, Mingyang Yi\textnormal{$^{2}$}\thanks{Corresponding authors: Mingyang Yi (\nolinkurl{yimingyang2@huawei.com})}, Weijian Luo\textnormal{$^{3}$}\footnotemark[1], Shifeng Zhang\textnormal{$^{2}$},Jiacheng Sun\textnormal{$^{2}$},\and \textbf{Zhenguo Li}\textnormal{$^{2}$}, \textbf{Zhi-Ming Ma}\textnormal{$^{1,4}$} \\
  \textnormal{$^{1}$}University of Chinese Academy of Sciences \textnormal{$^{2}$} Huawei Noah's Ark Lab
  \textnormal{$^{3}$} Peking University\\\textnormal{$^{4}$}Academy of Mathematics and Systems Science, Chinese Academy of Sciences\\
}
\begin{document}
\maketitle
\begin{abstract}
Diffusion Probabilistic Models (DPMs) have achieved considerable success in generation tasks. As sampling from DPMs is equivalent to solving diffusion SDE or ODE which is time-consuming, numerous fast sampling methods built upon improved differential equation solvers are proposed. The majority of such techniques consider solving the diffusion ODE due to its superior efficiency. However, stochastic sampling could offer additional advantages in generating diverse and high-quality data. In this work, we engage in a comprehensive analysis of stochastic sampling from two aspects: variance-controlled diffusion SDE and linear multi-step SDE solver. Based on our analysis, we propose \textit{SA-Solver}, which is an improved efficient stochastic Adams method for solving diffusion SDE to generate data with high quality. Our experiments show that \textit{SA-Solver} achieves: 1) improved or comparable performance compared with the existing state-of-the-art (SOTA) sampling methods for few-step sampling; 2) SOTA FID on substantial benchmark datasets under a suitable number of function evaluations (NFEs). Code is available at \url{https://github.com/scxue/SA-Solver}.
\end{abstract}
\section{Introduction}\label{sec:intro}
Diffusion Probabilistic Models (DPMs)~\citep{sohl2015deep, ho2020denoising, song2020score} have demonstrated substantial success across a broad spectrum of generative tasks such as image synthesis~\citep{dhariwal2021diffusion, Rombach_2022_CVPR, ho2022cascaded}, video generation~\citep{ho2022vdm,blattmann2023videoldm}, text-to-image generation~\citep{2021glide,dalle2,imagen}, speech synthesis~\citep{lam2022bddm, BinauralGrad}, \emph{etc}. The primary mechanism of DPMs involves a forward diffusion process that incrementally introduces noise into data. Simultaneously, a reverse diffusion process is learned to generate data from this noise. Despite DPMs demonstrating enhanced generation performance in comparison to alternative methods such as Generative Adversarial Networks (GAN)~\citep{goodfellow2014generative} or Variational Autoencoders (VAE)~\citep{kingma2013auto}, the sampling process of DPMs demand hundreds of evaluations of network function evaluations (NFE)~\citep{ho2020denoising}. The substantial computation requirement poses a significant limitation to their wider application in practice.
\par
The existing literature on improving the sampling efficacy of DPMs can be categorized into two ways, depending on whether conducting extra training on the DPMs. The first category necessitates supplementary training~\citep{luhman2021knowledge,salimans2022progressive,meng2022on,watson2021learning,xiao2021tackling, wang2023diffusiongan}, which often emerges as a bottleneck, thereby limiting their practical application. Due to this, we focus on exploring the second category, which consists training-free methods to improve the sampling efficiency of DPMs in this paper. Current training-free samplers employ efficient numerical schemes to solve the diffusion SDE/ODE\citep{song2020denoising,lu2022dpmsa,zhao2023unipc,jolicoeur2021gotta,bao2022analytic}. Compared with solving diffusion SDE (stochastic sampler) \citep{jolicoeur2021gotta,bao2022analytic,Karras2022edm}, solving diffusion ODE (deterministic sampler) \citep{song2020denoising,lu2022dpmsa,zhao2023unipc} empirically exhibits better sampling efficiency. Existing stochastic samplers typically exhibit slower convergence speed. However, empirical observations in \citep{song2020score,Karras2022edm} indicate that the stochastic sampler has the potential to generate higher-quality data when increasing the sampling steps. This empirical observation motivates us to further explore the efficient stochastic sampler.  
\par
Owing to the observed superior performance of stochastic sampler \citep{song2020score,Karras2022edm}, we speculate that adding properly scaled noise in the diffusion SDE may facilitate the quality of generated data. Thus, instead of solving the vanilla diffusion SDE in \citep{song2020denoising}, we propose to consider a family of diffusion SDEs which shares the same marginal distribution \citep{zhang2023fast,Karras2022edm} with different noise scales. Meanwhile, efficient stochastic solvers are not carefully studied, which could be the reason that diffusion ODE exhibits better sampling efficiency. To overcome this problem, we study the linear multi-step SDE solvers~\citep{2006multistepsde} and incorporate them in the sampling. 
\par
Based on these studies, we propose \textit{SA-Solver} with theoretical convergence order to solve the proposed diffusion SDEs. Our \textit{SA-Solver} is based on the stochastic Adams method in \citep{2006multistepsde}, by adapting it to the exponentially weighted integral and analytical variance. With the proposed diffusion SDEs and \textit{SA-Solver}, we can efficiently generate data with controllable noise scales. We empirically evaluate our \textit{SA-Solver} on plenty of benchmark datasets of image generation. The evaluation criterion is the Fréchet Inception Distance (FID) score~\citep{heusel2017gans} under different number of function evaluations (NFEs). The experimental results can be summarized as three folds: 1) Under small NFEs, our \textit{SA-Solver} has improved or comparable FID scores, compared with baseline methods; 2) Under suitable NFEs our \textit{SA-Solver} achieves the State-of-the-Art FID scores over all benchmark datasets; 3) \textit{SA-Solver} achieves superior performance over deterministic samplers when the model is not fully trained.
\section{Related Works}
The DPMs originate from the milestone work \citep{sohl2015deep}, and are further developed by \citep{ho2020denoising} and \citep{song2020score} to successfully generate high-quality data, under the framework of discrete and continuous diffusion SDEs respectively. In this paper, we mainly focus on the latter framework. As mentioned in Section \ref{sec:intro}, plenty of papers are working on accelerating the sampling of DPMs due to their low efficiency, distinguished by whether conducting a supplementary training stage. Training-based methods, e.g., knowledge distillation~\citep{luhman2021knowledge,salimans2022progressive,meng2022on}, learning-to-sample \citep{watson2021learning}, and integration with GANs \citep{xiao2021tackling, wang2023diffusiongan}, have the potential to sampling for one or very few steps to enhance the efficiency, but their applicability is limited by the lack of a plug-and-play nature, thereby constraining their broad applicability across diverse tasks. Thus we mainly focus on the training-free methods in this paper.
\paragraph{Solving Diffusion ODE.} Since the sampling process is equivalent to solving diffusion SDE (ODE), the training-free methods are mainly built on solving the differential equations via high-efficiency numerical methods. As ODEs are easier to solve compared with SDEs, the ODE sampler has attracted great attention. For example, \citet{song2020denoising} provides an empirically efficient solver DDIM. \citet{zhang2023fast} and \citet{lu2022dpmsa} point out the semi-linear structure of diffusion ODEs, and develop higher-order ODE samplers based on it. \citet{zhao2023unipc} further improve these samplers in terms of NFEs by integrating the mechanism of predictor-corrector method.   
\paragraph{Solving Diffusion SDE.} Though less explored than the ODE sampler, the SDE sampler exhibits the potential of generating higher-quality data~\citep{Karras2022edm}. Thus developing an efficient SDE sampler as we did in this paper is a meaningful topic. In the existing literature, researchers~\citep{ho2020denoising,bao2022analytic,song2020score} solve the diffusion SDE by first-order discretization numerical method. The higher-order stochastic sampler of diffusion SDE has also been discussed in~\citep{jolicoeur2021gotta}. \citet{Karras2022edm} proposes another stochastic sampler (which is not a general SDE numerical solver) tailored for diffusion problems. However, in contrast to our proposed \textit{SA-Solver}, the existing SDE samplers are limited due to their low efficiency~\citep{ho2020denoising,bao2022analytic,song2020score} or sensitivity to hyperparameters~\citep{Karras2022edm}.  
\par
We found a concurrent paper proposing an SDE sampler SDE-DPM-Solver++ \citep{lu2023dpmsolver++} which is similar to our \textit{SA-Solver}. Though both methods develop multi-step diffusion SDE samplers, our \textit{SA-Solver} is different from SDE-DPM-Solver++ as follows: 1) \textit{SA-Solver} incorporates the predictor-corrector method, which helps improve the quality of generated data~\citep{song2020score,li2023erasolver,zhao2023unipc}; 2) In contrast to SDE-DPM-Solver++, \textit{SA-Solver} has theoretical guarantees with proved convergence order; 3) SDE-DPM-Solver++ is a special case of \textit{SA-Solver} when the predictor step equals 2 with no corrector in our predictor-corrector method, while our solver supports arbitrary orders with analytical forms.
\section{Preliminary}\label{sec:pre}
In the regime of the continuous stochastic differential equation (SDE), Diffusion Probabilistic Models (DPMs)~\citep{sohl2015deep, ho2020denoising, song2020score, kingma2021variational} construct noisy data through the following linear SDE:
\begin{equation}
\label{eq: forward process}
\diff \xx_t = f(t) \xx_t \diff t + g(t)  \diff \bm,
\end{equation}
where $\bm \in \Real^d$ represents the standard Wiener process, $f(t) \xx_t$ and $g(t)$ respectively denote the drift and diffusion coefficients. For each time $t \in [0,T]$, $\xx_t | \xx_0 \sim \Normal(\alpha_t \xx_0, \sigma^2_t\Identity)$.
\par
Let $p_t(\xx)$ denotes the marginal distribution of $\xx_t$, the coefficients $f(t)$ and $g(t)$ are meticulously selected to guarantee that the marginal distribution $p_T(\xx_T)$ closely approximates a Gaussian distribution, i.e., $\Normal(\boldzero, \Identity)$, and the \textit{signal-to-noise-ratio} (SNR) $\alpha_t^2/\sigma_t^2$ is strictly decreasing w.r.t. $t$. In the sequel, we follow the established notations in~\citep{kingma2021variational}:
\begin{equation}
\label{eq: relation_f_alpha}
f(t) = \frac{\diff \log \alpha_t}{\diff t},\hspace{4mm} g^2(t) = \frac{\diff \sigma_t^2}{\diff t} - 2\frac{\diff \log \alpha_t}{\diff t} \sigma_t^2.
\end{equation}
\citet{ANDERSON1982313} demonstrates a pivotal theorem that the forward process~\eqref{eq: forward process} has an equivalent reverse-time diffusion process (from $T$ to $0$) as the following equation, so that generating process can be equivalent to numerically solve the diffusion SDE \citep{ho2020denoising,song2020score}.   
\begin{equation}
\label{eq: reverse SDE}
\diff \xx_t = \left[f(t) \xx_t - g^2(t)\nablaxx\log p_t(\xx_t)\right] \diff t + g(t)  \diff \rbm, \qquad \xx_T \sim p_T(\xx_T)
\end{equation}
where $\rbm$ represents the Wiener process in reverse time, and $\nablaxx\log p_t(\xx)$ is the score function. 
\par
Moreover, \citet{song2020score} also prove that there exists a corresponding deterministic process whose trajectories share the same marginal probability densities $p_t(\xx)$ as~\eqref{eq: reverse SDE}, so that the ODE solver can be adopted for efficient sampling~\citep{lu2022dpmsa,zhao2023unipc}:
\begin{equation}
\label{eq: reverse ODE}
\diff \xx_t = \left[f(t) \xx_t - \frac{1}{2}g^2(t)\nablaxx\log p_t(\xx_t)\right] \diff t , \qquad \xx_T \sim p_T(\xx_T)
\end{equation}
To get the \textit{score function} $\nablaxx\log p_t(\xx_t)$ in \eqref{eq: reverse SDE}, we usually take neural network $\boldsymbol{s}_{\boldsymbol{\theta}}(\xx, t)$ parameterized by $\boldsymbol{\theta}$ to approximate it by optimizing the denoising score matching loss~\citep{song2020score}: 
\begin{equation}
\label{eq: score matching loss}
\boldsymbol{\theta}^* = \argmin_{\boldsymbol{\theta}} \Expectation_t \Bigl\{ \lambda(t) \Expectation_{\xx_0} \Expectation_{\xx_t|\xx_0} \bigl[\bigl\| \boldsymbol{s}_{\boldsymbol{\theta}}(\xx, t) - \nabla_{\xx_t}\log p_{0t}(\xx_t|\xx_0) \bigl\|_2^2 \bigl] \Bigl\}.
\end{equation}
In practice, two methods are used to reparameterize the score-based model~\citep{ramesh2022hierarchical}. The first approach utilizes a \textit{noise prediction model} such that  $\noise_{\boldsymbol{\theta}} (\xx_t, t) = -\sigma_t \boldsymbol{s}_{\boldsymbol{\theta}}(\xx_t, t)$, while the second employs a \textit{data prediction model}, represented by $\xx_{\boldsymbol{\theta}} (\xx_t, t) = (\xx_t - \sigma_t \noise_{\boldsymbol{\theta}} (\xx_t, t))/\alpha_t$. The reparameterized models are plugged into the sampling process \eqref{eq: reverse SDE} or \eqref{eq: reverse ODE} according to their relationship with $\boldsymbol{s}_{\boldsymbol{\theta}}(\xx_t, t)$. 

\section{Variance Controlled Diffusion SDEs}
\label{sec: Controllable Variance Reverse SDEs}
As mentioned in Section \ref{sec:intro}, most of the existing training-free efficient samplers are based on solving diffusion ODE \eqref{eq: reverse ODE}, e.g., \citep{lu2022dpmsa,song2020denoising,zhao2023unipc}, because of their improved efficiency compared with the solvers of diffusion SDE \eqref{eq: reverse SDE}. However, the empirical observations in \citep{Karras2022edm,song2020denoising} exhibit that the quality of data generated by solving diffusion SDE outperforms diffusion ODE given sufficient computational budgets. For example, in \citep{song2020denoising}, the diffusion ODE sampler DDIM \citep{song2020denoising} significantly improve the FID score of diffusion SDE sampler DDPM \citep{ho2020denoising} (from 133.37 to 6.84) on \texttt{CIFAR10} dataset~\citep{Krizhevsky09learningmultiple} under 20 NFEs. However, under 1000 NFEs, the DDPM beats the DDIM in terms of FID score (3.17 v.s. 4.04). There may be a trade-off between stochasticity and efficiency. Thus, we conjecture that adding proper scale noise during the generating process may improve the quality of generated data with few NFEs. 
\par
In this section, we explore a family of variance-controlled diffusion SDEs, so that we can use proper noise scales during the sampling stage. 
Inspired by Proposition 1 in \citep{zhang2023fast} and Eq. (6) in \citep{Karras2022edm}, we propose the following proposition to construct the aforementioned diffusion SDEs.
\begin{proposition}
\label{prop: controllable variance reverse SDEs}
For any bounded measurable function $\tau(t):[0,T] \rightarrow \Real$, the following Reverse SDEs 
\begin{equation}
\label{eq: controllable variance reverse SDEs}
\diff \xx_t = \left[f(t) \xx_t - \left(\frac{1 + \tau^2(t)}{2}\right) g^2(t)\nablaxx\log p_t(\xx_t)\right] \diff t + \tau(t)g(t)  \diff \rbm,\hspace{4mm}\xx_T \sim p_T(\xx_T)
\end{equation}
share the same marginal probability distributions with (\ref{eq: reverse ODE}) and (\ref{eq: reverse SDE}) .
\end{proposition}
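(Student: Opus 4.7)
My plan is to prove the proposition by verifying that all three processes (the forward SDE that defines $p_t$, the $\tau\equiv 1$ reverse SDE, the $\tau\equiv 0$ probability-flow ODE, and the interpolating family in \eqref{eq: controllable variance reverse SDEs}) satisfy the \emph{same} Fokker--Planck equation for $p_t$, together with the matching terminal condition $\xx_T\sim p_T$. Since a solution of a well-posed Fokker--Planck equation is uniquely determined by its initial/terminal distribution, this forces all marginals to coincide. This strategy avoids having to invoke Anderson's theorem directly for every $\tau$, and instead reduces everything to one algebraic identity.

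First, I would write down the forward Fokker--Planck equation coming from \eqref{eq: forward process}:
\begin{equation*}
\partial_t p_t(\xx) = -\nabla\cdot\bigl(f(t)\xx\, p_t(\xx)\bigr) + \tfrac{1}{2}g^2(t)\,\Delta p_t(\xx).
\end{equation*}
Next, I would derive the equation governing the density $q_t$ of the reverse-time SDE in \eqref{eq: controllable variance reverse SDEs}. The cleanest route is to reparametrize $s=T-t$ so that the reverse SDE becomes an ordinary forward SDE in $s$ with drift $-\mu_{\mathrm{rev}}$ and diffusion $\tau(t)g(t)$, apply the standard Kolmogorov forward equation, and change variables back to $t$. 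This yields
\begin{equation*}
\partial_t q_t(\xx) = -\nabla\cdot\bigl(\mu_{\mathrm{rev}}(\xx,t)\,q_t(\xx)\bigr) - \tfrac{1}{2}\tau^2(t)g^2(t)\,\Delta q_t(\xx),
\end{equation*}
where $\mu_{\mathrm{rev}}(\xx,t)=f(t)\xx - \tfrac{1+\tau^2(t)}{2}g^2(t)\nabla_{\xx}\log p_t(\xx)$.

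The core of the argument is then to substitute the \emph{ansatz} $q_t=p_t$ into the right-hand side and simplify using the identity $\nabla\cdot(p_t\nabla_{\xx}\log p_t)=\Delta p_t$. Expanding the divergence,
\begin{equation*}
\nabla\cdot\bigl(\mu_{\mathrm{rev}}\,p_t\bigr) = \nabla\cdot(f(t)\xx\, p_t) - \tfrac{1+\tau^2(t)}{2}g^2(t)\,\Delta p_t,
\end{equation*}
so that the $\tau$-dependent pieces combine as $\tfrac{1+\tau^2}{2}g^2\Delta p_t - \tfrac{\tau^2}{2}g^2\Delta p_t = \tfrac{1}{2}g^2\Delta p_t$, and the $\tau$ dependence cancels exactly. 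This reproduces the forward Fokker--Planck equation for $p_t$, so $p_t$ is indeed a solution of the reverse-time equation. Because both processes are initialized with the same terminal law $p_T$, uniqueness of the Fokker--Planck flow gives $q_t=p_t$ for all $t\in[0,T]$, covering simultaneously the three distinguished cases.

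The main obstacle I anticipate is not the algebra, which is routine, but rather the regularity bookkeeping: the reverse-time FPE derivation implicitly assumes enough smoothness and decay of $p_t$ and integrability of $\tau(t)g(t)\nabla\log p_t$ to justify integration by parts and uniqueness of the Kolmogorov solution. I would handle this by invoking the standard regularity assumptions already in force for diffusion SDE/ODE equivalences used earlier (e.g., for the derivation of \eqref{eq: reverse SDE} and \eqref{eq: reverse ODE}) and by noting that $\tau(t)$ is assumed bounded and measurable, so the modified diffusion coefficient $\tau(t)g(t)$ is no worse than $g(t)$ itself. With those conditions inherited, the rest of the proof is a direct Fokker--Planck verification.
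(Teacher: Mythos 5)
Your proof is correct and follows essentially the same route as the paper: both write the reverse-time Fokker--Planck equation for the family \eqref{eq: controllable variance reverse SDEs}, use the identity $\nabla\cdot\bigl(p_t\nabla_{\xx}\log p_t\bigr)=\Delta p_t$ to cancel the $\tau$-dependent terms so that the equation reduces to the one shared by \eqref{eq: reverse ODE} and \eqref{eq: reverse SDE}, and conclude equality of marginals from the common terminal condition $p_T$. The only cosmetic difference is that you verify $p_t$ as an ansatz solution and then invoke uniqueness, whereas the paper observes directly that the resulting equation is independent of $\tau(t)$; the substance is identical.
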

The proof can be found in Appendix~\ref{appendix: proof of prop4.1}. The proposition indicates that by solving any of the diffusion SDEs in \eqref{eq: controllable variance reverse SDEs}, we can sample from the target distribution. It is worth noting that the magnitude of noise varies with $\tau(t)$, and $\tau(t) = 0$ or $\tau(t) = 1$ respectively correspond to the diffusion ODE and SDE in \citep{song2020score}. Thus we can control the magnitude of added noise during the sampling process by varying it.   
\par
In practice, we numerically solve the diffusion SDEs \eqref{eq: controllable variance reverse SDEs} by substituting score function $\nabla_{\bx}\log{p_{t}(\bx_{t})}$ in it with the ``data prediction reparameterization model'' $\bx_{\btheta}(\bx_{t}, t)$ according to $\nabla_{\bx}\log{p_{t}(\bx_{t})}\approx -(\bx_{t} - \alpha_{t}\bx_{\btheta}(\bx_{t}, t)) / \sigma_{t}^{2}$ as pointed out in Section \ref{sec:pre}. Then diffusion SDEs to be solved become
\begin{equation}
\label{eq: reparadata1 controllable variance reverse SDEs}
    d\bx_{t} = \left[f(t)\bx_{t} + \left(\frac{1 + \tau^{2}(t)}{2\sigma_{t}}\right)g^{2}(t)\left(\frac{\bx_{t} - \alpha_{t}\bx_{\btheta}(\bx_{t}, t)}{\sigma_{t}}\right)\right]dt + \tau(t)g(t)d\bar{\bw}_{t}. 
\end{equation}
\begin{remark}
    We reparameterize the score function in diffusion SDEs \eqref{eq: controllable variance reverse SDEs} with data prediction model $\bx_{\btheta}(\bx_{t}, t)$ to get Eq. \eqref{eq: reparadata2 controllable variance reverse SDEs}. The equation can be also reparameterized by the ``noise prediction model'' $\beps_{\btheta}(\bx_{t}, t)$ as discussed in Section \ref{sec:pre}. Though the obtained diffusion ODEs e.g., Eq. \eqref{eq: reparadata2 controllable variance reverse SDEs} are equivalent, the numerical solver applied to them will result in different solutions. For our proposed \textit{SA-Solver}, we find the diffusion SDEs reparameterized data prediction model significantly improves the quality of generated data. More details and theoretical explanations are in Sec.~\ref{sec:exp} and Appendix~\ref{appendix: data or noise reparameterization}. For the remaining part of the paper, we focus on data reparameterization.
\end{remark}
We then solve the diffusion SDEs \eqref{eq: reparadata2 controllable variance reverse SDEs} with change-of-variable applying to it, i.e., changing time variable $t$ to log-SNR $\lambda_{t} = \log{(\alpha_{t} / \sigma_{t})}$. Noting the following relationship in Eq.~\eqref{eq: relation_f_alpha}
\begin{equation}
\label{eq: change of variable f and alpha relationship}
f(t) = \frac{\diff \log \alpha_t}{\diff t},\hspace{4mm} g^2(t) = \frac{\diff \sigma_t^2}{\diff t} - 2\frac{\diff \log \alpha_t}{\diff t} \sigma_t^2 = -2 \sigma_t^2 \frac{\diff \lambda_t}{\diff t},
\end{equation}
and plugging them into \eqref{eq: reparadata1 controllable variance reverse SDEs}, it becomes 
\begin{equation}
\label{eq: reparadata2 controllable variance reverse SDEs}
\diff \xx_t = \left[\frac{\diff \log \alpha_t}{\diff t} \xx_t - (1 + \tau^2(t))(\bx_{t} - \alpha_{t}\bx_{\btheta}(\bx_{t}, t)) \frac{\diff \lambda_t}{\diff t} \right]\diff t + \tau(t) \sigma_t \sqrt{-2\frac{\diff \lambda_t}{\diff t}}  \diff \rbm.
\end{equation}
The above equation has an explicit solution owing to its semi-linear structure \citep{atkinson2011numerical}. 
\begin{proposition}
\label{prop: exact solution data prediction model}
Given $\xx_s$ for any time $s > 0$, the solution $\xx_t$ at time $t \in [0,s]$ of \eqref{eq: reparadata2 controllable variance reverse SDEs} is
\begin{equation}
\label{eq: exact solution data prediction model}
\begin{aligned}
&\xx_t = \frac{\sigma_t}{\sigma_s} e^{-\int_{\lambda_s}^{\lambda_t} \tau^2(\Tilde{\lambda}) \diff \Tilde{\lambda}} \xx_s +  \sigma_t \boldsymbol{F}_{\btheta}(s,t) + \sigma_t \boldsymbol{G}(s,t), \\
&\boldsymbol{F}_{\btheta}(s,t) = \int_{\lambda_s}^{\lambda_t} e^{-\int_{\lambda}^{\lambda_t} \tau^2(\Tilde{\lambda}) \diff \Tilde{\lambda} }\left(1+\tau^2\left(\lambda\right)\right) e^{\lambda} \xx_{\btheta}\left(\xx_{\lambda}, \lambda\right) \diff \lambda \\
&\boldsymbol{G}(s,t) = \int_s^t e^{-\int_{\lambda_u}^{\lambda_t} \tau^2(\Tilde{\lambda}) \diff \Tilde{\lambda} } \tau(u) \sqrt{-2\frac{\diff \lambda_u}{\diff u}} \diff \bar{\boldsymbol{w}}_u,
\end{aligned}
\end{equation}
where $\boldsymbol{G}(s,t)$ is an Itô integral \citep{oksendal2013stochastic} with the special property 
\begin{equation}
 \label{eq: Analytical variance of reverse SDEs data}
\sigma_t  \boldsymbol{G}(s,t) \sim \Normal\Bigl(\boldzero, \sigma_t^2 \bigl( 1 - e^{-2\int_{\lambda_{s}}^{\lambda_t} \tau^2(\Tilde{\lambda}) \diff \Tilde{\lambda}} \bigl) \Bigl).
\end{equation}
\end{proposition}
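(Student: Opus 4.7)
The plan is to treat \eqref{eq: reparadata2 controllable variance reverse SDEs} as a linear SDE in $\xx_t$ with a deterministic, time-dependent coefficient multiplying $\xx_t$ plus an additive inhomogeneous drift and noise, and to solve it with the classical integrating-factor method. First I would collect the terms proportional to $\xx_t$, rewriting the equation as
\[
\diff \xx_t = h(t)\,\xx_t\,\diff t + A(t)\,\diff t + B(t)\,\diff \rbm,
\]
where $h(t) := \frac{\diff \log \alpha_t}{\diff t} - (1+\tau^2(t))\frac{\diff \lambda_t}{\diff t}$, $A(t) := (1+\tau^2(t))\alpha_t \bx_{\btheta}(\xx_t, t)\frac{\diff \lambda_t}{\diff t}$, and $B(t) := \tau(t)\sigma_t\sqrt{-2\diff \lambda_t/\diff t}$. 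The key algebraic simplification is the identity $\frac{\diff \log \alpha_t}{\diff t} - \frac{\diff \lambda_t}{\diff t} = \frac{\diff \log \sigma_t}{\diff t}$ (immediate from $\lambda_t = \log(\alpha_t/\sigma_t)$), which gives $h(t) = \frac{\diff \log \sigma_t}{\diff t} - \tau^2(t)\frac{\diff \lambda_t}{\diff t}$ and hence, after the substitution $\tilde\lambda = \lambda_u$, the closed form $\int_s^t h(u)\,\diff u = \log(\sigma_t/\sigma_s) - \int_{\lambda_s}^{\lambda_t}\tau^2(\tilde\lambda)\,\diff\tilde\lambda$.

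Next I would apply the deterministic integrating factor $\mu(t) := \exp\!\bigl(-\int_s^t h(u)\,\diff u\bigr)$. Since $\mu$ is smooth and deterministic, the ordinary product rule (no Itô correction) gives $\diff(\mu(t)\xx_t) = \mu(t)[A(t)\diff t + B(t)\diff \rbm]$; integrating from $s$ to $t$ and dividing by $\mu(t)$ yields
\[
\xx_t = \mu(t)^{-1}\xx_s + \mu(t)^{-1}\!\int_s^t \mu(u) A(u)\,\diff u + \mu(t)^{-1}\!\int_s^t \mu(u) B(u)\,\diff \rbm_u.
\]
Plugging in $\mu(t)^{-1} = \tfrac{\sigma_t}{\sigma_s} e^{-\int_{\lambda_s}^{\lambda_t}\tau^2(\tilde\lambda)\diff\tilde\lambda}$ immediately delivers the homogeneous term of \eqref{eq: exact solution data prediction model}. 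For the remaining two integrals I would change variable $u \mapsto \lambda = \lambda_u$, absorb $\alpha_u/\sigma_u = e^{\lambda_u}$ into the exponent, and collapse the nested exponentials via $-\int_{\lambda_s}^{\lambda_t}\tau^2 + \int_{\lambda_s}^{\lambda}\tau^2 = -\int_{\lambda}^{\lambda_t}\tau^2$. The deterministic integral then becomes exactly $\sigma_t\boldsymbol{F}_{\btheta}(s,t)$ and the Itô integral becomes $\sigma_t\boldsymbol{G}(s,t)$.

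For the distributional claim \eqref{eq: Analytical variance of reverse SDEs data}, the integrand in $\boldsymbol{G}(s,t)$ is deterministic, so $\sigma_t\boldsymbol{G}(s,t)$ is a Wiener integral and therefore centered Gaussian. To compute its variance by Itô's isometry, I would apply the substitution $F(u) := \int_{\lambda_u}^{\lambda_t}\tau^2(\tilde\lambda)\,\diff\tilde\lambda$, using the slick identity $\tau^2(u)\bigl(-2\,\diff\lambda_u/\diff u\bigr)\diff u = 2\,\diff F(u)$ (which follows from differentiating the lower limit of $F$ and applying the chain rule). The variance then reduces to the elementary $\int 2 e^{-2F}\diff F$, and evaluation at $F(t)=0$ and $F(s)=\int_{\lambda_s}^{\lambda_t}\tau^2\diff\tilde\lambda$ yields the claimed $\sigma_t^2\bigl(1 - e^{-2\int_{\lambda_s}^{\lambda_t}\tau^2\diff\tilde\lambda}\bigr)$.

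The main obstacle I anticipate is not analytic but bookkeeping: because $\lambda_t$ is strictly decreasing in $t$ and the proposition integrates backward from $s$ to $t \le s$, the signs arising from the change of variable $u \leftrightarrow \lambda$ and from Itô's isometry read relative to the reverse-time Brownian motion $\rbm$ must be tracked carefully, so that all three components of \eqref{eq: exact solution data prediction model} and the variance in \eqref{eq: Analytical variance of reverse SDEs data} emerge with the correct sign. Beyond this careful orientation accounting, no nontrivial stochastic analysis is required, only the integrating-factor method for linear SDEs and Itô's isometry for deterministic integrands.
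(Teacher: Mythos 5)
Your integrating-factor argument is exactly the paper's proof: the paper defines $\boldsymbol{y}_t = e^{-\int_{t_0}^{t}\left(\frac{\diff\log\alpha_v}{\diff v}-(1+\tau^2(v))\frac{\diff\lambda_v}{\diff v}\right)\diff v}\xx_t$ (your $\mu(t)\xx_t$), integrates, simplifies via $\frac{\diff\log\alpha_t}{\diff t}-\frac{\diff\lambda_t}{\diff t}=\frac{\diff\log\sigma_t}{\diff t}$ and the change of variable to $\lambda$, and then obtains the variance by Itô isometry with a substitution equivalent to your $F(u)$ (the paper uses $P(\lambda)=e^{\int_{\lambda_t}^{\lambda}2\tau^2(\Tilde{\lambda})\diff\Tilde{\lambda}}$). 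Your proposal is correct and takes essentially the same route, including the Gaussianity of the Wiener integral and the reverse-time sign bookkeeping.
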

The proof can be seen in Appendix~\ref{appendix: proof of prop 4.2}. With this proposition, we can sample from the diffusion model via numerically solving Eq.~\eqref{eq: exact solution data prediction model} starting from $\bx_{T}$ approximated by a Gaussian distribution. 
\section{\textit{SA-Solver}: Stochastic Adams Method to Solve Diffusion SDEs}
Stochastic training-free samplers for solving diffusion SDEs have not been studied as systematically as their deterministic ODE counterparts. This stems from the inherent challenges associated with designing numerical schemes for SDEs compared to ODEs \citep{kloeden1992Numerical}. Existing stochastic sampling methods either use only variant of one-step discretization of diffusion SDEs~\citep{ho2020denoising, bao2022analytic, song2020score}, or are specifically designed sampling procedures for diffusion processes~\citep{Karras2022edm} which are not general purpose SDE solvers. Jolicoeur-Martineau \textit{et al.}~\citep{jolicoeur2021gotta} uses stochastic Improved Euler's method~\citep{roberts2012siescheme} with adaptive step sizes. However, it still necessitates hundreds of steps to yield a high-quality sample. As observed by~\citep{jolicoeur2021gotta}, off-the-shelf SDE solvers are generally ill-suited for diffusion models, often exhibiting inferior qualities or even failing to converge. We postulate that the current dearth of fast stochastic samplers is principally due to factor that existing methodologies predominantly tend to rely on one-step discretization or its variants, or alternatively, on heuristic designs of stochastic samplers. To address this factor, we leverage advanced contemporary tools in numerical solutions for SDEs, specifically, \textit{stochastic Adams methods}~\citep{2006multistepsde}. It necessitates fewer evaluations compared to Stochastic Runge-Kutta schemes, making it a more suitable choice for problems which are computationally expensive - a characteristic that diffusion sampling certainly exemplifies.
\par
Next, we formally present our Stochastic Adams Solver (\textit{SA-Solver}). To solve Eq.~\eqref{eq: reparadata2 controllable variance reverse SDEs}, we first take $M + 1$ time steps $\left\{t_i\right\}_{i=0}^M$ which is strictly decreased from $t_0 = T$ to $t_M = 0$.\footnote{The diffusion SDEs \eqref{eq: reparadata1 controllable variance reverse SDEs} are reverse-time SDEs, so that the $t_{i}$ here is increased.} Then we can iteratively obtain the $\bx_{t_{i}}$ (so that $\bx_{0}$ approximates the required data) by the following relationship. 
\begin{equation}
\label{eq: integration of reverse SDE}
\xx_{t_{i + 1}} = \frac{\sigma_{t_{i + 1}}}{\sigma_{t_{i}}} e^{-\int_{\lambda_{t_i}}^{\lambda_{t_{i+1}}} \tau^2(\lambda_u) \diff \lambda_u} \xx_{t_i}+  \sigma_{t_{i + 1}} \boldsymbol{F}_{\btheta}(t_i,t_{i+1})+ \sigma_{t_{i + 1}}  \boldsymbol{G}(t_i,t_{i+1})
\end{equation}
As pointed out in Proposition \ref{prop: exact solution data prediction model}, the Itô integral term $\boldsymbol{G}(t_{i},t_{i+1})$ in above equation follows a Gaussian that can be directly sampled so we need to solve the deterministic integral term $F_{\btheta}(t_{i},t_{i+1})$. 

We further combine Eq.~\eqref{eq: integration of reverse SDE} with the predictor-corrector method, which is a widely used numerical method. It works in two main steps. First, a predictor step is taken to make an initial approximation of the solution. Second, a corrector step will refine the predictor's approximation by taking the predicted value into account. It has been proven successful in the wide application of numerical analysis~\citep{atkinson2011numerical}. Especially, there are some attempts to use the predictor-corrector method to help sample diffusion models~\citep{song2020score, li2023erasolver, zhao2023unipc}. In the subsequent Section~\ref{subsec: SA-Predictor} and Section~\ref{subsec: SA-Corrector}, we will separately derive our \textit{SA-Predictor} and \textit{SA-Corrector} using Eq.~\eqref{eq: integration of reverse SDE}. Our algorithm is outlined in Algorithm \ref{alg: SA-Solver}.
\par
\begin{algorithm}[t]
  \caption{\textit{SA-Solver}}\label{alg: SA-Solver}
  \begin{algorithmic}[1]
    \Require data prediction model $\xx_{\boldsymbol{\theta}}$, timesteps $\left\{t_i\right\}_{i=0}^M$, initial value $\xx_{t_0}$, predictor step $s_p$, corrector step $s_c$, buffer $B$ to store former evaluation of $\xx_{\boldsymbol{\theta}}$, $\tau(t)$ to control variance.
    \State $B \xleftarrow{\text{buffer}} \xx_{\boldsymbol{\theta}}(\xx_{t_0}, t_0)$
    \For{$i=1$ to $max$$(s_p, s_c)$}\Comment{Warm-up}
        \State sample $\bxi \sim \Normal(\boldzero, \Identity)$
        \State calculate steps for warm-up  $s_p^m = min(i, s_p)$, $s_c^m = min(i, s_c)$
        \State $\xx_{t_i}^p \gets s_p^m$-step \textit{SA-Predictor}$(\xx_{t_{i-1}}, B, \bxi)$ (Eq.~\eqref{s-step SA-Predictor}) \Comment{Prediction Step}
        \State $B \xleftarrow{\text{buffer}} \xx_{\boldsymbol{\theta}}(x_{t_i}^p, t_i)$\Comment{Evaluation Step}
        \State $\xx_{t_i} \gets s_c^m$-step \textit{SA-Corrector}$(\xx_{t_i}^p, \xx_{t_{i-1}}, B, \bxi)$ (Eq.~\eqref{s-step SA-Corrector})\Comment{Correction Step}
    \EndFor
    
    \For{$i=max$$(s_p, s_c) + 1$ to $M$}
        \State sample $\bxi \sim \Normal(\boldzero, \Identity)$
        \State $\xx_{t_i}^p \gets s_{p}$-step \textit{SA-Predictor}$(\xx_{t_{i-1}}, B, \bxi)$ (Eq.~\eqref{s-step SA-Predictor})\Comment{Prediction Step}
        \State $B \xleftarrow{\text{buffer}} \xx_{\boldsymbol{\theta}}(x_{t_i}^p, t_i)$\Comment{Evaluation Step}
        \State $\xx_{t_i} \gets s_{c}$-step \textit{SA-Corrector}$(\xx_{t_i}^p, \xx_{t_{i-1}}, B, \bxi)$ (Eq.~\eqref{s-step SA-Corrector})\Comment{Correction Step}
    \EndFor
    \Return $\xx_{t_M}$
  \end{algorithmic}
\end{algorithm}
\subsection{SA-Predictor}
\label{subsec: SA-Predictor}
The fundamental idea behind stochastic Adams methods is to leverage preceding model evaluations like $\xx_{\btheta}(\xx_{t_i}, t_{i}), \xx_{\btheta}(\xx_{t_{i-1}}, t_{i-1}), \cdots, \xx_{\btheta}(\xx_{t_{i-(s-1)}}, t_{i-(s-1)})$. These evaluations can be retained with negligible cost implications. Given these preceding model evaluations, a natural strategy for estimating $F_{\btheta}(t_{i},t_{i+1})$ involves the application of Lagrange interpolations~\citep{atkinson2011numerical} of these evaluations. Lagrange interpolation of $s$ points $\xx_{\btheta}(\xx_{t_i}, t_{i}), \xx_{\btheta}(\xx_{t_{i-1}}, t_{i-1}), \cdots, \xx_{\btheta}(\xx_{t_{i-(s-1)}}, t_{i-(s-1)})$ is a polynomial $L(t)$ with degrees $s-1$:
\begin{equation}
\label{eq: Lagrange interpolation}
L(t) = \sum_{j=0}^{s-1} l_{i-j}(t) \xx_{\btheta}(\xx_{t_{i-j}}, t_{i-j}),
\end{equation}
where $l_{i-j}(t): \Real \rightarrow \Real$ is the Lagrange basis. Lagrange interpolation is an excellent approximation of $\xx_{\btheta}(\xx_{t}, t)$ with the special property: $L(t_{i-j}) = \xx_{\btheta}(\xx_{t_{i-j}}, t_{i-j}), \hspace{4mm} \forall\ 0\leq j \leq s-1$. Thus a natural way to estimate $F_{\btheta}(t_{i},t_{i+1})$ is to replace $\xx_{\boldsymbol{\theta}} (\xx_{\lambda_u}, \lambda_u)$ with $L(\lambda)$, which is just a change-of-variable of $L(t)$. The formula for $s$-step \textit{SA-Predictor} is then derived.
\paragraph{$s$-step \textit{SA-Predictor}}Given the initial value $\xx_{t_i}$ at time $t_i$, a total of $s$ former model evaluations $\xx_{\btheta}(\xx_{t_i}, t_{i}), \xx_{\btheta}(\xx_{t_{i-1}}, t_{i-1}), \cdots, \xx_{\btheta}(\xx_{t_{i-(s-1)}}, t_{i-(s-1)})$, our $s$-step \textit{SA-Predictor} is defined as:
\begin{equation}
\label{s-step SA-Predictor}
\xx_{t_{i + 1}} = \frac{\sigma_{t_{i + 1}}}{\sigma_{t_{i}}} e^{-\int_{\lambda_{t_i}}^{\lambda_{t_{i+1}}} \tau^2(\Tilde{\lambda}) \diff \Tilde{\lambda}} \xx_{t_i} + \sum_{j=0}^{s-1} b_{i-j} \xx_{\btheta}(\xx_{t_{i-j}}, t_{i-j}) + \Tilde{\sigma}_i \bxi, \hspace{4mm} \bxi \sim \Normal(\boldzero, \Identity),
\end{equation}
where $\Tilde{\sigma}_i = \sigma_{t_{i + 1}}\sqrt{1 - e^{-2\int_{\lambda_{t_i}}^{\lambda_{t_{i + 1}}} \tau^2(\Tilde{\lambda}) \diff \Tilde{\lambda}}  }$ according to Proposition~\ref{prop: exact solution data prediction model} and $b_{i-j}$ is given by:
\begin{equation}
\label{eq: SA-Predictor coefficient}
b_{i-j} = \sigma_{t_{i + 1}}   \int_{\lambda_{t_i}}^{\lambda_{t_{i+1}}} e^{-\int_{\lambda}^{\lambda_{t_{i + 1}}} \tau^2(\Tilde{\lambda}) \diff \Tilde{\lambda} }\left(1+\tau^2\left(\lambda\right)\right) e^{\lambda}    l_{i-j}(\lambda) \diff \lambda,\hspace{4mm} \forall\ 0\leq j \leq s-1
\end{equation}
We show the convergence result in the following theorem. The proof can be found in Appendix~\ref{appendix: Derivations and Proofs for SA-Solver}.
\begin{theorem}[Strong Convergence of $s$-step \textit{SA-Predictor}]
\label{thm: coefficients and convergence of predictor}
Under mild regularity conditions, our $s$-step \textit{SA-Predictor} (Eq.~\eqref{s-step SA-Predictor}) has a global error in strong convergence sense of $\mathcal{O}(\underset{0 \leq t \leq T}{\sup}\tau(t) h + h^s)$, where $h = \underset{1 \leq i \leq M}{\max} \left( t_i - t_{i-1} \right)$. 
\end{theorem}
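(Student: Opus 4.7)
The plan is to follow the classical roadmap for strong convergence of linear multistep SDE schemes, adapted to the semi-linear integral representation of Proposition~\ref{prop: exact solution data prediction model}. The starting observation is that, by construction, the Gaussian increment $\sigma_{t_{i+1}}\boldsymbol{G}(t_i,t_{i+1})$ is sampled exactly from its true law, so the \emph{entire} one-step discrepancy between the exact update and the $s$-step \textit{SA-Predictor} stems from approximating the deterministic integral $\sigma_{t_{i+1}}\boldsymbol{F}_{\btheta}(t_i,t_{i+1})$ by its Lagrange-quadrature sum. Concretely, first I would write the local truncation error as
\[
\epsilon_i \;=\; \sigma_{t_{i+1}}\!\!\int_{\lambda_{t_i}}^{\lambda_{t_{i+1}}}\!\! e^{-\int_{\lambda}^{\lambda_{t_{i+1}}}\!\tau^2(\Tilde{\lambda})\diff\Tilde{\lambda}}\bigl(1+\tau^2(\lambda)\bigr)e^{\lambda}\bigl[\xx_{\btheta}(\xx_{\lambda},\lambda)-L(\lambda)\bigr]\diff\lambda,
\]
which follows directly from the defining identity \eqref{eq: SA-Predictor coefficient} for the $b_{i-j}$ together with the interpolation property $L(\lambda_{t_{i-j}})=\xx_{\btheta}(\xx_{t_{i-j}},t_{i-j})$.

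Next, I would control $\xx_{\btheta}(\xx_{\lambda},\lambda)-L(\lambda)$ by decomposing the integrand along the true stochastic trajectory via Itô's formula into a ``drift'' component $\boldsymbol{D}(\lambda)$ whose first $s$ derivatives are bounded (under the mild regularity assumption that $\xx_{\btheta}$ is $C^s$ with Lipschitz first argument), and a ``martingale'' component $\boldsymbol{M}(\lambda)$ driven by the reverse-time Wiener process with diffusion coefficient proportional to $\tau(\lambda)$. By linearity of interpolation, $L=L_D+L_M$ where $L_D,L_M$ interpolate $\boldsymbol{D},\boldsymbol{M}$ respectively. For the drift part, the classical Lagrange remainder yields $\boldsymbol{D}(\lambda)-L_D(\lambda)=\mathcal{O}(h^{s})$ uniformly on $[\lambda_{t_i},\lambda_{t_{i+1}}]$, producing a deterministic local error of order $h^{s+1}$. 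For the martingale part, I would invoke the stochastic Fubini theorem to swap the outer $\diff\lambda$ integral with the inner $\diff\bar{\boldsymbol{w}}$ integral and then apply Itô's isometry; using that $\boldsymbol{M}-L_M$ has strong size $\mathcal{O}(\sup_t\tau(t)\sqrt{h})$ pointwise on an interval of length $\mathcal{O}(h)$, the martingale contribution to $\epsilon_i$ acquires local strong size $\mathcal{O}(\sup_t\tau(t)\,h^{3/2})$.

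Then I would propagate these local bounds to the global strong error $E_i:=\bigl(\Expectation\|\xx_{t_i}^{\text{num}}-\xx_{t_i}^{\text{exact}}\|^2\bigr)^{1/2}$. Because the Lagrange sum is evaluated on the numerical (not exact) trajectory, Lipschitz continuity of $\xx_{\btheta}$ in its first argument yields a recursion of the form
\[
E_{i+1} \;\le\; (1+Ch)\max_{0\le j\le s-1}E_{i-j} \;+\; C h^{s+1} \;+\; C\sup_{t}\tau(t)\,h^{3/2},
\]
and a discrete Gronwall argument over the $\mathcal{O}(1/h)$ steps upgrades the $h^{s+1}$ local drift error to a global order $h^{s}$. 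The crucial point for the stochastic term is that the local martingale errors, being stochastic integrals against \emph{disjoint} Brownian increments, form a martingale difference sequence; by $L^2$ orthogonality their sum accumulates with a gain of $\sqrt{1/h}$ over worst-case summation, converting the local $\sup_t\tau(t)\,h^{3/2}$ into a global $\sup_t\tau(t)\,h$. Warm-up steps are harmless since a bounded number of lower-step predictor/corrector calls contributes only $\mathcal{O}(h^2)$.

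The main obstacle I anticipate is the martingale estimate of the second step: one must verify that the Lagrange basis functions $l_{i-j}(\lambda)$ remain uniformly bounded under the chosen timestep schedule (a quasi-uniform assumption on $\{t_i\}$ suffices), and that the interplay between the exponential weight $e^{-\int_{\lambda}^{\lambda_{t_{i+1}}}\tau^2(\Tilde{\lambda})\diff\Tilde{\lambda}}$ and the stochastic Fubini rearrangement preserves the $h^{3/2}$ local bound uniformly over the noise schedule. A secondary technical point is confirming that the clean $\mathcal{O}(\sup_t\tau(t)\,h)$ term genuinely absorbs \emph{all} of the Brownian-induced roughness of $\lambda\mapsto\xx_{\btheta}(\xx_\lambda,\lambda)$, so that the drift contribution enjoys the full classical Lagrange $\mathcal{O}(h^s)$ remainder rather than a degraded stochastic rate, and that when $\tau\equiv 0$ the bound correctly collapses to the deterministic Adams rate.
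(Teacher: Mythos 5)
Your proposal is correct and follows essentially the same route as the paper: the local error is split into a deterministic part of order $h^{s+1}$ (killed up to that order by the exactness of the Lagrange quadrature) and a stochastic part of order $\sup_{t}\tau(t)\,h^{3/2}$, and the global bound $\mathcal{O}(h^{s}+\sup_t\tau(t)\,h)$ follows because martingale-type local errors lose only $h^{1/2}$ rather than $h$ in accumulation. The only presentational differences are that the paper obtains the local-to-global step by citing the mean-square stability theorem for stochastic linear multistep schemes of Buckwar--Winkler (its Theorem 1) instead of your Gronwall-plus-orthogonality sketch, and it handles the point you flag as the main obstacle by an iterated It\^{o}--Taylor expansion (peeling off $\Gamma_1,\Gamma_2$ terms, all proportional to powers of $\tau$, at every order and cancelling the pure $\Gamma_0$ chains with the polynomial-exactness identities of the Lagrange basis), which is the rigorous version of your one-shot drift/martingale decomposition.
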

\subsection{SA-Corrector}
\label{subsec: SA-Corrector}
Eq.~\eqref{s-step SA-Predictor} offers a ``prediction'' $\xx_{t_{i+1}}^{p}$ that relies on information preceding or coinciding with the time step $t_i$ since we only use $\xx_{\btheta}(\xx_{t_i}, t_i)$ along with other model evaluations antecedent to it, while the integration is over time $\left[t_i, t_{i+1}\right]$. Then predictor-corrector method can be incorporated to better estimate $F_{\btheta}(t_{i},t_{i+1})$ in Eq.~\eqref{eq: integration of reverse SDE}. We perform a model evaluation $\xx_{\btheta}(\xx_{t_{i+1}}^{p}, t_{i+1})$ and construct the Lagrange interpolations of $\xx_{\btheta}(\xx_{t_{i+1}}^{p}, t_{i+1}), \xx_{\btheta}(\xx_{t_i}, t_{i}), \cdots, \xx_{\btheta}(\xx_{t_{i-(s'-1)}}, t_{i-(\hat{s}-1)})$:
\begin{equation}
\label{eq: second Lagrange interpolation}
\hat{L}(t) = \hat{l}_{i+1}(t)\xx_{\btheta}(\xx_{t_{i+1}}^{p}, t_{i+1}) + \sum_{j=0}^{\hat{s}-1} \hat{l}_{i-j}(t) \xx_{\btheta}(\xx_{t_{i-j}}, t_{i-j}),
\end{equation}
where $\hat{l}_{i-j}(t): \Real \rightarrow \Real$ is the Lagrange basis and $\hat{s}$ can be different with $s$ in Eq.~(\ref{eq: Lagrange interpolation}). The $\hat{s}$-step \textit{SA-Corrector} is derived by replacing $\xx_{\boldsymbol{\theta}} (\xx_{\lambda_u}, \lambda_u)$ with $\hat{L}(\lambda)$ which is a change-of-variable of $\hat{L}(t)$. 

\paragraph{$\hat{s}$-step \textit{SA-Corrector}} Given the initial value $\xx_{t_i}$ at time $t_i$, a total of $\hat{s}$ former model evaluations $\xx_{\btheta}(\xx_{t_i}, t_{i}), \xx_{\btheta}(\xx_{t_{i-1}}, t_{i-1}), \cdots, \xx_{\btheta}(\xx_{t_{i-(\hat{s}-1)}}, t_{i-(\hat{s}-1)})$, model evaluation of ``prediction'' $\xx_{\btheta}(\xx_{t_{i+1}}^{p}, t_{i+1})$, our $\hat{s}$-step \textit{SA-Corrector} is defined as:
\begin{equation}
\label{s-step SA-Corrector}
\xx_{t_{i + 1}} = \frac{\sigma_{t_{i + 1}}}{\sigma_{t_{i}}} e^{-\int_{\lambda_{t_i}}^{\lambda_{t_{i+1}}} \tau^2(\Tilde{\lambda}) \diff \Tilde{\lambda}} \xx_{t_i} +\hat{b}_{i+1} \xx_{\btheta}(\xx_{t_{i+1}}^{p}, t_{i+1}) + \sum_{j=0}^{\hat{s}-1} \hat{b}_{i-j} \xx_{\btheta}(\xx_{t_{i-j}}, t_{i-j}) + \Tilde{\sigma}_i \bxi,  
\end{equation}
where $\bxi \sim \Normal(\boldzero, \Identity)$, $\Tilde{\sigma}_i = \sigma_{t_{i + 1}}\sqrt{1 - e^{-2\int_{\lambda_{t_i}}^{\lambda_{t_{i + 1}}} \tau^2(\Tilde{\lambda}) \diff \Tilde{\lambda}}  }  $ according to Proposition~\ref{prop: exact solution data prediction model} and the coefficients $\hat{b}_{i+1}$, $\hat{b}_{i-j}$ is given by:
\begin{equation}
\label{eq: SA-Corrector coefficient}
\begin{aligned}
&\hat{b}_{i-j} = \sigma_{t_{i + 1}}   \int_{\lambda_{t_i}}^{\lambda_{t_{i+1}}} e^{-\int_{\lambda}^{\lambda_{t_{i + 1}}} \tau^2(\Tilde{\lambda}) \diff \Tilde{\lambda} }\left(1+\tau^2\left(\lambda\right)\right) e^{\lambda}    \hat{l}_{i-j}(\lambda) \diff \lambda,\hspace{4mm} \forall\ 0\leq j \leq s-1 \\
& \hat{b}_{i+1} = \sigma_{t_{i + 1}}   \int_{\lambda_{t_i}}^{\lambda_{t_{i+1}}} e^{-\int_{\lambda}^{\lambda_{t_{i + 1}}} \tau^2(\Tilde{\lambda}) \diff \Tilde{\lambda} }\left(1+\tau^2\left(\lambda\right)\right) e^{\lambda}    \hat{l}_{i+1}(\lambda) \diff \lambda
\end{aligned}
\end{equation}
We show the convergence result in the following theorem. The proof can be found in Appendix~\ref{appendix: Derivations and Proofs for SA-Solver}.
\begin{theorem}[Strong Convergence of $\hat{s}$-step \textit{SA-Corrector}]
\label{thm: coefficients and convergence of corrector}
Under mild regularity conditions, our $\hat{s}$-step \textit{SA-Corrector} (Eq.~\eqref{s-step SA-Corrector}) has a global error in strong convergence sense of $\mathcal{O}(\underset{0 \leq t \leq T}{\sup}\tau(t) h + h^{\hat{s}+1})$, where $h = \underset{1 \leq i \leq M}{\max} \left( t_i - t_{i-1} \right)$.
\end{theorem}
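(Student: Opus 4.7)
My plan is to mirror the structure of the proof of Theorem~\ref{thm: coefficients and convergence of predictor} (the $s$-step \textit{SA-Predictor} bound) and adapt the stochastic linear multi-step error analysis of~\citep{2006multistepsde} to the exponentially weighted integrals appearing in Proposition~\ref{prop: exact solution data prediction model}. The guiding identity is: subtract the $\hat{s}$-step corrector scheme~\eqref{s-step SA-Corrector} from the exact one-step relation~\eqref{eq: integration of reverse SDE} over $[t_i,t_{i+1}]$. The stochastic Itô term $\sigma_{t_{i+1}}\boldsymbol{G}(t_i,t_{i+1})$ is matched \emph{exactly} because we sample from its analytic Gaussian law~\eqref{eq: Analytical variance of reverse SDEs data}, so the entire local error reduces to the discrepancy between $\boldsymbol{F}_{\btheta}(t_i,t_{i+1})$ and its approximation obtained by replacing $\xx_{\btheta}(\xx_\lambda,\lambda)$ with the Lagrange interpolant $\hat{L}(\lambda)$.

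\paragraph{Local truncation error.}
First I would bound the pointwise interpolation error. The nodes are the $\hat{s}+1$ evaluation points $t_{i+1},t_i,\dots,t_{i-(\hat{s}-1)}$, so under the mild smoothness hypotheses on $\xx_{\btheta}$ (same as those needed for the predictor theorem), standard polynomial interpolation theory gives
\begin{equation*}
\bigl\| \xx_{\btheta}(\xx_\lambda,\lambda)-\hat{L}(\lambda) \bigr\| \;\le\; C\, h^{\hat{s}+1} \quad\text{for }\lambda\in[\lambda_{t_i},\lambda_{t_{i+1}}].
\end{equation*}
Multiplying by the uniformly bounded weight $e^{-\int_\lambda^{\lambda_{t_{i+1}}}\tau^2(\tilde\lambda)\diff\tilde\lambda}(1+\tau^2(\lambda))e^\lambda$, integrating over an interval of length $\mathcal{O}(h)$, and then multiplying by $\sigma_{t_{i+1}}$ yields a \emph{deterministic} local truncation error of order $\mathcal{O}(h^{\hat{s}+2})$. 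This is exactly one order better than the predictor because the corrector's interpolation stencil contains the extra node at $t_{i+1}$.

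\paragraph{Predictor feedback and stochastic part.}
The subtlety is that $\hat{L}$ uses $\xx_{\btheta}(\xx_{t_{i+1}}^p,t_{i+1})$ rather than $\xx_{\btheta}(\xx_{t_{i+1}},t_{i+1})$. I would bound this extra contribution by $\|\hat{b}_{i+1}\|\cdot\mathrm{Lip}(\xx_{\btheta})\cdot\|\xx_{t_{i+1}}^p-\xx_{t_{i+1}}\|$. A direct estimate from~\eqref{eq: SA-Corrector coefficient} shows $\hat{b}_{i+1}=\mathcal{O}(h)$, and invoking Theorem~\ref{thm: coefficients and convergence of predictor} for the one-step predictor error shows this term is strictly dominated by the $\mathcal{O}(h^{\hat{s}+2})$ and $\mathcal{O}(\sup\tau\cdot h^2)$ contributions already present (assuming $s\ge \hat{s}$, the regime of interest). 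The $\sup_t\tau(t)\,h$ stochastic piece arises because, whenever $\tau\not\equiv 0$, the interpolation nodes lie on a truly random trajectory: the drift integrand evaluated on the numerical trajectory differs from the one on the exact diffusion by an $\mathcal{O}(\tau\sqrt{h})$ perturbation per step, which after Itô-isometry-type accumulation over the weighted integral and summation of $M\sim 1/h$ steps contributes $\mathcal{O}(\sup\tau\cdot h)$ in the strong sense.

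\paragraph{Global convergence and main obstacle.}
Finally I would pass from local to global via a discrete stochastic Gronwall inequality (as in~\citep{kloeden1992Numerical,2006multistepsde}), which turns the local bound $\mathcal{O}(\sup\tau\cdot h^2 + h^{\hat{s}+2})$ into the claimed global bound $\mathcal{O}(\sup\tau\cdot h + h^{\hat{s}+1})$. The warm-up phase in Algorithm~\ref{alg: SA-Solver} needs to be treated separately but contributes $\mathcal{O}(1)\cdot h^{\min(s,\hat{s})+1}$, which is absorbed. The main obstacle I anticipate is the coupled predictor-corrector analysis: one must verify that the interpolation nodes used by the corrector, which include the predicted value $\xx_{t_{i+1}}^p$ on the random trajectory, do not inflate the constants in the discrete Gronwall step — this is where careful use of the Lipschitz continuity of $\xx_{\btheta}$ and the $\mathcal{O}(h)$ magnitude of the corrector coefficients $\hat{b}_{i-j}$ becomes essential.
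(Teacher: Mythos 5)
Your outline shares the paper's broad strategy (couple the scheme to the same Brownian path so the It\^o-integral terms cancel, isolate the error in approximating $\boldsymbol{F}_{\btheta}$, gain one order over the predictor from the extra node at $t_{i+1}$, control the predictor feedback via $\hat{b}_{i+1}=\mathcal{O}(h)$ and Lipschitzness of $\xx_{\btheta}$, then pass from local to global), and your explicit bound on the $\xx_{t_{i+1}}^{p}$ versus $\xx_{t_{i+1}}$ discrepancy is a point the paper's appendix leaves implicit. However, the load-bearing step is wrong as stated: the pointwise bound $\|\xx_{\btheta}(\xx_{\lambda},\lambda)-\hat{L}(\lambda)\|\le C h^{\hat{s}+1}$ from ``standard polynomial interpolation theory'' is unavailable whenever $\tau\not\equiv 0$, because the interpolated function is evaluated along a diffusion trajectory, which is only H\"older-$1/2$ in time; $\lambda\mapsto\xx_{\btheta}(\xx_{\lambda},\lambda)$ is not $(\hat{s}+1)$-times differentiable, so the classical interpolation remainder does not apply. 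The paper avoids this by never invoking an interpolation remainder: it expands $\xx_{\btheta}$ by iterated It\^o--Taylor formulas, cancels the iterated drift terms $\Gamma_0\cdots\Gamma_0(\xx_{\btheta})$ exactly through the Lagrange exactness identities satisfied by the coefficients $\hat{b}_{i-j}$ (Lemma~\ref{lemma: appendix lemma lagrange}), and bounds the noise-driven terms ($\Gamma_1,\Gamma_2$) by Wiener-integral moment estimates (Lemma~\ref{lemma: appendix lemma}), obtaining a local decomposition $L_{i+1}=R_{i+1}+S_{i+1}$ with $\|R_{i+1}\|_{L_2}=\mathcal{O}(h^{\hat{s}+2})$ and $\|S_{i+1}\|_{L_2}=\mathcal{O}(\sup_{t}\tau(t)\,h^{3/2})$ --- not the $\mathcal{O}(\sup\tau\,h^{2})$ you carry into your final step.

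The second gap is the local-to-global passage. A plain discrete Gronwall argument (divide the local error by $h$) applied to the correct local stochastic error $\mathcal{O}(\sup\tau\,h^{3/2})$ yields only $\mathcal{O}(\sup\tau\,h^{1/2})$ globally, not the claimed $\mathcal{O}(\sup\tau\,h)$. The extra half order comes from the mean-square stability theorem for stochastic linear multistep methods (\citep{2006multistepsde}, quoted as Theorem~\ref{thm: appendix basic convergence}), whose bound has the form $\|R_i\|_{L_2}/h+\|S_i\|_{L_2}/h^{1/2}$ and relies on splitting the local error into a part $S_i$ with vanishing conditional mean (so that martingale cancellation, not triangle-inequality summation, governs its accumulation) and a remainder $R_i$. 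Your phrase ``It\^o-isometry-type accumulation'' points at this mechanism, but the required decomposition is never made, and your final accounting (local $\mathcal{O}(\sup\tau\,h^{2}+h^{\hat{s}+2})$ fed through Gronwall) is inconsistent with the $\mathcal{O}(\tau\sqrt{h})$-per-step perturbation you yourself describe. To close the proof you need both (i) the It\^o--Taylor expansion with Lagrange-exactness cancellation to produce $R_i=\mathcal{O}(h^{\hat{s}+2})$, $S_i=\mathcal{O}(\sup\tau\,h^{3/2})$, and (ii) the $R/h+S/\sqrt{h}$ stability theorem; neither step is optional, and neither is implied by the argument as written.
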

\subsection{Connection with other samplers}
We briefly discuss the relationship between our \textit{SA-Solver} and other existing solvers for sampling diffusion ODEs or diffusion SDEs.
\paragraph{Relationship with DDIM~\citep{song2020denoising}} DDIM generate samples through the following process:
\begin{equation}
\label{eq: DDIM sample}
\xx_{t_{i+1}} = \alpha_{t_{i+1}} \left( \frac{\xx_{t_i} - \sigma_{t_i} \boldsymbol{\epsilon}_{\btheta}(\xx_{t_i}, t_i) }{\alpha_{t_i}} \right) + \sqrt{1 - \alpha_{t_{i+1}}^2 - \hat{\sigma}_{t_i}^2} \boldsymbol{\epsilon}_{\btheta}(\xx_{t_i}, t_i) + \hat{\sigma}_{t_i} \boldsymbol{\xi}, 
\end{equation}
where $\boldsymbol{\xi} \sim \Normal(\boldzero, \Identity)$, $\hat{\sigma}_{t_i}$ is a variable parameter. In practice, DDIM introduces a parameter $\eta$ such that when $\eta = 0$, the sampling process becomes deterministic and when $\eta = 1$, the sampling process coincides with original DDPM~\citep{ho2020denoising}. Specifically, $\hat{\sigma}_{t_i} = \eta \sqrt{\frac{1 - \alpha_{t_{i+1}}^2}{1 - \alpha_{t_{i}}^2} \left( 1 - \frac{\alpha_{t_{i}}^2}{\alpha_{t_{i+1}}^2} \right)}$.
\begin{corollary}[Relationship with DDIM]
\label{corol: relationship with ddim}
For any $\eta$ in DDIM, there exists a $\tau_{\eta}(t):\Real \rightarrow \Real$ which is a piecewise constant function such that DDIM-$\eta$ coincides with our $1$-step \textit{SA-Predictor} when $\tau(t) = \tau_{\eta}(t)$ with data parameterization of our variance-controlled diffusion SDE.
\end{corollary}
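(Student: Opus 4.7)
The plan is to specialize the $1$-step \textit{SA-Predictor} (Eq.~\eqref{s-step SA-Predictor} with $s=1$) and match it coefficient-by-coefficient against the DDIM update written in data-prediction form. For $s=1$ the Lagrange basis is just $l_i(t)\equiv 1$, so the predictor collapses to
\begin{equation*}
\xx_{t_{i+1}} \;=\; \frac{\sigma_{t_{i+1}}}{\sigma_{t_i}} e^{-\int_{\lambda_{t_i}}^{\lambda_{t_{i+1}}}\tau^2(\tilde\lambda)\diff\tilde\lambda}\,\xx_{t_i} \;+\; b_i\,\xx_{\btheta}(\xx_{t_i},t_i) \;+\; \tilde\sigma_i\,\bxi,
\end{equation*}
with $b_i$ and $\tilde\sigma_i$ given by Eqs.~\eqref{eq: SA-Predictor coefficient} and Proposition~\ref{prop: exact solution data prediction model}. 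In parallel, I would rewrite DDIM's Eq.~\eqref{eq: DDIM sample} by substituting $\boldsymbol{\epsilon}_{\btheta}(\xx_{t_i},t_i)=(\xx_{t_i}-\alpha_{t_i}\xx_{\btheta}(\xx_{t_i},t_i))/\sigma_{t_i}$ and using DDIM's VP convention $\sigma_t=\sqrt{1-\alpha_t^2}$, yielding a linear combination of $\xx_{t_i}$, $\xx_{\btheta}(\xx_{t_i},t_i)$ and $\bxi$ with coefficients depending only on $(\alpha_{t_i},\alpha_{t_{i+1}},\hat\sigma_{t_i})$.

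Next I would construct $\tau_\eta$. Since both schemes traverse one step at a time, it suffices to define $\tau_\eta$ as a constant $c_i$ on each interval $[t_{i+1},t_i]$. With $\tau(\lambda)\equiv c_i$ on that interval, the exponential factor simplifies to $e^{-c_i^2(\lambda_{t_{i+1}}-\lambda_{t_i})}$, and a direct computation of the integral defining $b_i$ (which is a routine exponential primitive, since the integrand is $(1+c_i^2)e^{(1+c_i^2)\lambda}$ up to constants) gives
\begin{equation*}
b_i \;=\; \alpha_{t_{i+1}} \;-\; \frac{\alpha_{t_i}}{\sigma_{t_i}}\,\sigma_{t_{i+1}}e^{-c_i^2(\lambda_{t_{i+1}}-\lambda_{t_i})},\qquad
\tilde\sigma_i \;=\; \sigma_{t_{i+1}}\sqrt{1-e^{-2c_i^2(\lambda_{t_{i+1}}-\lambda_{t_i})}}.
\end{equation*}
Now I would pick $c_i$ so that $\tilde\sigma_i=\hat\sigma_{t_i}$, i.e.\ solve
\begin{equation*}
e^{-2c_i^2(\lambda_{t_{i+1}}-\lambda_{t_i})} \;=\; 1-\frac{\hat\sigma_{t_i}^2}{\sigma_{t_{i+1}}^2}
\end{equation*}
for $c_i^2\ge 0$. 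Plugging DDIM's explicit $\hat\sigma_{t_i}=\eta\sqrt{(1-\alpha_{t_{i+1}}^2)(1-\alpha_{t_i}^2/\alpha_{t_{i+1}}^2)/(1-\alpha_{t_i}^2)}$ shows that $\hat\sigma_{t_i}^2/\sigma_{t_{i+1}}^2\in[0,1)$ for $\eta\in[0,1]$ in the VP setting, so the logarithm is well-defined and $c_i$ exists; moreover $c_i\to 0$ recovers the deterministic ODE limit ($\eta=0$) and the formula yields a valid positive value for $\eta=1$.

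With this $c_i$ pinned down, I would verify that the drift coefficients automatically coincide. The identity $1-\alpha_{t_{i+1}}^2-\hat\sigma_{t_i}^2=\sigma_{t_{i+1}}^2e^{-2c_i^2(\lambda_{t_{i+1}}-\lambda_{t_i})}$ follows directly from the choice of $c_i$, so $\sqrt{1-\alpha_{t_{i+1}}^2-\hat\sigma_{t_i}^2}=\sigma_{t_{i+1}}e^{-c_i^2(\lambda_{t_{i+1}}-\lambda_{t_i})}$. Substituting this back into the DDIM expression then shows termwise agreement: the $\xx_{t_i}$-coefficients both equal $(\sigma_{t_{i+1}}/\sigma_{t_i})e^{-c_i^2(\lambda_{t_{i+1}}-\lambda_{t_i})}$, the $\xx_{\btheta}$-coefficients both equal $\alpha_{t_{i+1}}-(\alpha_{t_i}/\sigma_{t_i})\sigma_{t_{i+1}}e^{-c_i^2(\lambda_{t_{i+1}}-\lambda_{t_i})}$, and the noise coefficients both equal $\hat\sigma_{t_i}$. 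Concatenating the $c_i$ across intervals produces the desired piecewise constant $\tau_\eta(t)$.

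The main obstacle is not conceptual but algebraic: keeping track of the three coefficients under the $(\alpha_t,\sigma_t)\leftrightarrow \lambda_t$ change of variables and confirming that the single free parameter $c_i$ actually suffices to match all three coefficients simultaneously (rather than just the noise scale). The nontrivial observation is that matching the stochastic term automatically forces the two deterministic terms to agree, which reduces the problem from three equations in one unknown to a single solvable equation.
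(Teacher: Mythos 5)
Your proposal is correct and follows essentially the same route as the paper's proof: specialize the $1$-step \textit{SA-Predictor} with a constant $\tau$ on each interval, use the identity $\sqrt{1-\alpha_{t_{i+1}}^2-\hat\sigma_{t_i}^2}=\sigma_{t_{i+1}}e^{-\tau^2(\lambda_{t_{i+1}}-\lambda_{t_i})}$ that comes from matching the noise scale, and observe that the $\xx_{t_i}$- and $\xx_{\btheta}$-coefficients then agree automatically, with the explicit $\eta\leftrightarrow\tau_\eta$ relation obtained by inverting the noise-matching equation. Your computed $b_i=\alpha_{t_{i+1}}-\frac{\alpha_{t_i}}{\sigma_{t_i}}\sigma_{t_{i+1}}e^{-c_i^2(\lambda_{t_{i+1}}-\lambda_{t_i})}$ is the same as the paper's $\alpha_{t_{i+1}}\bigl(1-e^{-(1+\tau^2)(\lambda_{t_{i+1}}-\lambda_{t_i})}\bigr)$, so the argument matches the paper's proof in substance.
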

The proof can be found in Appendix~\ref{appendix: proof of corollary 5.3}.
\paragraph{Relationship with DPM-Solver++(2M)~\citep{lu2023dpmsolver++}} DPM-Solver++ is a high-order solver which solves diffusion ODEs for guided sampling. DPM-Solver++(2M) is a special case of our $2$-step \textit{SA-Predictor} when $\tau(t) \equiv 0$.
\paragraph{Relationship with UniPC~\citep{zhao2023unipc}} UniPC is a unified predictor-corrector framework for solving diffusion ODEs. UniPC-p is a special case of our \textit{SA-Solver} when $\tau(t) \equiv 0$ with predictor step $p$, corrector step $p$ in Algorithm~\ref{alg: SA-Solver}.

\section{Experiments}
\label{sec:exp}

In this section, we demonstrate the effectiveness of \textit{SA-Solver} over the existing sampling methods on both a small number of function evaluations (NFEs) settings and a considerable number of NFEs settings, with extensive experiments. We use Frechet Inception Distance (FID)~\citep{heusel2017gans} as the evaluation metric to show the effectiveness of \textit{SA-Solver}. Unless otherwise specified, 50K images are sampled for evaluation. The experiments are conducted on various datasets, with image sizes ranging from 32x32 to 256x256. We also evaluate the performance of various models, including ADM~\citep{dhariwal2021diffusion}, EDM~\citep{Karras2022edm}, Latent Diffusion~\citep{Rombach_2022_CVPR}, and DiT~\citep{peebles2022scalable}.

For ease of computation, we take $\tau(t) \equiv \tau$ as a constant function or a piecewise constant function. We leave the detailed settings for $\tau(t)$, predictor step, and corrector step in Appendix~\ref{appendix: experiment details}. For the following experiments, we first discuss the effectiveness of the data-prediction model. Then we evaluate the performance of \textit{SA-Solver} under different random noise scales $\tau$ to demonstrate the principles for selecting $\tau$ under few-steps and a considerable number of steps. Finally, we compare \textit{SA-Solver} with the existing solver to demonstrate its effectiveness.

\subsection{Comparison between Data-Prediction Model and Noise-Prediction Model}

We first discuss the necessity of using a data-prediction model for \textit{SA-Solver}. We test on ImageNet 256x256 (latent diffusion model) with $\tau(t) \equiv 1$. Results of the data-prediction and noise-prediction model are shown in Table \ref{tab:data_prediction}. It can be seen that the data-prediction model can achieve better sampling quality values under different NFEs, thus we use the data-prediction model in the rest of the experiments. More detailed discussions and theoretical analysis can be seen in Appendix~\ref{appendix: data or noise reparameterization}. 
\begin{table}[t]
  \caption{Compared results by FID $\downarrow$ under data-prediction and noise-prediction models, measured by different NFEs. The latent diffusion model in ImageNet 256x256 is used for evaluation.}
  \label{tab:data_prediction}
  \centering
  {\setlength{\extrarowheight}{1.5pt}
	\begin{tabular}{lcc}
        \hline
	    NFEs & Noise-prediction & Data-prediction \\
        \hline
        20 & 310.5 & \bf 3.88 \\
        40 & 5.85 & \bf 3.47 \\
        60 & 3.54 & \bf 3.41 \\
        80 & 3.41 & \bf 3.38 \\
        \hline
        \\[-2ex]
	\end{tabular}
 }
\end{table}
\begin{table}[t]
  \caption{Compared results by FID $\downarrow$ under different predictor steps and corrector steps, measured by different NFEs. The VE-baseline model~\citep{Karras2022edm} in CIFAR10 32x32 is used for evaluation.}
  \label{tab:pcablation}
  \centering
  {\setlength{\extrarowheight}{1.5pt}
\begin{tabular}{lcccc}
        \hline
	    method $\backslash$ setting (NFE, $\tau$)& 15,0.4 & 23,0.8 & 31,1.0 & 47,1.4 \\
        \hline
Predictor 1-steps only & 13.76& 12.44  & 11.72  & 14.67 \\
Predictor 1-steps, Corrector 1-step & 8.49 & 6.87& 6.13 & 6.75\\
Predictor 3-steps only  & 5.30& 3.93& 3.52& 2.98 \\
Predictor 3-steps, Corrector 3-steps & \bf 4.91 &  \bf3.77 &  \bf3.40 & \bf 2.92 \\ 
        \hline
        \\[-2ex]
\end{tabular}
}
\end{table}
\begin{table}[t]
  \caption{Sampling quality measured by FID of different sampling methods on DiT, Min-SNR ImageNet~\citep{peebles2022scalable,hang2023efficient} models. DiT-XL/2-G and ViT-XL-patch2-32 with $s=1.5$ are used.}
  \label{tab:dit}
  \centering
  {\setlength{\extrarowheight}{3.0pt}
	\begin{tabular}{lcc}
        \hline
	    Model & \multicolumn{2}{c}{FID ($\downarrow$)} \\
        \hline
        \multirow{2}{*}{DiT ImageNet 256x256}&DDPM (NFE=250)&\textbf{SA-Solver (Ours)} (NFE=60) \\
        & 2.27 & \bf 2.02 \\
        \hline
        \multirow{2}{*}{Min-SNR ImageNet 256x256}&Heun (NFE=50)&\textbf{SA-Solver (Ours)} (NFE=20) \\
        & 2.06 & \bf 1.93 \\
        \hline
        
        \multirow{2}{*}{DiT ImageNet 512x512}&DDPM (NFE=250)&\textbf{SA-Solver (Ours)} (NFE=60) \\
        & 3.04 & \bf 2.80 \\
        \hline
        \\[-2ex]
	\end{tabular}
 }
\end{table}

\subsection{Ablation Study on Predictor/Corrector Steps and Predictor-Corrector Method}
\label{sec: ablationpc}
To verify the effectiveness of our proposed Stochastic Linear Multi-step Methods and Predictor-Corrector Method, we conduct an ablation study on the CIFAR10 dataset as follows. We use EDM~\citep{Karras2022edm} baseline-VE pretrained checkpoint. Concretely, we vary the number of predictor steps and meanwhile conduct them with/without corrector to separately explore the effect of the two components. As can be seen in Table~\ref{tab:pcablation}, both Stochastic Linear Multi-step Methods (Predictor 1-steps only v.s. Predictor 3-steps only) and Predictor-Corrector Method (Predictor 1-steps only v.s. Predictor 1-steps, Corrector 1-step, and Predictor 3-steps only v.s. Predictor 3-steps, Corrector 3-steps) improve the performance of our sampler.

\subsection{Effect on Magnitude of Stochasticity}

The proposed \textit{SA-Solver} is evaluated on various types of datasets and models, including ImageNet 256x256~\citep{deng2009imagenet} (latent diffusion model~\citep{Rombach_2022_CVPR}), LSUN Bedroom 256x256~\citep{yu15lsun} (pixel diffusion model~\citep{dhariwal2021diffusion}), ImageNet 64x64 (pixel diffusion model~\citep{dhariwal2021diffusion}), and CIFAR10 32x32 (pixel diffusion model~\citep{Karras2022edm}). The models corresponding to these datasets cover pixel-space and latent-space diffusion models, with unconditional, conditional, and classifier-free guidance settings ($s=1.5$ in ImageNet 256x256).

We used different constant $\tau$ values for \textit{SA-Solver}, namely $\{0.0, 0.2, 0.4, ..., 1.6\}$, where larger value of $\tau$ correspond to larger magnitude of stochasticity. The FID results under different NFE and $\tau$ values are shown in Fig. \ref{fig:ablations}. Note that for LSUN Bedroom, 10K images are sampled for evaluation. The experiments indicate that (1) under relatively small NFEs, smaller nonzero $\tau$ values can achieve better FID results; (2) under a considerable number of steps (20-100), large $\tau$ can achieve better FID. This phenomenon is consistent with the theoretical analysis we conducted in Appendix~\ref{appendix: Derivations and Proofs for SA-Solver} and Appendix~\ref{appendix: selection of stochasticity}, in which the sampling error with stochasticity is dominated under small NFE, while larger $\tau$ can significantly improve the quality of generated samples as the number of steps increases. In subsequent experiments, unless otherwise specified, we will report the results of a proper $\tau(t)$ value. Details can be found in Appendix~\ref{appendix: experiment details}.

\begin{figure}[t]
\centering
\includegraphics[width=\textwidth]{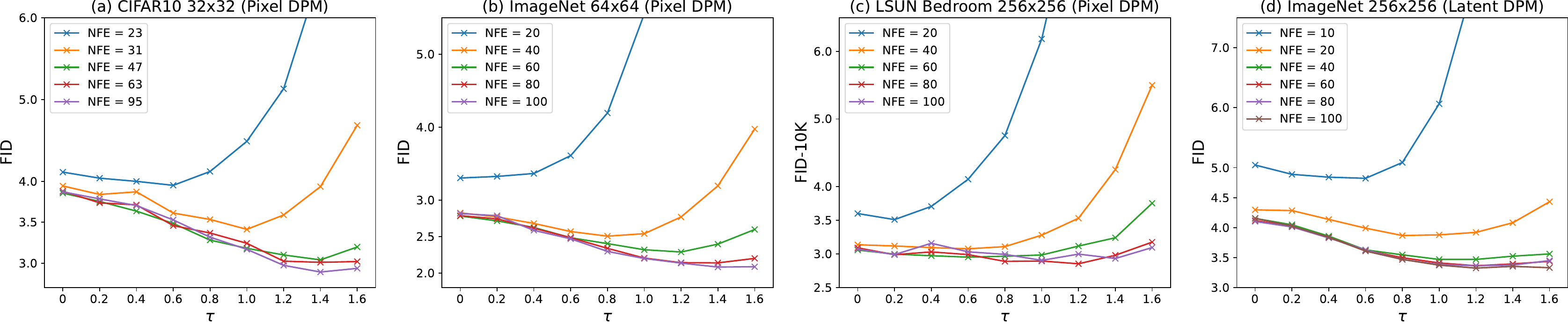}
\caption{Sampling quality measured by FID $\downarrow$ of \textit{SA-Solver} under a different number of function evaluations (NFE), varying the stochastic noise scale $\tau$. For LSUN Bedroom, 10K samples are used to evaluate FID.}
\label{fig:ablations}
\end{figure}

\begin{figure}[t]
\centering
\includegraphics[width=\textwidth]{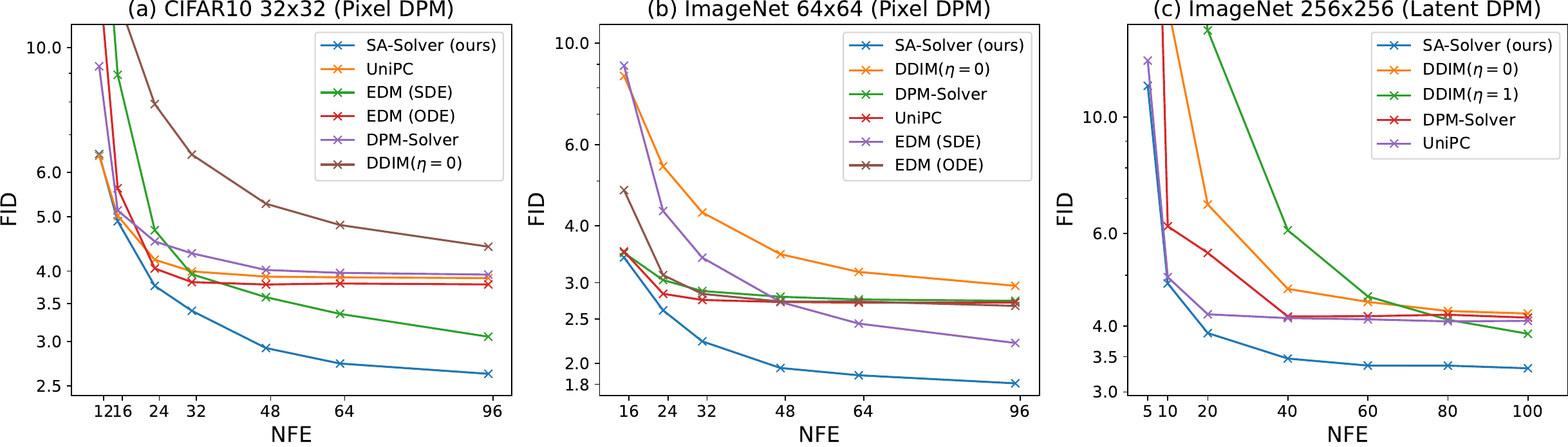}
\caption{Sampling quality measured by FID $\downarrow$ of different sampling methods of DPMs under different NFEs.}
\label{fig:sotas}
\end{figure}

\begin{figure}[t]
\centering
\includegraphics[width=\textwidth]{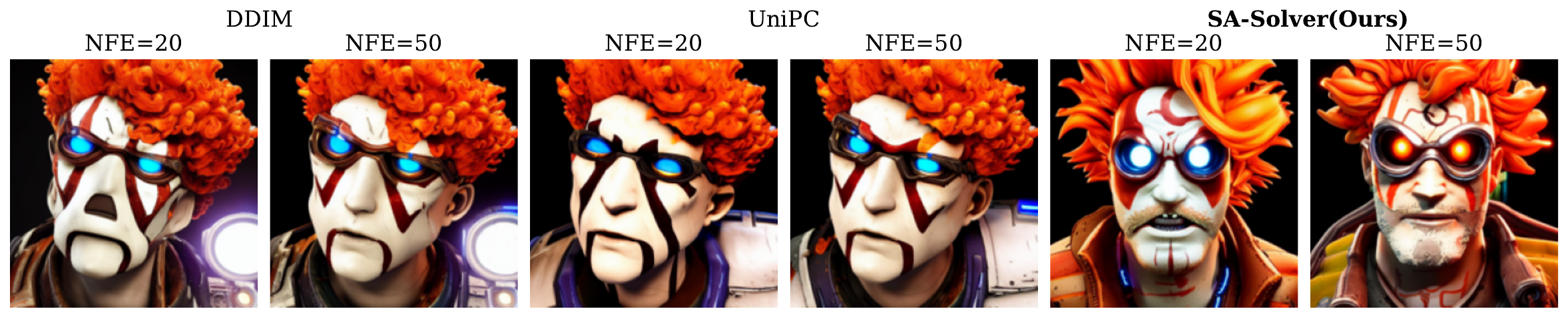}
\caption{Qualitative comparisons between our \textit{SA-Solver} and previous state-of-the-art methods. All images are generated by Stable Diffusion v1.5 with the same random seed. The main part of the prompt is ``portrait of curly orange haired mad scientist man''. We set the guidance scale as 7.5. The proposed \textit{SA-Solver} is able to generate images with more details.}
\label{fig:t2i}
\end{figure}

\begin{figure}[h!]
\centering
\includegraphics[width=0.8\textwidth]{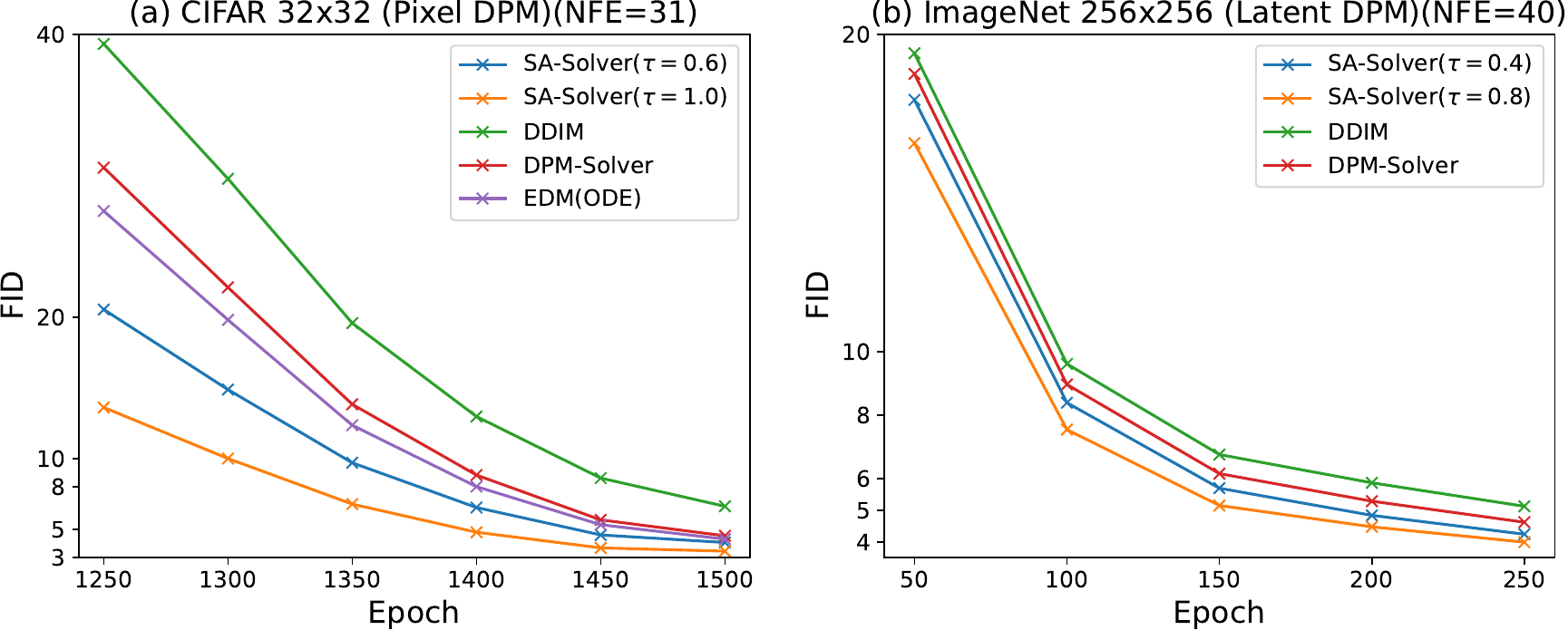}
\caption{Sampling quality measured by FID $\downarrow$ of different sampling methods of DPMs under different training epochs.}
\label{fig:ins}
\end{figure}

\subsection{Comparison with State-of-the-Art}
\label{sec: sota result}

We compare \textit{SA-Solver} with existing state-of-the-art sampling methods, including DDIM~\citep{song2020denoising}, DPM-Solver~\citep{lu2022dpmsa}, UniPC~\citep{zhao2023unipc}, Heun sampler and stochastic sampler in EDM~\citep{Karras2022edm}. Unless otherwise specified, the methods are tested using the default hyper-parameters in the original papers or code.

\textbf{Results on CIFAR10 32x32 and ImageNet 64x64.}
We use the EDM~\citep{Karras2022edm} baseline-VE model for the CIFAR10 32x32 experiments and the ADM~\citep{dhariwal2021diffusion} model for the ImageNet 64x64 experiments. We use EDM's timesteps selection for all samplers for fair comparisons. EDM introduces a certain type of SDE and a corresponding stochastic sampler, which is used for comparison. The experimental results are shown in Fig. \ref{fig:sotas}(a-b). It can be seen that the proposed \textit{SA-Solver} consistently outperforms other samplers and achieves state-of-the-art FID results. It should be noticed for EDM samplers, we report its optimal result which is searched over four hyper-parameters. In fact, at 95 NFEs, \textit{SA-Solver} can achieve the best FID value of 2.63 in CIFAR10 and 1.81 in ImageNet 64x64 which outperforms all other samplers.

\textbf{Results on ImageNet 256x256 and 512x512.}
We evaluate with two classifier-free guidance models, one is the UNet-based latent diffusion model~\citep{Rombach_2022_CVPR} in which the VQ-4 encoder-decoder model is adopted, and the other is the DiT~\citep{peebles2022scalable} model using Vision Transformer based model with KL-8 encoder-decoder. The corresponding classifier-free guidance scale, namely $s=1.5$, is adopted for evaluation. For ImageNet 256x256 dataset with UNet based latent diffusion model, the results of different samplers are shown in Fig. \ref{fig:sotas}(c). Under a considerable number of steps, \textit{SA-Solver} achieves the best sample quality, in which the FID value is 3.87 with only 20 NFEs and below 3.5 with 40 NFEs or more. While for ODE solvers, the FID values cannot reach below 4, which shows the superiority of the proposed SDE solver.

Table \ref{tab:dit} consists results of current SOTA models in ImageNet 256x256 and 512x512. Note that the (Min-SNR) DiT-XL/2-G models are adopted~\citep{peebles2022scalable,hang2023efficient}. It can be seen clearly that better FID results are achieved compared with baseline solvers used by corresponding methods. We achieve 1.93 FID value in Min-SNR DiT model at ImageNet 256x256, and 2.80 in DiT model at ImageNet 512x512, both of which are state-of-the-art results under existing DPMs.

\textbf{Results of text-to-image generation} Fig. \ref{fig:t2i} shows the qualitative results on text-to-image generation. It can be seen that both UniPC and \textit{SA-Solver} can generate images with more details. Our \textit{SA-Solver} is able to generate more reasonable images with better details.

\subsection{Effect of Stochasticity for Inaccurate Score Estimation}
\label{sec: inaccurate score}

When the training data is not enough or the computational budget is limited, the estimated score is inaccurate. We empirically observed that the stochasticity significantly improve the sample quality under the circumstance. To further investigate this effect, we reproduce the early training stage of EDM~\citep{Karras2022edm} baseline-VE model for the CIFAR10 32x32 dataset and DiT-XL/2~\citep{peebles2022scalable} model for the ImageNet 256x256 dataset. We compare \textit{SA-Solver} with different stochastic level $\tau$ and existing state-of-the-art deterministic sampling methods. We use the same hyper-parameters as the corresponding experiment in section~\ref{sec: sota result}.

Figure~\ref{fig:ins} shows that \textit{SA-Solver} outperforms deterministic sampling methods, especially in the early stage of the training process. Moreover, larger $\tau$ value results in better performance. We also conduct a theoretical analysis that stochasticity can mitigate the error of estimation (see Appendix~\ref{appendix: selection of stochasticity}).
\section{Conclusions}
In this paper, we propose an efficient solver named \textit{SA-Solver} for solving Diffusion SDEs, achieving high sampling performance in both minimal steps and a suitable number of steps. To better control the scale of injected noise, we propose Variance Controlled Diffusion SDEs based on noise scale function $\tau(t)$ and propose the analytic form of the SDEs. Based on Variance Controlled Diffusion SDE, we propose \textit{SA-Solver}, which is derived from the stochastic Adams method and uses exponentially weighted integral and analytical variance to achieve efficient SDE sampling. Meanwhile, \textit{SA-Solver} has the optimal theoretical convergence bound. Experiments show that \textit{SA-Solver} achieves state-of-the-art sampling performance in various pre-trained DPMs models. Moreover, \textit{SA-Solver} achieves superior performance when the score estimation is inaccurate.

Although \textit{SA-Solver} achieves optimal sampling performance, the noise scale $\tau(t)$ selection under different NFEs needs further research. The paper proposes empirical criteria for selecting $\tau(t)$, more in-depth theoretical analysis is still needed.





\bibliography{ref}
\bibliographystyle{IEEEtranN}


\newpage
\appendix

\section{Derivations of Variance Controlled Diffusion SDEs}
\subsection{Proof of Proposition~\ref{prop: controllable variance reverse SDEs}}
\label{appendix: proof of prop4.1}
\paragraph{Proposition 4.1}
For any bounded measurable function $\tau(t):[0,T] \rightarrow \Real$, the following Reverse SDEs 
\begin{equation}
\label{eq: appendix variance controlled sde}
\diff \xx_t = \left[f(t) \xx_t - \left(\frac{1 + \tau^2(t)}{2}\right) g^2(t)\nablaxx\log p_t(\xx_t)\right] \diff t + \tau(t)g(t)  \diff \rbm,\hspace{4mm}\xx_T \sim p_T(\xx_T)
\end{equation}
has the same marginal probability distributions with (\ref{eq: reverse ODE}) and (\ref{eq: reverse SDE}) .
\begin{proof}

Denote $p(\xx, t): \Real^d \times [0,T] \rightarrow \Real^{+}$ as the probability density function of $\xx_t$ at time $t$, thus $p(\xx, T) = p_T(\xx)$. Fokker-Planck equation~\citep{oksendal2013stochastic} determines a Partial Differential Equation (PDE) that $p_t(\xx)$ satisfies:
\begin{equation}
\label{eq: fp equation1}
\begin{aligned}
-\frac{\partial p_t(\xx)}{\partial t} &= - \sum_{i=1}^d \frac{\partial}{\partial x_i} \left[-\left[ f(t)p_t(\xx) x_i-  \left(\frac{1 + \tau^2(t)}{2}\right) g^2(t) p_t(\xx)\frac{\partial \log p_t(\xx)}{\partial x_i}\right]\right]\\
 & \qquad + \frac{1}{2} \sum_{i=1}^d \sum_{j=1}^d \frac{\partial^2}{\partial x_i \partial x_j} \left[ \tau^2(t) g^2(t) \delta_{ij} p_t(\xx) \right]\\
 &= \sum_{i=1}^d \frac{\partial}{\partial x_i} \left[ 
 f(t)p_t(\xx) x_i-  \left(\frac{1 + \tau^2(t)}{2}\right) g^2(t) p_t(\xx)\frac{\partial \log p_t(\xx)}{\partial x_i}\right] \\
 & \qquad + \frac{1}{2} \sum_{i=1}^d  \frac{\partial}{\partial x_i} \left[ \frac{\partial}{\partial x_i} \left[ \tau^2(t) g^2(t) p_t(\xx) \right]\right].
\end{aligned}
\end{equation}

Eq.~\eqref{eq: appendix variance controlled sde} is a reverse-time SDE running from $T$ to $0$, thus there are two additional minus signs in Eq.~\eqref{eq: fp equation1} before term $\frac{\partial p_t(\xx)}{\partial t}$ and term $\left[ 
 f(t)p_t(\xx) x_i-  \left(\frac{1 + \tau^2(t)}{2}\right) g^2(t) p_t(\xx)\frac{\partial \log p_t(\xx)}{\partial x_i}\right]$ compared with vanilla Fokker-Planck equation in general cases. Here $\delta_{ij}$ is the Dirac symbol satisfies $\delta_{ij} = 1$ when $i=j$, otherwise, $\delta_{ij} = 0$. Notice that
\begin{equation}
\label{eq: fp equation2}
 \frac{\partial}{\partial x_i} \left[ \tau^2(t) g^2(t) p_t(\xx) \right] = \tau^2(t) g^2(t)   \frac{\partial}{\partial x_i} p_t(\xx)= \tau^2(t) g^2(t) p_t(\xx)  \frac{\partial \log p_t(\xx) }{\partial x_i} .
\end{equation}
Substituting Eq.~\eqref{eq: fp equation2} into Eq.~\eqref{eq: fp equation1}, we obtain that
\begin{equation}
\label{eq: fp equation3}
\begin{aligned}
-\frac{\partial p_t(\xx)}{\partial t} &=  \sum_{i=1}^d \frac{\partial}{\partial x_i} \left[ 
 f(t)p_t(\xx) x_i-  \left(\frac{1 + \tau^2(t)}{2}\right) g^2(t) p_t(\xx)\frac{\partial \log p_t(\xx)}{\partial x_i}\right] \\
 & \qquad + \frac{1}{2} \sum_{i=1}^d  \frac{\partial}{\partial x_i} \left[\tau^2(t) g^2(t) p_t(\xx)  \frac{\partial \log p_t(\xx) }{\partial x_i}  \right] \\
 &= \sum_{i=1}^d \frac{\partial}{\partial x_i} \left[  f(t)p_t(\xx) x_i-  \frac{1}{2} g^2(t) p_t(\xx)\frac{\partial \log p_t(\xx)}{\partial x_i} \right],
\end{aligned}
\end{equation}
which is independent of $\tau(t)$. With the same initial condition $p(\xx, T) = p_T(\xx)$, the family of Reverse SDEs in Eq.~\eqref{eq: appendix variance controlled sde} have exactly the same evolutions of probability density function because they share the same Fokker-Planck equation. Especially, when $\tau(t) = 0$, Eq.~\eqref{eq: appendix variance controlled sde} degenerates to diffusion ODEs and when $\tau(t) = 1$, Eq.~\eqref{eq: appendix variance controlled sde} degenerates to diffusion SDEs.
\end{proof}

\subsection{Two Reparameterizations and Exact Solution under Exponential Integrator}

In this subsection, we will show the exact solution of SDE in both \textit{data prediction} reparameterization and \textit{noise prediction} reparameterization. The noise term in \textit{data prediction} has smaller variance than \textit{noise prediction} ones, implying the necessity of adopting \textit{data prediction} reparameterization for the SDE sampler. 

\subsubsection{Data Prediction Reparameterization}
After approximating $\nablaxx\log p_t(\xx_t)$ with $\boldsymbol{s}_{\btheta}(\xx_t, t)$ and reparameterizing $\boldsymbol{s}_{\btheta}(\xx_t, t)$ with $-(\xx_{t} - \alpha_{t}\bx_{\btheta}(\bx_{t}, t)) / \sigma_{t}^{2}$, Eq.~\eqref{eq: appendix variance controlled sde} becomes
\begin{equation}
\label{eq: appendix variance controlled sde data para}
\diff \xx_t = \left[f(t) \xx_t + \left(\frac{1 + \tau^2(t)}{2\sigma_{t}}\right) g^2(t)\left(\frac{\xx_{t} - \alpha_{t}\bx_{\btheta}(\bx_{t}, t)}{\sigma_{t}}  \right)\right] \diff t + \tau(t)g(t)  \diff \rbm.
\end{equation}
Applying change-of-variable with log-SNR $\lambda_t = \log(\alpha_t/\sigma_t)$ and substituting the following relationship
\begin{equation}
\label{eq: appendix f g lambda}
f(t) = \frac{\diff \log \alpha_t}{\diff t}, \hspace{4mm} g^2(t) = \frac{\diff \sigma^2_t}{\diff t} -  2 \frac{\diff \log \alpha_t}{\diff t} \sigma^2_t = -2 \sigma^2_t \frac{\diff \lambda_t}{\diff t},
\end{equation}
Eq.~\eqref{eq: appendix variance controlled sde data para} becomes
\begin{equation}
\label{eq: appendix variance controlled sde data para2}
\begin{aligned}
\diff \xx_t &= \left[\frac{\diff \log \alpha_t}{\diff t} \xx_t - \left(1 + \tau^2(t)\right) \left(\xx_{t} - \alpha_{t}\bx_{\btheta}(\bx_{t}, t)  \right) \frac{\diff \lambda_t}{\diff t} \right] \diff t + \tau(t)\sigma_t \sqrt{-2 \frac{\diff \lambda_t}{\diff t}}  \diff \rbm \\
&= \left[\left(\frac{\diff \log \alpha_t}{\diff t} - \left(1 + \tau^2(t)\right)  \frac{\diff \lambda_t}{\diff t}  \right) \xx_t + \left(1 + \tau^2(t)\right)  \alpha_{t}\bx_{\btheta}(\bx_{t}, t)  \frac{\diff \lambda_t}{\diff t} \right] \diff t \\
& \qquad + \tau(t)\sigma_t \sqrt{-2 \frac{\diff \lambda_t}{\diff t}}  \diff \rbm.
\end{aligned}
\end{equation}

\subsubsection{Proof of Proposition~\ref{prop: exact solution data prediction model}}
\label{appendix: proof of prop 4.2}

\paragraph{Proposition 4.2}
Given $\xx_s$ for any time $s > 0$, the solution $\xx_t$ at time $t \in [0,s]$ of Eq. \eqref{eq: reparadata2 controllable variance reverse SDEs} is
\begin{equation}
\label{eq: appendix exact solution data prediction model}
\begin{aligned}
&\xx_t = \frac{\sigma_t}{\sigma_s} e^{-\int_{\lambda_s}^{\lambda_t} \tau^2(\Tilde{\lambda}) \diff \Tilde{\lambda}} \xx_s +  \sigma_t \boldsymbol{F}_{\btheta}(s,t) + \sigma_t \boldsymbol{G}(s,t), \\
&\boldsymbol{F}_{\btheta}(s,t) = \int_{\lambda_s}^{\lambda_t} e^{-\int_{\lambda}^{\lambda_t} \tau^2(\Tilde{\lambda}) \diff \Tilde{\lambda} }\left(1+\tau^2\left(\lambda\right)\right) e^{\lambda} \xx_{\btheta}\left(\xx_{\lambda}, \lambda\right) \diff \lambda \\
&\boldsymbol{G}(s,t) = \int_s^t e^{-\int_{\lambda_u}^{\lambda_t} \tau^2(\Tilde{\lambda}) \diff \Tilde{\lambda} } \tau(u) \sqrt{-2\frac{\diff \lambda_u}{\diff u}} \diff \bar{\boldsymbol{w}}_u,
\end{aligned}
\end{equation}
where $\boldsymbol{G}(s,t)$ is an \textit{Itô} integral \citep{oksendal2013stochastic} with the special property 
\begin{equation}
 \label{eq: appendix Analytical variance of reverse SDEs data}
\sigma_t  \boldsymbol{G}(s,t) \sim \Normal\Bigl(\boldzero, \sigma_t^2 \bigl( 1 - e^{-2\int_{\lambda_{s}}^{\lambda_t} \tau^2(\Tilde{\lambda}) \diff \Tilde{\lambda}} \bigl) \Bigl).
\end{equation}
\begin{proof}
Define $\boldsymbol{y}_t = e^{- \int_{t_0}^t \left(\frac{\diff \log \alpha_v}{\diff v} - \left(1 + \tau^2(v)\right)  \frac{\diff \lambda_v}{\diff v}  \right) \diff v}  \xx_t $, where $t_0 \in [0,T]$ is a constant. Differentiate $\boldsymbol{y}_t$ with respect to $t$, we get
\begin{equation}
\label{eq: appendix diff y_t}
\begin{aligned}
\diff \boldsymbol{y}_t & = -\left(\frac{\diff \log \alpha_t}{\diff t} - \left(1 + \tau^2(t)\right)  \frac{\diff \lambda_t}{\diff t}  \right)   e^{- \int_{t_0}^t \left(\frac{\diff \log \alpha_v}{\diff v} - \left(1 + \tau^2(v)\right)  \frac{\diff \lambda_v}{\diff v}  \right) \diff v}  \xx_t \diff t\\
& \qquad +  e^{- \int_{t_0}^t \left(\frac{\diff \log \alpha_v}{\diff v} - \left(1 + \tau^2(v)\right)  \frac{\diff \lambda_v}{\diff v}  \right) \diff v} \diff \xx_t \\
& = -\left(\frac{\diff \log \alpha_t}{\diff t} - \left(1 + \tau^2(t)\right)  \frac{\diff \lambda_t}{\diff t}  \right)   e^{- \int_{t_0}^t \left(\frac{\diff \log \alpha_v}{\diff v} - \left(1 + \tau^2(v)\right)  \frac{\diff \lambda_v}{\diff v}  \right) \diff v}  \xx_t \diff t\\
& \qquad +   e^{- \int_{t_0}^t \left(\frac{\diff \log \alpha_v}{\diff v} - \left(1 + \tau^2(v)\right)  \frac{\diff \lambda_v}{\diff v}  \right) \diff v} \left[\left(\frac{\diff \log \alpha_t}{\diff t} - \left(1 + \tau^2(t)\right)  \frac{\diff \lambda_t}{\diff t}  \right) \xx_t\right] \diff t \\
& \qquad + e^{- \int_{t_0}^t \left(\frac{\diff \log \alpha_v}{\diff v} - \left(1 + \tau^2(v)\right)  \frac{\diff \lambda_v}{\diff v}  \right) \diff v} \left[\left(1 + \tau^2(t)\right)  \alpha_{t}\bx_{\btheta}(\bx_{t}, t)  \frac{\diff \lambda_t}{\diff t} \right] \diff t \\
& \qquad +  e^{- \int_{t_0}^t \left(\frac{\diff \log \alpha_v}{\diff v} - \left(1 + \tau^2(v)\right)  \frac{\diff \lambda_v}{\diff v}  \right) \diff v} \tau(t)\sigma_t \sqrt{-2 \frac{\diff \lambda_t}{\diff t}}  \diff \rbm\\
&=  e^{- \int_{t_0}^t \left(\frac{\diff \log \alpha_v}{\diff v} - \left(1 + \tau^2(v)\right)  \frac{\diff \lambda_v}{\diff v}  \right) \diff v} \left[\left(1 + \tau^2(t)\right)  \alpha_{t}\bx_{\btheta}(\bx_{t}, t)  \frac{\diff \lambda_t}{\diff t} \right] \diff t \\
& \qquad +  e^{- \int_{t_0}^t \left(\frac{\diff \log \alpha_v}{\diff v} - \left(1 + \tau^2(v)\right)  \frac{\diff \lambda_v}{\diff v}  \right) \diff v} \tau(t)\sigma_t \sqrt{-2 \frac{\diff \lambda_t}{\diff t}}  \diff \rbm.
\end{aligned}
\end{equation}
Integrating both sides from $s$ to $t$
\begin{equation}
\label{eq: appendix integrate y_t}
\begin{aligned}
\boldsymbol{y}_t = \boldsymbol{y}_s & + \int_s^t e^{- \int_{t_0}^u \left(\frac{\diff \log \alpha_v}{\diff v} - \left(1 + \tau^2(v)\right)  \frac{\diff \lambda_v}{\diff v}  \right) \diff v} \left[\left(1 + \tau^2(u)\right)  \alpha_{u}\bx_{\btheta}(\bx_{u}, u)  \frac{\diff \lambda_u}{\diff u} \right] \diff u \\
&+ \int_s^t e^{- \int_{t_0}^u \left(\frac{\diff \log \alpha_v}{\diff v} - \left(1 + \tau^2(v)\right)  \frac{\diff \lambda_v}{\diff v}  \right) \diff v} \tau(u)\sigma_u \sqrt{-2 \frac{\diff \lambda_u}{\diff u}}  \diff \bar{\boldsymbol{w}}_u.
\end{aligned}
\end{equation}
Substituting the definition of $\boldsymbol{y}_t$, $\boldsymbol{y}_s$ into Eq.~\eqref{eq: appendix integrate y_t}, we obtain Eq.~\eqref{eq: appendix exact solution data prediction model}
\begin{equation}
\label{eq: appendix integrate x_t}
\begin{aligned}
\boldsymbol{x}_t &= e^{ \int_{s}^t \left(\frac{\diff \log \alpha_u}{\diff u} - \left(1 + \tau^2(u)\right)  \frac{\diff \lambda_u}{\diff u}  \right) \diff u} \boldsymbol{x}_s\\
& \qquad + \int_s^t e^{- \int_{t}^u \left(\frac{\diff \log \alpha_v}{\diff v} - \left(1 + \tau^2(v)\right)  \frac{\diff \lambda_v}{\diff v}  \right) \diff v} \left[\left(1 + \tau^2(u)\right)  \alpha_{u}\bx_{\btheta}(\bx_{u}, u)  \frac{\diff \lambda_u}{\diff u} \right] \diff u \\
& \qquad + \int_s^t e^{- \int_{t}^u \left(\frac{\diff \log \alpha_v}{\diff v} - \left(1 + \tau^2(v)\right)  \frac{\diff \lambda_v}{\diff v}  \right) \diff v} \tau(u)\sigma_u \sqrt{-2 \frac{\diff \lambda_u}{\diff u}}  \diff \bar{\boldsymbol{w}}_u. \\
&=  e^{ \int_{s}^t \left(\frac{\diff \log \sigma_u}{\diff u} - \tau^2(u)  \frac{\diff \lambda_u}{\diff u}  \right) \diff u} \boldsymbol{x}_s\\
& \qquad + \int_s^t e^{- \int_{t}^u \left(\frac{\diff \log \sigma_v}{\diff v} - \tau^2(v)  \frac{\diff \lambda_v}{\diff v}  \right) \diff v} \left[\left(1 + \tau^2(u)\right)  \alpha_{u}\bx_{\btheta}(\bx_{u}, u)  \frac{\diff \lambda_u}{\diff u} \right] \diff u \\
& \qquad + \int_s^t e^{- \int_{t}^u \left(\frac{\diff \log \sigma_v}{\diff v} - \tau^2(v)  \frac{\diff \lambda_v}{\diff v}  \right) \diff v} \tau(u)\sigma_u \sqrt{-2 \frac{\diff \lambda_u}{\diff u}}  \diff \bar{\boldsymbol{w}}_u \\
&= \frac{\sigma_t}{\sigma_s} e^{ -\int_{s}^t   \tau^2(u)  \frac{\diff \lambda_u}{\diff u} \diff u} \boldsymbol{x}_s\\
& \qquad + \int_s^t  \frac{\sigma_t}{\sigma_u} e^{- \int_{u}^t  \tau^2(v)  \frac{\diff \lambda_v}{\diff v}  \diff v} \left[\left(1 + \tau^2(u)\right)  \alpha_{u}\bx_{\btheta}(\bx_{u}, u)  \frac{\diff \lambda_u}{\diff u} \right] \diff u \\
& \qquad + \int_s^t \frac{\sigma_t}{\sigma_u} e^{- \int_{u}^t  \tau^2(v)  \frac{\diff \lambda_v}{\diff v}  \diff v} \tau(u)\sigma_u \sqrt{-2 \frac{\diff \lambda_u}{\diff u}}  \diff \bar{\boldsymbol{w}}_u. \\
&= \frac{\sigma_t}{\sigma_s} e^{ -\int_{\lambda_s}^{\lambda_t}   \tau^2(\Tilde{\lambda})   \diff \Tilde{\lambda}} \boldsymbol{x}_s\\
& \qquad + \sigma_t \int_{\lambda_s}^{\lambda_t}  e^{- \int_{\lambda}^{\lambda_t}  \tau^2(\Tilde{\lambda})  \diff \Tilde{\lambda}}  \left(1 + \tau^2(\lambda)\right) e^{\lambda} \bx_{\btheta}(\bx_{\lambda}, \lambda)   \diff \lambda \\
& \qquad + \sigma_t \int_s^t  e^{- \int_{u}^t  \tau^2(v)  \frac{\diff \lambda_v}{\diff v}  \diff v} \tau(u) \sqrt{-2 \frac{\diff \lambda_u}{\diff u}}  \diff \bar{\boldsymbol{w}}_u. \\
\end{aligned}
\end{equation}
The last term $\sigma_t \int_s^t  e^{- \int_{u}^t  \tau^2(v)  \frac{\diff \lambda_v}{\diff v}  \diff v} \tau(u) \sqrt{-2 \frac{\diff \lambda_u}{\diff u}}  \diff \bar{\boldsymbol{w}}_u$ of Eq.~\eqref{eq: appendix integrate x_t} is the \textit{Itô} integral term. It follows a Gaussian distribution, which can be directly derived from two basic facts~\citep{oksendal2013stochastic}: first, the definition of \textit{Itô} integral is the limitation in $L^2$ space; second, the limit of Gaussian Process in $L^2$ space is still Gaussian. Then we can compute the mean as:
\begin{equation}
\label{eq: appendix calculate mean}
\Expectation \left[\sigma_t \int_s^t  e^{- \int_{u}^t  \tau^2(v)  \frac{\diff \lambda_v}{\diff v}  \diff v} \tau(u) \sqrt{-2 \frac{\diff \lambda_u}{\diff u}}  \diff \bar{\boldsymbol{w}}_u\right] = 0
\end{equation}
and the variance is
\begin{equation}
\label{eq: appendix calculate variance}
\begin{aligned}
&\mathrm{Var}  \left[\sigma_t \int_s^t  e^{- \int_{u}^t  \tau^2(v)  \frac{\diff \lambda_v}{\diff v}  \diff v} \tau(u) \sqrt{-2 \frac{\diff \lambda_u}{\diff u}}  \diff \bar{\boldsymbol{w}}_u\right] \\
=& \Expectation \left[\left(\sigma_t \int_s^t  e^{- \int_{u}^t  \tau^2(v)  \frac{\diff \lambda_v}{\diff v}  \diff v} \tau(u) \sqrt{-2 \frac{\diff \lambda_u}{\diff u}}  \diff \bar{\boldsymbol{w}}_u \right)^2\right] - \left( \Expectation \left[\sigma_t \int_s^t  e^{- \int_{u}^t  \tau^2(v)  \frac{\diff \lambda_v}{\diff v}  \diff v} \tau(u) \sqrt{-2 \frac{\diff \lambda_u}{\diff u}}  \diff \bar{\boldsymbol{w}}_u \right] \right)^2 \\
=& \Expectation \left[\left(\sigma_t \int_t^s  e^{- \int_{u}^t  \tau^2(v)  \frac{\diff \lambda_v}{\diff v}  \diff v} \tau(u) \sqrt{-2 \frac{\diff \lambda_u}{\diff u}}  \diff \boldsymbol{w}_u \right)^2\right] - 0 \\
=& \sigma_t^2 \Expectation \left[  \int_t^s \left( e^{- \int_{u}^t  \tau^2(v)  \frac{\diff \lambda_v}{\diff v}  \diff v} \tau(u) \sqrt{-2 \frac{\diff \lambda_u}{\diff u}} \right)^2 \diff u \right] \\
=& \sigma_t^2  \int_t^s  e^{- \int_{u}^t  2 \tau^2(v)  \frac{\diff \lambda_v}{\diff v}  \diff v} \tau^2(u) \left(-2 \frac{\diff \lambda_u}{\diff u} \right) \diff u  \\
=& \sigma_t^2  \int_{\lambda_s}^{\lambda_t}  2e^{- \int_{\lambda}^{\lambda_t}  2 \tau^2(\Tilde{\lambda})  \diff \Tilde{\lambda} } \tau^2(\lambda) \diff \lambda\\
\end{aligned}
\end{equation}
The expectation equals zero because the Itô integral is a martingale~\citep{oksendal2013stochastic}. The computation of variance uses the \textit{Itô Isometry}, which is a crucial fact of Itô integral. We can further simplify the result by using the change of variable $P(\lambda) = e^{\int_{\lambda_t}^{\lambda} 2\tau^2(\Tilde{\lambda})\diff \Tilde{\lambda}}$.
\begin{equation}
\label{eq: appendix calculate mean variance}
\begin{aligned}
&\mathrm{Var}   \left[\sigma_t \int_s^t  e^{- \int_{u}^t  \tau^2(v)  \frac{\diff \lambda_v}{\diff v}  \diff v} \tau(u) \sqrt{-2 \frac{\diff \lambda_u}{\diff u}}  \diff \bar{\boldsymbol{w}}_u\right] \\
= & \sigma_t^2  \int_{\lambda_s}^{\lambda_t}  2e^{- \int_{\lambda}^{\lambda_t}  2 \tau^2(\Tilde{\lambda})  \diff \Tilde{\lambda} } \tau^2(\lambda) \diff \lambda\\
= & \sigma_t^2   \int_{P(\lambda_s)}^{P(\lambda_t)} P(\lambda) \frac{\diff P(\lambda)}{P(\lambda)}  \\
= & \sigma_t^2 \left(P(\lambda_t) - P(\lambda_s)\right) \\
= & \sigma_t^2 (1 - e^{-2 \int_{\lambda_s}^{\lambda_t} \tau^2(\lambda) \diff \lambda})
\end{aligned}
\end{equation}
\end{proof}

\subsubsection{Noise Prediction Reparameterization}

After approximating $\nablaxx\log p_t(\xx_t)$ with $\boldsymbol{s}_{\btheta}(\xx_t, t)$ and reparameterizing $\boldsymbol{s}_{\btheta}(\xx_t, t)$ with $- \boldsymbol{\epsilon}_{\btheta}(\bx_{t}, t)) / \sigma_{t}$, Eq.~\eqref{eq: appendix variance controlled sde} becomes
\begin{equation}
\label{eq: appendix variance controlled sde noise para}
\diff \xx_t = \left[f(t) \xx_t + \left(\frac{1 + \tau^2(t)}{2\sigma_{t}}\right) g^2(t)\boldsymbol{\epsilon}_{\btheta}(\bx_{t}, t)\right] \diff t + \tau(t)g(t)  \diff \rbm.
\end{equation}
Applying change-of-variable with log-SNR $\lambda_t = \log(\alpha_t/\sigma_t)$ and substituting the following relationship
\begin{equation}
\label{eq: appendix noise f g lambda}
f(t) = \frac{\diff \log \alpha_t}{\diff t}, \hspace{4mm} g^2(t) = \frac{\diff \sigma^2_t}{\diff t} -  2 \frac{\diff \log \alpha_t}{\diff t} \sigma^2_t = -2 \sigma^2_t \frac{\diff \lambda_t}{\diff t},
\end{equation}
Eq.~\eqref{eq: appendix variance controlled sde noise para} becomes
\begin{equation}
\label{eq: appendix variance controlled sde noise para2}
\begin{aligned}
\diff \xx_t & = \left[\frac{\diff \log \alpha_t}{\diff t} \xx_t - \left(1 + \tau^2(t)\right) \sigma_t \boldsymbol{\epsilon}_{\btheta}(\bx_{t}, t) \frac{\diff \lambda_t}{\diff t} \right] \diff t + \tau(t)\sigma_t \sqrt{-2 \frac{\diff \lambda_t}{\diff t}}  \diff \rbm. \\
\end{aligned}
\end{equation}

Eq \eqref{eq: appendix variance controlled sde noise para} is the formulation of \textit{noist prediction model}. Similar with Proposition \ref{prop: exact solution data prediction model}, Eq. \eqref{eq: appendix variance controlled sde noise para2} can be solved analytically, which is shown in the following propositon:
\begin{proposition}
Given $\xx_s$ for any time $s > 0$, the solution $\xx_t$ at time $t \in [0,s]$ of \eqref{eq: appendix variance controlled sde noise para2} is
\begin{equation}
\label{eq: appendix exact solution noise prediction model}
\begin{aligned}
&\xx_t = \frac{\alpha_t}{\alpha_s}  \boldsymbol{x}_s +  \alpha_t \boldsymbol{F}_{\btheta}(s,t) + \alpha_t \boldsymbol{G}(s,t), \\
&\boldsymbol{F}_{\btheta}(s,t) = \int_{\lambda_s}^{\lambda_t} e^{-\lambda} \left(1 + \tau^2(\lambda)\right)  \boldsymbol{\epsilon}_{\btheta}(\bx_{\lambda}, \lambda) \diff \lambda \\
&\boldsymbol{G}(s,t) = \int_s^t e^{-\lambda_u} \tau(u) \sqrt{-2 \frac{\diff \lambda_u}{\diff u}}  \diff \bar{\boldsymbol{w}}_u,
\end{aligned}
\end{equation}
where $\boldsymbol{G}(s,t)$ is an Itô integral \citep{oksendal2013stochastic} with the special property 
\begin{equation}
 \label{eq: appendix Analytical variance of reverse SDEs noise}
\alpha_t  \boldsymbol{G}(s,t) \sim \Normal\Bigl(\boldzero, \alpha_t^2  \int_{\lambda_s}^{\lambda_t}  2e^{-2\lambda} \tau^2(\lambda) \diff \lambda \Bigl).
\end{equation}
\end{proposition}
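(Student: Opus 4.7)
The proposition has the same semi-linear structure as Proposition~\ref{prop: exact solution data prediction model}, so the plan is to mirror the integrating-factor derivation in Appendix~\ref{appendix: proof of prop 4.2}, with the simpler linear coefficient $\frac{\diff \log \alpha_t}{\diff t}$ (instead of $\frac{\diff \log \alpha_t}{\diff t} - (1+\tau^2(t))\frac{\diff \lambda_t}{\diff t}$) replacing the one used there. Concretely, I would fix any $t_0 \in [0,T]$ and set $\boldsymbol{y}_t = e^{-\int_{t_0}^{t} \frac{\diff \log \alpha_v}{\diff v} \diff v}\,\xx_t = \frac{\alpha_{t_0}}{\alpha_t}\xx_t$. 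Differentiating and plugging in Eq.~\eqref{eq: appendix variance controlled sde noise para2}, the linear $\xx_t \diff t$ terms cancel exactly, leaving
\begin{equation*}
\diff \boldsymbol{y}_t = \frac{\alpha_{t_0}}{\alpha_t}\Bigl[-\bigl(1+\tau^2(t)\bigr)\sigma_t\,\boldsymbol{\epsilon}_{\btheta}(\xx_t,t)\,\frac{\diff \lambda_t}{\diff t}\Bigr]\diff t + \frac{\alpha_{t_0}}{\alpha_t}\tau(t)\sigma_t\sqrt{-2\tfrac{\diff \lambda_t}{\diff t}}\,\diff \rbm.
\end{equation*}

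Next I would integrate from $s$ to $t$, then multiply both sides by $\alpha_t/\alpha_{t_0}$ to recover $\xx_t$. The constant $\alpha_{t_0}$ drops out, yielding $\xx_t = \tfrac{\alpha_t}{\alpha_s}\xx_s + (\text{drift integral}) + (\text{Itô integral})$. In the drift integral, the kernel $\tfrac{\alpha_t}{\alpha_u}\sigma_u = \alpha_t\,e^{-\lambda_u}$, and the change of variable $\lambda = \lambda_u$ converts $\tfrac{\diff \lambda_u}{\diff u}\diff u$ into $\diff \lambda$, producing $\alpha_t\boldsymbol{F}_{\btheta}(s,t)$ up to sign conventions consistent with the data-prediction case. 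The same identity $\tfrac{\alpha_t}{\alpha_u}\sigma_u = \alpha_t e^{-\lambda_u}$ rewrites the Wiener term as $\alpha_t\boldsymbol{G}(s,t)$ with the exact integrand claimed in the proposition.

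For the Gaussian property, I would invoke the two standard facts already used in Proposition~\ref{prop: exact solution data prediction model}: an Itô integral with deterministic integrand is an $L^2$-limit of Gaussian sums, hence Gaussian; and it is a martingale starting at $0$, hence has mean zero. To compute the variance I would apply the Itô isometry
\begin{equation*}
\mathrm{Var}\bigl[\alpha_t\boldsymbol{G}(s,t)\bigr] = \alpha_t^2\int_s^t e^{-2\lambda_u}\,\tau^2(u)\Bigl(-2\tfrac{\diff \lambda_u}{\diff u}\Bigr)\diff u,
\end{equation*}
and change variables $\lambda = \lambda_u$ to obtain $\alpha_t^2\int_{\lambda_s}^{\lambda_t} 2\,e^{-2\lambda}\tau^2(\lambda)\,\diff \lambda$, matching Eq.~\eqref{eq: appendix Analytical variance of reverse SDEs noise}.

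The only real subtlety I anticipate is bookkeeping: keeping the signs and orientation of the integration bounds consistent, since this is a reverse-time SDE with $\lambda_t$ decreasing in $t$ but $\tau^2\ge 0$ and the stochastic integrand deterministic. Everything else is a mechanical transcription of the data-prediction argument, with the integrating factor simplified because only the linear-in-$\xx_t$ coefficient $f(t) = \tfrac{\diff\log\alpha_t}{\diff t}$ survives in the noise-parameterized drift. No new analytic tools beyond those already used in Appendix~\ref{appendix: proof of prop 4.2} are required.
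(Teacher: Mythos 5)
Your proposal follows essentially the same route as the paper's proof: the integrating factor $e^{-\int_{t_0}^t \frac{\diff \log \alpha_v}{\diff v}\diff v}=\alpha_{t_0}/\alpha_t$, cancellation of the linear drift, integration from $s$ to $t$ with the identity $\tfrac{\alpha_t}{\alpha_u}\sigma_u=\alpha_t e^{-\lambda_u}$ and the change of variable $\lambda=\lambda_u$, and the martingale property plus It\^o isometry for the Gaussian law and variance of the stochastic term. The one detail to watch is the sign of the drift integral (the direct computation yields $-\alpha_t\int_{\lambda_s}^{\lambda_t}e^{-\lambda}\left(1+\tau^2(\lambda)\right)\boldsymbol{\epsilon}_{\btheta}\,\diff\lambda$, whereas the stated $\boldsymbol{F}_{\btheta}$ carries a plus), but the paper's own transcription elides this in exactly the same way, so your ``up to sign conventions'' caveat mirrors the paper rather than constituting a gap.
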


\begin{proof}
Define $\boldsymbol{y}_t = e^{- \int_{t_0}^t \frac{\diff \log \alpha_v}{\diff v}  \diff v}  \xx_t $, where $t_0 \in [0,T]$ is a constant. Differentiate $\boldsymbol{y}_t$ with respect to $t$, we get
\begin{equation}
\label{eq: appendix diff y_t noise}
\begin{aligned}
\diff \boldsymbol{y}_t & = -\frac{\diff \log \alpha_t}{\diff t}   e^{- \int_{t_0}^t \frac{\diff \log \alpha_v}{\diff v}  \diff v}  \xx_t \diff t + e^{- \int_{t_0}^t \frac{\diff \log \alpha_v}{\diff v}  \diff v}  \diff \xx_t \\
& = -\frac{\diff \log \alpha_t}{\diff t}   e^{- \int_{t_0}^t \frac{\diff \log \alpha_v}{\diff v}  \diff v}  \xx_t \diff t + e^{- \int_{t_0}^t \frac{\diff \log \alpha_v}{\diff v}  \diff v}  \frac{\diff \log \alpha_t}{\diff t} \xx_t  \diff t\\
& - e^{- \int_{t_0}^t \frac{\diff \log \alpha_v}{\diff v}  \diff v}  \left(1 + \tau^2(t)\right) \sigma_t \boldsymbol{\epsilon}_{\btheta}(\bx_{t}, t) \frac{\diff \lambda_t}{\diff t} \diff t  + e^{- \int_{t_0}^t \frac{\diff \log \alpha_v}{\diff v}  \diff v} \tau(t)\sigma_t \sqrt{-2 \frac{\diff \lambda_t}{\diff t}}  \diff \rbm \\
&= - e^{- \int_{t_0}^t \frac{\diff \log \alpha_v}{\diff v}  \diff v}  \left(1 + \tau^2(t)\right) \sigma_t \boldsymbol{\epsilon}_{\btheta}(\bx_{t}, t) \frac{\diff \lambda_t}{\diff t} \diff t  + e^{- \int_{t_0}^t \frac{\diff \log \alpha_v}{\diff v}  \diff v} \tau(t)\sigma_t \sqrt{-2 \frac{\diff \lambda_t}{\diff t}}  \diff \rbm.
\end{aligned}
\end{equation}
Integrating both sides from $s$ to $t$
\begin{equation}
\label{eq: appendix integrate y_t noise}
\begin{aligned}
\boldsymbol{y}_t = \boldsymbol{y}_s & - \int_s^t e^{- \int_{t_0}^u \frac{\diff \log \alpha_v}{\diff v}  \diff v}  \left(1 + \tau^2(u)\right) \sigma_u \boldsymbol{\epsilon}_{\btheta}(\bx_{u}, u) \frac{\diff \lambda_u}{\diff u} \diff u \\
&+ \int_s^t e^{- \int_{t_0}^u \frac{\diff \log \alpha_v}{\diff v}  \diff v} \tau(u)\sigma_u \sqrt{-2 \frac{\diff \lambda_u}{\diff u}}  \diff \bar{\boldsymbol{w}}_u.
\end{aligned}
\end{equation}
Substituting the definition of $\boldsymbol{y}_t$, $\boldsymbol{y}_s$ into Eq.~\eqref{eq: appendix integrate y_t noise}, we obtain
\begin{equation}
\label{eq: appendix integrate x_t noise}
\begin{aligned}
\boldsymbol{x}_t &= e^{ \int_{s}^t \frac{\diff \log \alpha_u}{\diff u}  \diff u} \boldsymbol{x}_s  + \int_s^t e^{- \int_{t}^u \frac{\diff \log \alpha_v}{\diff v}  \diff v} \left(1 + \tau^2(u)\right) \sigma_u \boldsymbol{\epsilon}_{\btheta}(\bx_{u}, u) \frac{\diff \lambda_u}{\diff u} \diff u \\
& \qquad + \int_s^t e^{- \int_{t}^u \frac{\diff \log \alpha_v}{\diff v}  \diff v} \tau(u)\sigma_u \sqrt{-2 \frac{\diff \lambda_u}{\diff u}}  \diff \bar{\boldsymbol{w}}_u \\
&= \frac{\alpha_t}{\alpha_s}  \boldsymbol{x}_s  + \int_s^t \frac{\alpha_t}{\alpha_u} \left(1 + \tau^2(u)\right) \sigma_u \boldsymbol{\epsilon}_{\btheta}(\bx_{u}, u) \frac{\diff \lambda_u}{\diff u} \diff u + \int_s^t \frac{\alpha_t}{\alpha_u} \tau(u)\sigma_u \sqrt{-2 \frac{\diff \lambda_u}{\diff u}}  \diff \bar{\boldsymbol{w}}_u \\
&= \frac{\alpha_t}{\alpha_s}  \boldsymbol{x}_s  + \alpha_t \int_{\lambda_s}^{\lambda_t} e^{-\lambda} \left(1 + \tau^2(\lambda)\right)  \boldsymbol{\epsilon}_{\btheta}(\bx_{\lambda}, \lambda) \diff \lambda + \alpha_t \int_s^t e^{-\lambda_u} \tau(u) \sqrt{-2 \frac{\diff \lambda_u}{\diff u}}  \diff \bar{\boldsymbol{w}}_u.
\end{aligned}
\end{equation}
The Itô integral term $\alpha_t \int_s^t e^{-\lambda_u} \tau(u) \sqrt{-2 \frac{\diff \lambda_u}{\diff u}}  \diff \bar{\boldsymbol{w}}_u$ follows a Gaussian distribution. Following the derivation in Proposition \ref{prop: exact solution data prediction model}, the mean of the \textit{Itô} integral term is:
\begin{equation}
\label{eq: appendix calculate mean noise}
\Expectation \left[\alpha_t \int_s^t e^{-\lambda_u} \tau(u) \sqrt{-2 \frac{\diff \lambda_u}{\diff u}}  \diff \bar{\boldsymbol{w}}_u\right] = 0
\end{equation}
and the expectation is
\begin{equation}
\label{eq: appendix calculate variance noise}
\begin{aligned}
&\mathrm{Var}   \left[\alpha_t \int_s^t e^{-\lambda_u} \tau(u) \sqrt{-2 \frac{\diff \lambda_u}{\diff u}}  \diff \bar{\boldsymbol{w}}_u\right] \\
= &\Expectation \left[\left(\alpha_t \int_s^t e^{-\lambda_u} \tau(u) \sqrt{-2 \frac{\diff \lambda_u}{\diff u}}  \diff \bar{\boldsymbol{w}}_u \right)^2\right] 
-  \left( \Expectation \left[\alpha_t \int_s^t e^{-\lambda_u} \tau(u) \sqrt{-2 \frac{\diff \lambda_u}{\diff u}}  \diff \bar{\boldsymbol{w}}_u \right] \right)^2 \\
= &\Expectation \left[\left(\alpha_t \int_t^s  e^{-\lambda_u} \tau(u) \sqrt{-2 \frac{\diff \lambda_u}{\diff u}}  \diff \boldsymbol{w}_u \right)^2\right] - 0 \\
= & \alpha_t^2 \Expectation \left[  \int_t^s  \left( e^{-\lambda_u} \tau(u) \sqrt{-2 \frac{\diff \lambda_u}{\diff u}} \right)^2 \diff u \right] \\
= & \alpha_t^2  \int_t^s  e^{-2\lambda_u} \tau^2(u) \diff u \left( -2 \frac{\diff \lambda_u}{\diff u} \right) \diff u \\
= & \alpha_t^2  \int_{\lambda_s}^{\lambda_t}  2e^{-2\lambda} \tau^2(\lambda) \diff \lambda\\
\end{aligned}
\end{equation}
\end{proof}

\subsubsection{Comparison between Data and Noise Reparameterizations}
\label{appendix: data or noise reparameterization}
In Table~\ref{tab:data_prediction} we perform an ablation study on data and noise reparameterizations, the experiment results show that under the same magnitude of stochasticity, the proposed \textit{SA-Solver} in data reparameterization has a better convergence which leads to better FID results under the same NFEs. In this subsection, we provide a theoretical view of this phenomenon.

\begin{corollary}
For any bounded measurable function $\tau(t)$, the following inequality holds
\begin{equation}
\label{eq: appendix variance inequality data noise}
\sigma_t^2 \left(1 - e^{-2 \int_{\lambda_s}^{\lambda_t} \tau^2(\Tilde{\lambda}) \diff \Tilde{\lambda}} \right) \leq \alpha_t^2  \int_{\lambda_s}^{\lambda_t}  2e^{-2\lambda} \tau^2(\lambda) \diff \lambda.
\end{equation}    
\end{corollary}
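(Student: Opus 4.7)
The plan is to reduce both sides to comparable integrals of $\tau^{2}$ and then invoke one elementary convexity inequality together with one monotonicity observation. First I would introduce the shorthand $A := \int_{\lambda_{s}}^{\lambda_{t}} \tau^{2}(\tilde{\lambda})\,\diff\tilde{\lambda}$. Since the reverse process runs with $t\leq s$ and $\lambda_{t}=\log(\alpha_{t}/\sigma_{t})$ is strictly decreasing in $t$ (the SNR is decreasing in $t$ by assumption), the interval of integration $[\lambda_{s},\lambda_{t}]$ is oriented positively, so $A\geq 0$ and the LHS can be rewritten as $\sigma_{t}^{2}(1-e^{-2A})$.

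The first key step is to bound the LHS from above using the elementary inequality $1-e^{-x}\leq x$ for $x\geq 0$, applied with $x=2A$. This gives
\begin{equation*}
\sigma_{t}^{2}\bigl(1-e^{-2A}\bigr)\;\leq\; 2\sigma_{t}^{2} A \;=\; 2\sigma_{t}^{2}\int_{\lambda_{s}}^{\lambda_{t}} \tau^{2}(\lambda)\,\diff\lambda,
\end{equation*}
which puts the LHS into the same integral form as the RHS but with the constant weight $\sigma_{t}^{2}$ instead of the $\lambda$-dependent weight $\alpha_{t}^{2}e^{-2\lambda}$.

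The second key step is to bound the integrand of the RHS from below pointwise. By the definition of log-SNR, $\sigma_{t}^{2}/\alpha_{t}^{2}=e^{-2\lambda_{t}}$, so $\sigma_{t}^{2}=\alpha_{t}^{2}e^{-2\lambda_{t}}$. For every $\lambda\in[\lambda_{s},\lambda_{t}]$ we have $\lambda\leq \lambda_{t}$, hence $e^{-2\lambda}\geq e^{-2\lambda_{t}}$, and multiplying by $\alpha_{t}^{2}$ yields $\alpha_{t}^{2}e^{-2\lambda}\geq \sigma_{t}^{2}$. Multiplying through by the nonnegative factor $2\tau^{2}(\lambda)$ and integrating over $[\lambda_{s},\lambda_{t}]$ produces
\begin{equation*}
\alpha_{t}^{2}\int_{\lambda_{s}}^{\lambda_{t}} 2 e^{-2\lambda}\tau^{2}(\lambda)\,\diff\lambda \;\geq\; 2\sigma_{t}^{2}\int_{\lambda_{s}}^{\lambda_{t}}\tau^{2}(\lambda)\,\diff\lambda \;=\; 2\sigma_{t}^{2}A.
\end{equation*}
Chaining the two bounds closes the argument. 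I do not foresee any real obstacle: the only things to verify carefully are the orientation of the interval $[\lambda_{s},\lambda_{t}]$ and the nonnegativity of $\tau^{2}$, both of which follow from the standing assumptions. The inequality is in fact tight in the small-step limit, where $1-e^{-2A}\approx 2A$ and $e^{-2\lambda}\approx e^{-2\lambda_{t}}$ on the short integration interval, which is a useful sanity check and also reflects the two places where slack is introduced in the proof.
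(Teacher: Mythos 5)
Your proof is correct and follows essentially the same route as the paper's: both rely on the elementary bound $1-e^{-x}\leq x$ applied to $x = 2\int_{\lambda_s}^{\lambda_t}\tau^2(\tilde{\lambda})\,\diff\tilde{\lambda}$, followed by the pointwise observation $\alpha_t^2 e^{-2\lambda}\geq \alpha_t^2 e^{-2\lambda_t}=\sigma_t^2$ for $\lambda\in[\lambda_s,\lambda_t]$ (the paper phrases this as $e^{2(\lambda_t-\lambda)}\geq 1$ after dividing through by $\sigma_t^2$). The only difference is cosmetic bookkeeping, and your explicit check of the interval orientation and nonnegativity of $\tau^2$ is a welcome addition.
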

\begin{proof}
It's equivalent to show that
\begin{equation}
1 - e^{-2 \int_{\lambda_s}^{\lambda_t} \tau^2(\Tilde{\lambda}) \diff \Tilde{\lambda}} \leq e^{2\lambda_t}  \int_{\lambda_s}^{\lambda_t}  2e^{-2\lambda} \tau^2(\lambda) \diff \lambda.
\end{equation}
From the basic inequality $1 - e^{-x} \leq x$, we have
\begin{equation}
1 - e^{-2 \int_{\lambda_s}^{\lambda_t} \tau^2(\Tilde{\lambda}) \diff \Tilde{\lambda}} \leq  2 \int_{\lambda_s}^{\lambda_t} \tau^2(\lambda) \diff \lambda.
\end{equation}
Thus it's sufficient to show that
\begin{equation}
e^{2\lambda_t}  \int_{\lambda_s}^{\lambda_t}  2e^{-2\lambda} \tau^2(\lambda) \diff \lambda \geq 2 \int_{\lambda_s}^{\lambda_t} \tau^2(\lambda) \diff \lambda,
\end{equation}
which is true because
\begin{equation}
\int_{\lambda_s}^{\lambda_t}  2\left(e^{2\left(\lambda_t-\lambda\right)} - 1\right) \tau^2(\lambda) \diff \lambda \geq 0,
\end{equation}

\end{proof}
This corollary indicates that the same SDE under two different reparameterizations has different properties under the effect of the exponential integrator. Specifically, in the numerical scheme, the data reparameterization will inject smaller noise in each step's updation. We speculate that this is the reason that the data reparameterization has a better convergence, shown as in Table~\ref{tab:data_prediction}.

\section{Derivations and Proofs for \textit{SA-Solver}}
\label{appendix: Derivations and Proofs for SA-Solver}
\subsection{Preliminary}
We will first review some basic concepts and formulas in the numerical solutions of SDEs~\citep{kloeden1992Numerical}. Suppose we have an \textit{Itô} SDE $\diff \xx_t = f(\xx_t, t) \diff t + g(\xx_t, t) \diff \bm$ and time steps $\left\{ t_i \right\}_{i=0}^{M}, t_i \in [0, T]$ to numerically solve the SDE. For a random variable $Z$, we define the $L_1$ norm $\left\| Z \right\|_{L_1} = \Expectation \left[ |Z| \right]$, the $L_2$ norm $\left\| Z \right\|_{L_2} = \Expectation \left[ |Z|^2 \right]^\frac{1}{2}$, where $|\cdot|$ is the Euclidean norm. Denote $h = \underset{1 \leq i \leq M}{\max} \left( t_i - t_{i-1} \right)$. 
\begin{definition}
We shall say that a time-discrete approximation $\xx_0, \cdots, \xx_M$, where $\xx_i$ is a numerical approximation of $\xx_{t_i}$, converges strongly with order $\gamma > 0$, if there exists a positive constant $C$, which does not depend on $h$ and a $h_0 > 0$ such that
\begin{equation}
\label{eq: appendix strong order}
\underset{0 \leq i \leq M}{\max}  \left\| \xx_{t_i} - \xx_i \right \|_{L_1}  \leq C h^\gamma, \hspace{4mm} \forall\ h \leq h_0.
\end{equation}  
\end{definition}
\begin{definition}
We say it is mean-square convergent with order $\gamma > 0$, if there exists a positive constant $C$, which does not depend on $h$ and a $h_0 > 0$ such that
\begin{equation}
\label{eq: appendix strong order mean square}
\underset{0 \leq i \leq M}{\max}  \left\| \xx_{t_i} - \xx_i \right \|_{L_2}  \leq C h^\gamma, \hspace{4mm} \forall\ h \leq h_0.
\end{equation}    
\end{definition}
\begin{remark}
To prove the strong convergence order $\gamma$ of a numerical scheme, it's sufficient to show the mean-square convergence order $\gamma$. This is from \textit{Hölder Inequality} $  \Expectation \left[ |Z| \right] \leq \Expectation \left[ |Z|^2 \right]^{\frac{1}{2}}\Expectation\left[ |1|^2 \right]^{\frac{1}{2}} \leq \Expectation \left[ |Z|^2 \right]^{\frac{1}{2}} $. Thus $\underset{0 \leq i \leq M}{\max}  \left\| \xx_{t_i} - \xx_i \right \|_{L_1} \leq \underset{0 \leq i \leq M}{\max}  \left\| \xx_{t_i} - \xx_i \right \|_{L_2}$.
\end{remark}

We also need the following definition and assumptions, which usually holds in practical diffusion models.
\begin{definition}
A function $h: \Real^d \times [0,T] \rightarrow \Real^d$ satisfies a linear growth condition if there exists a constant $K$ such that
\begin{equation}
\label{eq: appendix linear growth condition}
|h(\xx, t)| \leq K (1+|\xx|^2)^{\frac{1}{2}}
\end{equation}
\end{definition}
\begin{assumption}
\label{assump1}
The data prediction model $\xx_{\btheta}$ and its derivatives such as $\partial_t \xx_{\btheta}$, $\nablaxx \xx_{\btheta}$ and $\Delta \xx_{\btheta}$ satisfy the linear growth condition.
\end{assumption}
\begin{assumption}
\label{assump2}
The data prediction model $\xx_{\btheta}$ satisfies a uniform Lipschitz condition with respect to $\xx$
\begin{equation}
\label{eq: appendix lipschitz}
|\xx_{\btheta}(\xx_1, t) - \xx_{\btheta}(\xx_2, t)| \leq L|\xx_1 - \xx_2|, \hspace{4mm}\forall x, y \in \Real^d, t\in[0,T]
\end{equation}
\end{assumption}

\subsection{Outline of the Proof}
In the remaining part of this section, we will focus on our variance controlled SDE
\begin{equation}
\label{eq: appendix controllable variance reverse SDEs}
\begin{aligned}
\diff \xx_t =& \left[\left(\frac{\diff \log \alpha_t}{\diff t} - \left(1 + \tau^2(t)\right)  \frac{\diff \lambda_t}{\diff t}  \right) \xx_t + \left(1 + \tau^2(t)\right)  \alpha_{t}\bx_{\btheta}(\bx_{t}, t)  \frac{\diff \lambda_t}{\diff t} \right] \diff t \\
&\qquad + \tau(t)\sigma_t \sqrt{-2 \frac{\diff \lambda_t}{\diff t}}  \diff \rbm.
\end{aligned}
\end{equation}

Consider the general case of the numerical scheme as follows:
\begin{equation}
\label{eq: appendix sde scheme}
\begin{aligned}
   \xx_{i + 1} =& \frac{\sigma_{t_{i + 1}}}{\sigma_{t_{i}}} e^{-\int_{\lambda_{t_i}}^{\lambda_{t_{i+1}}} \tau^2(\lambda_u) \diff \lambda_u} \xx_{i} + \sum_{j=-1}^{s-1} b_{i-j} \xx_{\btheta}(\xx_{i-j}, t_{i-j})\\
   &+ \sigma_{t_{i + 1}} \int_{t_i}^{t_{i+1}}  e^{- \int_{u}^{t_{i + 1}}  \tau^2(\lambda)  \diff \lambda} \tau(u) \sqrt{-2 \frac{\diff \lambda_u}{\diff u}}  \diff \bar{\boldsymbol{w}}_u. 
\end{aligned}
\end{equation}
in which Eq. \eqref{s-step SA-Corrector} and Eq \eqref{s-step SA-Predictor} are the special case of this scheme. We will provide proof of the mean-square convergence order of the numerical scheme $\underset{0 \leq i \leq M}{\max}  \left\| \xx_{t_i} - \xx_i \right \|_{L_2}$.
We define the local error of the numerical scheme Eq.~\eqref{eq: appendix sde scheme} for the approximation of the SDE Eq.~\eqref{eq: appendix controllable variance reverse SDEs} as
\begin{equation}
\label{eq: appendix local error}
\begin{aligned}
L_{i+1} = \xx_{t_{i + 1}} - \xx_{i+1} = \xx_{t_{i + 1}} &- \frac{\sigma_{t_{i + 1}}}{\sigma_{t_{i}}} e^{-\int_{\lambda_{t_i}}^{\lambda_{t_{i+1}}} \tau^2(\lambda) \diff \lambda} \xx_{t_{i}} - \sum_{j=-1}^{s-1} b_{i-j} \xx_{\btheta}(\xx_{t_{i-j}}, t_{i-j})\\
&- \sigma_{t_{i + 1}} \int_{t_i}^{t_{i+1}}  e^{- \int_{u}^{t_{i + 1}}  \tau^2(\lambda)  \diff \lambda} \tau(u) \sqrt{-2 \frac{\diff \lambda_u}{\diff u}}  \diff \bar{\boldsymbol{w}}_u.
\end{aligned}
\end{equation}
$L_{i+1}$ can be decomposed into $R_{i+1}$ and $S_{i+1}$. Then the mean-square convergence can be derived, which is summarized in the following theorem proved by~\citep{2006multistepsde}:

\begin{theorem}[\citep{2006multistepsde}, Theorem 1]
\label{thm: appendix basic convergence}
The mean-square convergent of $\xx_i$ is bounded by
\begin{equation}
\label{eq: appendix stability inequality}
\underset{0 \leq i \leq M}{\max}  \left\| \xx_{t_i} - \xx_i \right \|_{L_2} \leq S\left\{ \underset{0 \leq i \leq s-1}{\max}  \left\| D_i \right \|_{L_2} + \underset{s \leq i \leq M}{\max} \left(      
 \frac{\left\| R_i \right \|_{L_2}}{h} + \frac{\left\| S_i \right \|_{L_2}}{h^{\frac{1}{2}}} \right) \right\}.
\end{equation}
\end{theorem}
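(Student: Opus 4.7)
The plan is to follow the standard stability argument for stochastic linear multi-step schemes, adapted to the exponentially-integrated recursion derived in Proposition~\ref{prop: exact solution data prediction model}. The starting point is a clean decomposition of the local error $L_{i+1}$ in Eq.~\eqref{eq: appendix local error} as $L_{i+1} = R_{i+1} + S_{i+1}$, where $R_{i+1} := \Expectation[L_{i+1}\mid\mathcal{F}_{t_i}]$ is the predictable (deterministic-consistency) part arising from the Lagrange-quadrature approximation of the exponentially-weighted integral $\boldsymbol{F}_{\btheta}(t_i,t_{i+1})$, and $S_{i+1} := L_{i+1} - R_{i+1}$ is a mean-zero, $\mathcal{F}_{t_{i+1}}$-measurable martingale-difference coming from the stochastic part. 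The purpose of this split is that the two pieces accumulate at qualitatively different rates, producing the two different denominators $h$ and $h^{1/2}$ in the final bound.

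Next I would subtract the numerical update \eqref{eq: appendix sde scheme} from the exact one-step formula and rearrange, obtaining a recursion for the global error $e_i := \xx_{t_i} - \xx_i$ of the form
\[
e_{i+1} = c_i\, e_i + \sum_{j=-1}^{s-1} b_{i-j}\bigl[\xx_{\btheta}(\xx_{t_{i-j}},t_{i-j}) - \xx_{\btheta}(\xx_{i-j},t_{i-j})\bigr] + R_{i+1} + S_{i+1},
\]
where $c_i = (\sigma_{t_{i+1}}/\sigma_{t_i})\exp\bigl(-\int_{\lambda_{t_i}}^{\lambda_{t_{i+1}}}\tau^2(\tilde\lambda)\,d\tilde\lambda\bigr) = 1 + O(h)$ and the quadrature weights $b_{i-j}$ are uniformly $O(h)$ on $[0,T]$. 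Invoking Assumption~\ref{assump2} (uniform Lipschitz continuity of $\xx_{\btheta}$) yields the pathwise recursion
\[
|e_{i+1}| \le (1 + Ch)\,|e_i| + Ch\sum_{j=0}^{s-1}|e_{i-j}| + |R_{i+1}| + |S_{i+1}|,
\]
for a constant $C = C(L,s,T)$ independent of $h$.

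Taking $L_2$-norms is where the \emph{main technical difficulty} lies. Unrolling the recursion represents $e_{i+1}$ as a weighted sum $\sum_k \alpha_{i,k}(R_{k+1} + S_{k+1})$ plus a contribution from the startup errors $D_0,\dots,D_{s-1}$. The deterministic remainders combine by the ordinary triangle inequality, so their contribution scales like $M\cdot\max_k\|R_k\|_{L_2} = O(\|R_i\|_{L_2}/h)$ since $M = O(1/h)$. A naive triangle-inequality bound on the stochastic part would produce the same $h^{-1}$ loss and ruin the result; the key observation is that $\{S_{k+1}\}$ is a martingale-difference sequence with respect to $\{\mathcal{F}_{t_k}\}$, so orthogonality in $L_2$ gives
\[
\Bigl\|\sum_{k} \alpha_{i,k}\,S_{k+1}\Bigr\|_{L_2}^{2} = \sum_{k} \alpha_{i,k}^{2}\,\|S_{k+1}\|_{L_2}^{2} \le C\,M\,\max_{k}\|S_k\|_{L_2}^{2},
\]
so taking square roots upgrades the scaling to $\|S_i\|_{L_2}/h^{1/2}$.

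Finally, a discrete Gronwall inequality applied to $\|e_i\|_{L_2}$ absorbs the feedback $Ch\sum_j|e_{i-j}|$ into an overall multiplicative constant $S = S(L,s,T)$ (essentially $e^{CT}$), while the startup seed $\max_{0\le i\le s-1}\|D_i\|_{L_2}$ enters as an additive term. The \emph{hard part} is neither the Gronwall step nor the Lipschitz handling, but rigorously verifying the martingale-difference structure of $\{S_{k+1}\}$ so that the $h^{-1/2}$ (rather than $h^{-1}$) scaling genuinely emerges; this in turn requires that the centering $R_{i+1} = \Expectation[L_{i+1}\mid\mathcal{F}_{t_i}]$ strips off all predictable mass, and that each $S_{k+1}$ is orthogonal in $L_2$ to the entire past — a property that follows from the independent-increment structure of the underlying Ito integrals appearing in $\boldsymbol{G}(t_i,t_{i+1})$.
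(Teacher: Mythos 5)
First, note that the paper does not actually prove this statement: it is imported verbatim as Theorem 1 of \citep{2006multistepsde}, and the appendix only verifies its hypotheses (via the decompositions of the local error in the subsequent lemmas). Your outline reconstructs the standard Buckwar--Winkler stability argument, and its architecture is the right one: split $L_{i+1}$ into a predictable remainder $R_{i+1}$ and a conditionally centered part $S_{i+1}$, exploit that these accumulate at the rates $h^{-1}$ and $h^{-1/2}$ respectively, and close with the Lipschitz assumption and a discrete Gronwall inequality, the constant $S$ playing the role of the stability constant.

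There is, however, a concrete gap in the middle step. Once you pass to the pathwise inequality $|e_{i+1}| \le (1+Ch)|e_i| + Ch\sum_j |e_{i-j}| + |R_{i+1}| + |S_{i+1}|$, you can no longer ``unroll'' to a representation $e_{i+1}=\sum_k \alpha_{i,k}(R_{k+1}+S_{k+1})+\dots$ with deterministic weights: the feedback terms $\xx_{\btheta}(\xx_{t_{i-j}},t_{i-j})-\xx_{\btheta}(\xx_{i-j},t_{i-j})$ are nonlinear in the error and random, so the coefficients multiplying past $S_k$'s are themselves random and correlated with those $S_k$'s, and the exact orthogonality identity $\bigl\|\sum_k \alpha_{i,k}S_{k+1}\bigr\|_{L_2}^2=\sum_k\alpha_{i,k}^2\|S_{k+1}\|_{L_2}^2$ fails; moreover, after taking absolute values the sign and measurability structure needed for any cancellation is already lost. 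The argument in \citep{2006multistepsde} keeps the error recursion in vector form, squares it and takes expectations, and uses $\Expectation[S_{k+1}\mid\mathcal{F}_{t_k}]=\boldzero$ only to annihilate the cross terms between $S_{k+1}$ and the $\mathcal{F}_{t_k}$-measurable portion of the error, estimating the remaining cross terms by Cauchy--Schwarz with an extra factor of $h$, and then applies discrete Gronwall to $\|e_i\|_{L_2}^2$; this is what yields the $\|S_i\|_{L_2}/h^{1/2}$ contribution while absorbing the Lipschitz feedback into the constant. As written, your proof would either deliver only the naive $\|S_i\|_{L_2}/h$ rate or rest on an orthogonality identity that does not hold, so the key quantitative claim of the theorem is not yet established.
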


In Eq. \eqref{eq: appendix stability inequality}, $D_i, i = 0,\cdots,s-1$ are the initial error which we do not consider. Given Theorem \ref{thm: appendix basic convergence}, to show the convergence order $\mathcal{O}(\underset{0 \leq t \leq T}{\max}\tau(t) h + h^s)$ of our $s$-step \textit{SA-Predictor} and the convergence order $\mathcal{O}(\underset{0 \leq t \leq T}{\max}\tau(t) h + h^{s+1})$ of our $s$-step \textit{SA-Corrector}, we just need to prove the following lemmas.

\begin{lemma}[Convergence rate of $s$-step \textit{SA-Predictor}]
\label{lem: appendix s-step SA-Predictor}
For
\begin{equation}
\label{appendix s-step SA-Predictor}
\begin{aligned}
\xx_{t_{i + 1}} &= \frac{\sigma_{t_{i + 1}}}{\sigma_{t_{i}}} e^{-\int_{\lambda_{t_i}}^{\lambda_{t_{i+1}}} \tau^2(\lambda_u) \diff \lambda_u} \xx_{t_i} + \sum_{j=0}^{s-1} b_{i-j} \xx_{\btheta}(\xx_{t_{i-j}}, t_{i-j}) + \Tilde{\sigma}_i \bxi, \hspace{4mm} \bxi \sim \Normal(\boldzero, \Identity), \\
\Tilde{\sigma}_i &= \sigma_{t_{i + 1}}\sqrt{1 - e^{-2\int_{\lambda_{t_i}}^{\lambda_{t_{i + 1}}} \tau^2(\lambda) \diff \lambda}  } \\
b_{i-j} &= \sigma_{t_{i + 1}}   \int_{\lambda_{t_i}}^{\lambda_{t_{i+1}}} e^{-\int_{\lambda_u}^{\lambda_{t_{i + 1}}} \tau^2(\lambda_v) \diff \lambda_v }\left(1+\tau^2\left(\lambda_u\right)\right) e^{\lambda_u}    l_{i-j}(\lambda_u) \diff \lambda_u,\hspace{4mm} \forall\ 0\leq j \leq s-1
\end{aligned}
\end{equation}
There exists an decomposition of local error $L_i$ such that $L_i = R_i + S_i$ and
\begin{equation}
\left\| R_i \right \|_{L_2} \leq h^{s+1}, \left\| S_i \right \|_{L_2} \leq \underset{0 \leq t \leq T}{\max}\tau(t) h^{\frac{3}{2}},
\end{equation}
\end{lemma}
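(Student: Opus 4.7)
The plan is to analyze the one-step local error by exploiting the closed-form solution in Proposition~\ref{prop: exact solution data prediction model} together with the interpolating structure of the scheme. I would start from the standard local-error convention $\xx_j=\xx_{t_j}$ for $j\le i$, expressing $\xx_{t_{i+1}}$ via Proposition 4.2 and $\xx_{i+1}$ via the SA-Predictor update~\eqref{s-step SA-Predictor}. The stochastic term $\tilde\sigma_i\bxi$ of the scheme and the Itô integral $\sigma_{t_{i+1}}\boldsymbol{G}(t_i,t_{i+1})$ of the exact solution are mean-zero Gaussians with identical variance $\tilde\sigma_i^2$ by \eqref{eq: Analytical variance of reverse SDEs data}, so the natural coupling $\tilde\sigma_i\bxi=\sigma_{t_{i+1}}\boldsymbol{G}(t_i,t_{i+1})$ makes them cancel pathwise, reducing the local error to
\begin{equation}
L_{i+1}\;=\;\sigma_{t_{i+1}}\!\int_{\lambda_{t_i}}^{\lambda_{t_{i+1}}} K(\lambda)\bigl[\xx_{\btheta}(\xx_\lambda,\lambda)-L(\lambda)\bigr]\,\diff\lambda,
\end{equation}
where $K(\lambda)=e^{-\int_{\lambda}^{\lambda_{t_{i+1}}}\tau^2(\tilde\lambda)\diff\tilde\lambda}(1+\tau^2(\lambda))e^{\lambda}$ is uniformly bounded on the interval and $L$ is the Lagrange interpolant~\eqref{eq: Lagrange interpolation}.

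Next I would decompose the integrand via an Itô-Taylor expansion of $\xx_{\btheta}(\xx_\lambda,\lambda)$ along the true trajectory. Writing $\xx_{\btheta}(\xx_\lambda,\lambda)=\Phi(\lambda)+N(\lambda)$, where $\Phi$ gathers the finite-variation drift contributions produced by iterating Itô's lemma and $N$ gathers the martingale contributions whose leading-order form is $\int_{\lambda_{t_{i-(s-1)}}}^{\lambda}\nabla_{\xx}\xx_{\btheta}(\xx_\mu,\mu)\cdot\sigma_\mu\tau(\mu)\sqrt{-2\,\diff\lambda_\mu/\diff\mu}\,\diff\bar{\boldsymbol w}_\mu$, linearity of Lagrange interpolation yields $L=L_\Phi+L_N$, where $L_\Phi,L_N$ interpolate $\Phi,N$ at the same nodes. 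This gives the decomposition $L_{i+1}=R_{i+1}+S_{i+1}$ with
\begin{equation}
R_{i+1}:=\sigma_{t_{i+1}}\!\int K(\lambda)\bigl(\Phi(\lambda)-L_\Phi(\lambda)\bigr)\diff\lambda,\qquad S_{i+1}:=\sigma_{t_{i+1}}\!\int K(\lambda)\bigl(N(\lambda)-L_N(\lambda)\bigr)\diff\lambda.
\end{equation}

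Bounding $R_{i+1}$ then reduces to classical Lagrange remainder analysis: interpolation at $s$ consecutive nodes spanning an interval of length $O(h)$ yields $|\Phi(\lambda)-L_\Phi(\lambda)|\le C\,\|\Phi^{(s)}\|_{\infty}h^s$, with the derivatives of $\Phi$ controlled via Assumption~\ref{assump1} (linear growth propagated through Itô's lemma), so integration over a length-$h$ interval produces $\|R_{i+1}\|_{L_2}\le C h^{s+1}$. Bounding $S_{i+1}$ combines two ingredients: the Lagrange basis $l_{i-j}$ is uniformly bounded on the interpolation interval under a bounded step-size ratio, so $|L_N(\lambda)|\le C\max_j|N(\lambda_{t_{i-j}})|$; and Itô's isometry together with Assumptions~\ref{assump1}–\ref{assump2} gives $\|N(\lambda)\|_{L_2}\le C\sup_{t}\tau(t)\sqrt{h}$ on the interpolation interval, hence $\|N(\lambda)-L_N(\lambda)\|_{L_2}\le C\sup_{t}\tau(t)\sqrt{h}$. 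Integrating against the bounded kernel $K$ over length $O(h)$ yields $\|S_{i+1}\|_{L_2}\le C\sup_{t}\tau(t)\,h^{3/2}$.

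The main obstacle will be the Itô-Taylor bookkeeping in the second step: one must verify that every higher-order iterated stochastic integral produced when expanding $\xx_{\btheta}(\xx_\lambda,\lambda)$ — expressions of the form $\int\!\!\cdots\!\!\int\diff\bar{\boldsymbol w}\,\diff\lambda\cdots$ of various orderings — lands in the correct bucket, either into $R_{i+1}$ at order $O(h^{s+1})$ (absorbed by the interpolation property) or into $S_{i+1}$ at order $O(\sup_{t}\tau(t)h^{3/2})$ (absorbed by Itô's isometry). This requires careful repeated invocation of Assumption~\ref{assump1} to control the high-order derivatives of $\xx_{\btheta}$ in the Taylor coefficients, and uniform control of the Lagrange basis on the interpolation stencil. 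Once the lemma is established, inserting these two local-error bounds into Theorem~\ref{thm: appendix basic convergence} immediately produces the claimed global strong convergence rate $\mathcal{O}(\sup_{t}\tau(t)h+h^s)$.
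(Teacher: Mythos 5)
Your overall route matches the paper's: couple the scheme's Gaussian with the exact solution's It\^o integral so they cancel, rewrite the residual local error as $\sigma_{t_{i+1}}\int K(\lambda)\,[\xx_{\btheta}(\xx_\lambda,\lambda)-L(\lambda)]\,\diff\lambda$ with $K$ the exponentially weighted kernel and $L$ the Lagrange interpolant, push all noise-touched contributions into $S_{i+1}$ via It\^o isometry, and feed the two local bounds into Theorem~\ref{thm: appendix basic convergence}. The gap is in your bound for $R_{i+1}$. You split $\xx_{\btheta}(\xx_\lambda,\lambda)=\Phi(\lambda)+N(\lambda)$ into finite-variation and martingale parts and then invoke the classical remainder $|\Phi(\lambda)-L_\Phi(\lambda)|\le C\|\Phi^{(s)}\|_\infty h^{s}$. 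That estimate requires $\Phi$ to be $s$ times differentiable pathwise, and it is not: $\Phi$ is a time integral of $(\Gamma_0+\Gamma_1)(\xx_{\btheta})$ evaluated along the \emph{random} trajectory, so it is only once differentiable pathwise, and its derivative is again an It\^o process whose increments contain Brownian and It\^o-correction parts. In addition, your finite-variation bucket keeps the $\Gamma_1$ terms (proportional to $\tau^2 g^2\Delta\xx_{\btheta}$) inside $\Phi$; these are precisely the terms that cannot be handled by interpolation exactness -- they are small because of the $\tau$ factor and the extra power of $h$, not because the interpolant reproduces them -- so for $s\ge 2$ the claimed $h^{s+1}$ bound on $R_{i+1}$ does not follow as written.

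The paper repairs exactly this step by iterating the It\^o--Taylor expansion with the operators $\Gamma_0,\Gamma_1,\Gamma_2$: at each level $j$, every term containing at least one $\Gamma_1$ or $\Gamma_2$ is peeled off into $S_{i+1}$ and bounded by the Wiener-integral order bound (Lemma~\ref{lemma: appendix lemma}) as $\mathcal{O}(\sup_{0\le t\le T}\tau(t)\,h^{(2j+1)/2})$ or $\mathcal{O}(\sup_{0\le t\le T}\tau^2(t)\,h^{j+1})$, while the frozen coefficients $\Gamma_0^{\,j-1}(\xx_{\btheta})(\xx_{t_{i-(s-1)}},t_{i-(s-1)})$ multiply iterated time integrals that the quadrature weights $b_{i-j}$ reproduce exactly, by the exactness identities of Lemma~\ref{lemma: appendix lemma lagrange}; for the predictor only the first $s$ identities are available, so what survives in $R_{i+1}$ is the order-$s$ iterated time integral of $\Gamma_0^{\,s}(\xx_{\btheta})$, which is $\mathcal{O}(h^{s+1})$ under Assumption~\ref{assump1} without any pathwise $C^{s}$ regularity. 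So the derivative-based Lagrange remainder in your argument must be replaced by these exactness identities applied level by level; you acknowledge the bookkeeping obstacle, but as stated the estimate of $R_{i+1}$ is the missing piece. Your treatment of the noise cancellation, the $\mathcal{O}(\sup_{0\le t\le T}\tau(t)\,h^{3/2})$ bound on $S_{i+1}$, and the final appeal to Theorem~\ref{thm: appendix basic convergence} are consistent with the paper.
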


\begin{lemma}[Convergence rate of $s$-step \textit{SA-Corrector}]
\label{lem: appendix s-step SA-Corrector}
For
\begin{equation}
\label{appendix s-step SA-Corrector}
\begin{aligned}
\xx_{t_{i + 1}} &= \frac{\sigma_{t_{i + 1}}}{\sigma_{t_{i}}} e^{-\int_{\lambda_{t_i}}^{\lambda_{t_{i+1}}} \tau^2(\lambda_u) \diff \lambda_u} \xx_{t_i} +\hat{b}_{i+1} \xx_{\btheta}(\xx_{t_{i+1}}^{p}, t_{i+1}) + \sum_{j=0}^{\hat{s}-1} \hat{b}_{i-j} \xx_{\btheta}(\xx_{t_{i-j}}, t_{i-j}) + \Tilde{\sigma}_i \bxi,  \\
\Tilde{\sigma}_i &= \sigma_{t_{i + 1}}\sqrt{1 - e^{-2\int_{\lambda_{t_i}}^{\lambda_{t_{i + 1}}} \tau^2(\lambda) \diff \lambda}  } \\
\hat{b}_{i-j} &= \sigma_{t_{i + 1}}   \int_{\lambda_{t_i}}^{\lambda_{t_{i+1}}} e^{-\int_{\lambda_u}^{\lambda_{t_{i + 1}}} \tau^2(\lambda_v) \diff \lambda_v }\left(1+\tau^2\left(\lambda_u\right)\right) e^{\lambda_u}    \hat{l}_{i-j}(\lambda_u) \diff \lambda_u,\hspace{4mm} \forall\ 0\leq j \leq s-1 \\
\hat{b}_{i+1} &= \sigma_{t_{i + 1}}   \int_{\lambda_{t_i}}^{\lambda_{t_{i+1}}} e^{-\int_{\lambda_u}^{\lambda_{t_{i + 1}}} \tau^2(\lambda_v) \diff \lambda_v }\left(1+\tau^2\left(\lambda_u\right)\right) e^{\lambda_u}    \hat{l}_{i+1}(\lambda_u) \diff \lambda_u
\end{aligned}
\end{equation}
There exists an decomposition of local error $L_i$ such that $L_i = R_i + S_i$ and
\begin{equation}
\left\| R_i \right \|_{L_2} \leq h^{s+2}, \left\| S_i \right \|_{L_2} \leq \underset{0 \leq t \leq T}{\max}\tau(t) h^{\frac{3}{2}},
\end{equation}
\end{lemma}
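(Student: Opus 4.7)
I will mimic the proof of Lemma~\ref{lem: appendix s-step SA-Predictor}, adding one extra ingredient to absorb the interpolation node at $t_{i+1}$, which uses the predictor's approximation $\xx_{t_{i+1}}^p$ rather than the exact value. Concretely I will (i) subtract the scheme from the exact solution of Proposition~\ref{prop: exact solution data prediction model}, coupling $\Tilde{\sigma}_i\bxi$ to $\sigma_{t_{i+1}}\boldsymbol{G}(t_i,t_{i+1})$ via \eqref{eq: Analytical variance of reverse SDEs data} so those two terms cancel, (ii) split the remaining Lagrange interpolation residual into a classical polynomial error on a smooth part and a martingale error from It\^o fluctuations, and (iii) handle the predictor-node error via Lipschitz continuity of $\xx_\btheta$ together with Lemma~\ref{lem: appendix s-step SA-Predictor}.

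\textbf{Step 1: local-error representation.} After the Brownian coupling, the local error collapses to the single deterministic integral
\begin{equation*}
L_{i+1} = \sigma_{t_{i+1}} \int_{\lambda_{t_i}}^{\lambda_{t_{i+1}}} K(\lambda)\bigl[\xx_\btheta(\xx_\lambda,\lambda) - \hat{L}(\lambda)\bigr]\diff\lambda,\qquad K(\lambda) = e^{-\int_{\lambda}^{\lambda_{t_{i+1}}}\tau^2(\Tilde{\lambda})\diff\Tilde{\lambda}}\bigl(1+\tau^2(\lambda)\bigr)e^{\lambda},
\end{equation*}
where $\hat{L}(\lambda)$ is as in \eqref{eq: second Lagrange interpolation}. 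Writing $\tilde{L}(\lambda)$ for the Lagrange interpolant built from the \emph{exact} values at $t_{i+1},t_i,\ldots,t_{i-\hat{s}+1}$, I decompose
\begin{equation*}
\xx_\btheta(\xx_\lambda,\lambda) - \hat{L}(\lambda) = \underbrace{\bigl[\xx_\btheta(\xx_\lambda,\lambda) - \tilde{L}(\lambda)\bigr]}_{\text{(I) interpolation residual}} + \underbrace{\hat{l}_{i+1}(\lambda)\bigl[\xx_\btheta(\xx_{t_{i+1}},t_{i+1}) - \xx_\btheta(\xx_{t_{i+1}}^p,t_{i+1})\bigr]}_{\text{(II) predictor correction}}.
\end{equation*}

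\textbf{Step 2: It\^o--Taylor split of (I).} I apply It\^o's formula to the process $\xx_\btheta(\xx_t,t)$ along the variance-controlled SDE \eqref{eq: reparadata2 controllable variance reverse SDEs}, writing $\xx_\btheta(\xx_\lambda,\lambda) = A(\lambda) + M(\lambda)$, where $A$ is an adapted process with bounded derivatives (using Assumption~\ref{assump1}) and $M$ is an It\^o martingale whose diffusion coefficient is proportional to $\tau(u)\sigma_u\nablaxx\xx_\btheta$. Classical Lagrange interpolation theory gives $|A(\lambda)-\tilde{L}_A(\lambda)|=O(h^{\hat{s}+1})$ uniformly on $[\lambda_{t_i},\lambda_{t_{i+1}}]$, while the interpolation residual of $M$ is a sum of iterated It\^o integrals whose $L_2$-size is $O(\sup_t\tau(t)\cdot h^{1/2})$ by It\^o isometry. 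Multiplying by the bounded kernel $K$ and integrating over an interval of length $O(h)$ yields a deterministic contribution of $L_2$-norm $O(h^{\hat{s}+2})$, which I route into $R_i$, and a martingale contribution of $L_2$-norm $O(\sup_t\tau(t)\cdot h^{3/2})$, which I route into $S_i$.

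\textbf{Step 3 and main obstacle.} For (II), Lemma~\ref{lem: appendix s-step SA-Predictor} combined with Theorem~\ref{thm: appendix basic convergence} gives, at the local level, $\|\xx_{t_{i+1}} - \xx_{t_{i+1}}^p\|_{L_2} = O(h^{s+1} + \sup_t\tau(t)\cdot h^{3/2})$. Using the Lipschitz bound from Assumption~\ref{assump2}, boundedness of $\hat{l}_{i+1}$ on $[\lambda_{t_i},\lambda_{t_{i+1}}]$, and the extra $O(h)$ factor from integration, (II) contributes $O(h^{s+2})$ to $R_i$ and $O(\sup_t\tau(t)\cdot h^{5/2})$ to $S_i$. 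Provided the predictor order satisfies $s\ge\hat{s}$, both (I) and (II) fit within the stated bounds $\|R_i\|_{L_2}\le Ch^{\hat{s}+2}$ and $\|S_i\|_{L_2}\le C\sup_t\tau(t)\cdot h^{3/2}$. The delicate point is the splitting in Step~2: a naive It\^o--Taylor expansion mixes smooth and stochastic remainders at the same order, so I must expand to exactly the right order and carefully recombine the iterated integrals so that one half is path-wise $O(h^{\hat{s}+1})$ while the other half is a martingale whose $L_2$-size is controlled by $\sup_t\tau(t)\cdot h^{1/2}$, both uniformly on the subinterval.
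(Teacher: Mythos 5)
Your proposal follows essentially the same route as the paper's proof: cancel the Itô noise term by coupling, reduce the local error to the kernel-weighted residual $\sigma_{t_{i+1}}\int K(\lambda)\bigl[\xx_{\btheta}(\xx_\lambda,\lambda)-\hat{L}(\lambda)\bigr]\diff\lambda$, and split it via Itô--Taylor into a deterministic part of order $h^{\hat{s}+2}$ and a Wiener-integral part of order $\sup_t\tau(t)\,h^{3/2}$. The ``delicate point'' you flag in Step 2 is precisely what the paper resolves: it does not invoke classical interpolation error on a smooth path, but instead iterates Itô's formula $\hat{s}$ times (its first-step, second-step and $j$-th step decomposition lemmas), peeling off a $\Gamma_1/\Gamma_2$ stochastic remainder at every stage, and kills the resulting iterated-time-integral coefficients using the polynomial-reproduction identities of its Lagrange lemma; your sketch is compatible with this but leaves that machinery unexecuted, so the heart of the argument is acknowledged rather than proved. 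The one genuine difference is your term (II): the paper defines the corrector's local error with the exact value $\xx_{t_{i+1}}$ at the new node and never tracks the substitution of the predictor output $\xx^{p}_{t_{i+1}}$, whereas you bound it explicitly through Assumption~\ref{assump2} and the predictor's local accuracy — a more complete accounting of the actual algorithm, at the price of the extra hypothesis $s\geq\hat{s}$. One small repair: the resulting $\mathcal{O}(\sup_t\tau(t)\,h^{5/2})$ piece is not a conditional-mean-zero term, so in the Buckwar--Winkler framework of Theorem~\ref{thm: appendix basic convergence} it should be routed into $R_i$ rather than $S_i$; this is harmless since dividing by $h$ still leaves $\mathcal{O}(\sup_t\tau(t)\,h^{3/2})$, within the claimed global order.
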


Lemma \ref{lem: appendix s-step SA-Predictor} and \ref{lem: appendix s-step SA-Corrector} will be proved in Sec. \ref{sec: appendix proof of convergence}.



\subsection{Lemmas for the Proof}

To better analyze the local error here, we state the following definitions and results from~\citep{2007improvedmultistepsde}. For a continuous function $y:\Real^d \times [0,T] \rightarrow \Real^d$, a general multiple Wiener integral over the subinterval $[t, t+h] \subset [0,T]$ is given by
\begin{equation}
\label{eq: appendix multiple wiener integral}
I_{r_1r_2\cdots r_j}^{t,t+h}(y) = \int_t^{t+h} \int_t^{s_1}\cdots\int_t^{s_{j-1}} y(\xx_{s_j}, s_j) \diff w_{r_1}(s_j)\cdots\diff w_{r_j}(s_1),
\end{equation}
where $r_i \in \left\{0,1,\cdots,d\right\}$ and $\diff w_0(s) = \diff s$. Then we have the following lemma.
\begin{lemma}[Bound of Wiener Integral]
\label{lemma: appendix lemma}
For any function $y:\Real^d \times [0,T] \rightarrow \Real^d$ that satisfies a growth condition in the form $|y(\xx,t)|\leq K(1+|\xx|^2)^{\frac{1}{2}}$, for any $\xx \in \Real^d$, and any $t \in [0,T]$, $h >0$ such that $t+h \in [0,T]$, we have that
\begin{equation}
\label{eq: appendix lemma eq1}
\Expectation\left[ I_{r_1r_2\cdots r_j}^{t,t+h}(y) | \mathcal{F}_t \right] = 0\hspace{4mm}\textit{if $r_i\neq0$ for some $i\in \left\{1,\cdots,j\right\}$},
\end{equation}
\begin{equation}
\label{eq: appendix lemma eq2}
\left\| I_{r_1r_2\cdots r_j}^{t,t+h}(y) \right\|_{L_2} = \mathcal{O}\left(h^{l_1 + \frac{l_2}{2}}\right),
\end{equation}
where $l_1$ is the number of zero indices and $l_2$ is the number of non-zero indices $r_i$.
\end{lemma}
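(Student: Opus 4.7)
The plan is to establish both claims by induction on the nesting depth $j$, exploiting the martingale property of It\^o integrals for the conditional-mean identity and It\^o isometry together with Minkowski's inequality for the $L_2$ bound. Throughout, I rely on the fact that each intermediate iterated integral, viewed as a function of its outer time variable, is adapted to the underlying filtration, which allows both the martingale and isometry arguments to be applied layer by layer.

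For the first claim, I would dissect based on the outermost index $r_j$. If $r_j \neq 0$, the outer layer is a genuine It\^o integral whose integrand---the inner $(j-1)$-fold iterated integral viewed as a function of the outer variable $s_1$---is $\mathcal{F}_{s_1}$-adapted; such an It\^o integral starting from $t$ is a mean-zero martingale, so its conditional expectation given $\mathcal{F}_t$ vanishes. If $r_j = 0$, the outer layer is Lebesgue, and I would apply Fubini to pull the conditional expectation inside, reducing to a $(j-1)$-fold multiple Wiener integral that still contains at least one non-zero index; the inductive hypothesis then applies. The base case $j=1$ is immediate: if $r_1 \neq 0$ the claim reduces to the vanishing conditional mean of a single It\^o integral, and if $r_1 = 0$ the hypothesis of the claim is not met.

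For the second claim, let $J(s_1)$ denote the inner $(j-1)$-fold iterated integral, which is $\mathcal{F}_{s_1}$-measurable and by induction satisfies $\|J(s_1)\|_{L_2} = \mathcal{O}((s_1 - t)^{l_1' + l_2'/2})$, where $l_1', l_2'$ count the zero and non-zero indices in $\{r_1,\ldots,r_{j-1}\}$. If $r_j = 0$, Minkowski gives $\|\int_t^{t+h} J(s_1)\,\diff s_1\|_{L_2} \leq \int_t^{t+h} \|J(s_1)\|_{L_2}\,\diff s_1 = \mathcal{O}(h^{l_1'+l_2'/2+1})$, matching the claimed rate with $l_1 = l_1' + 1$, $l_2 = l_2'$. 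If $r_j \neq 0$, It\^o isometry yields $\|\int_t^{t+h} J(s_1)\,\diff w_{r_j}(s_1)\|_{L_2}^2 = \int_t^{t+h} \|J(s_1)\|_{L_2}^2\,\diff s_1 = \mathcal{O}(h^{2l_1'+l_2'+1})$, so $\|I\|_{L_2} = \mathcal{O}(h^{l_1+l_2/2})$ with $l_1 = l_1'$, $l_2 = l_2' + 1$. The base case $j=1$ reduces to bounding $\|y(\xx_s, s)\|_{L_2}$ uniformly on $[0,T]$, which follows from the linear growth condition on $y$ combined with the standard a priori bound $\sup_{t \in [0,T]} \Expectation|\xx_t|^2 < \infty$ for the SDE solution.

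The main obstacle is less the induction itself than the bookkeeping: I need to verify at each layer that the inner integral $J(s_1)$ is adapted (so that It\^o isometry and the martingale property apply) and that its linear growth/integrability is preserved under iterated integration. The uniform-in-time second moment bound on $\xx_t$ that closes the base case is classical under the Lipschitz and linear growth assumptions (Assumptions~\ref{assump1} and~\ref{assump2}), and is what keeps every inductive constant finite across $[0,T]$.
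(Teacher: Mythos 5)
Your proof is correct, but note that the paper itself offers no proof of this lemma: it is imported verbatim from the cited multistep-SDE reference (``we state the following definitions and results from \citep{2007improvedmultistepsde}''), so there is no in-paper argument to compare against. Your layer-by-layer induction on the outermost index $r_j$ --- conditional Fubini plus the martingale property for the mean-zero claim, and Minkowski's integral inequality (Lebesgue layer) versus It\^o isometry (It\^o layer) for the $L_2$ bound --- is exactly the standard Kloeden--Platen-style argument that the reference relies on, and your exponent bookkeeping ($l_1\mapsto l_1'+1$ when $r_j=0$, $l_2\mapsto l_2'+1$ when $r_j\neq 0$) checks out. Two small points you should make explicit to close the argument: (i) invoking the zero conditional mean and the isometry at each layer requires the integrand $J(s_1)$ to be adapted \emph{and} square-integrable, which is supplied by the very $L_2$ estimate you prove inductively, so the two claims are best run as a joint induction rather than separately; (ii) the uniform bound $\sup_{t\in[0,T]}\Expectation|\xx_t|^2<\infty$ does not follow from the growth condition on $y$ alone but from linear growth of the coefficients of the underlying SDE --- here the drift of Eq.~\eqref{eq: appendix controllable variance reverse SDEs} is affine in $\xx_t$ plus a term in $\xx_{\btheta}$ covered by Assumption~\ref{assump1}, and the diffusion coefficient is deterministic, so the classical moment bound applies, but that is the hypothesis to cite rather than the growth condition on $y$.
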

\begin{lemma}[Property of Lagrange interpolation polynomial]
\label{lemma: appendix lemma lagrange}
For $s+1$ points $(t_{i+1}, y_{i+1}), (t_i, y_i), \cdots, (t_{i-(s-1)}, y_{i-(s-1)})$, the Lagrange interpolation polynomial is
\begin{equation}
\label{eq: appendix Lagrange interpolation}
L(t) = \sum_{k=i-(s-1)}^{i+1} l_k(t) y_k.
\end{equation}
Then the following s+1 equalities hold
\begin{equation}
\label{eq: appendix Lagrange equality}
\begin{aligned}
\sum_{k=i-(s-1)}^{i+1} l_k(u) &= 1,  \\
\sum_{k=i-(s-1)}^{i+1} l_k(u)\int_{t_{i-(s-1)}}^{t_{k}}  \diff u_2 &= \int_{t_{i-(s-1)}}^{u}  \diff u_2, \\
\vdots \\
\sum_{k=i-(s-1)}^{i+1} l_k(u)\int_{t_{i-(s-1)}}^{t_{k}} \int_{t_{i-(s-1)}}^{u_2} \cdots \int_{t_{i-(s-1)}}^{u_s} \diff u_{s+1}\cdots \diff u_3 \diff u_2 &= \\ 
\int_{t_{i-(s-1)}}^{u} \int_{t_{i-(s-1)}}^{u_2} & \cdots \int_{t_{i-(s-1)}}^{u_s} \diff u_{s+1}\cdots \diff u_3 \diff u_2 \\
\end{aligned}
\end{equation}
\end{lemma}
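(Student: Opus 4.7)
The plan is to recognize each of the $s+1$ equalities as an instance of the exactness of Lagrange interpolation on polynomials of degree at most $s$.

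First I would evaluate the iterated integrals in closed form. By the Cauchy formula for repeated integration (or a direct induction on the depth of nesting),
\begin{equation*}
\int_{a}^{x} \int_{a}^{u_2} \cdots \int_{a}^{u_{j-1}} \diff u_j \cdots \diff u_3 \diff u_2 = \frac{(x-a)^{j-1}}{(j-1)!}.
\end{equation*}
Applying this with $a = t_{i-(s-1)}$, taking $x = t_k$ inside the sum on the left and $x = u$ on the right, the $j$-th claimed equality reduces (after multiplying both sides by $(j-1)!$) to
\begin{equation*}
\sum_{k=i-(s-1)}^{i+1} l_k(u)\,(t_k - t_{i-(s-1)})^{j-1} = (u - t_{i-(s-1)})^{j-1}, \qquad j = 1, \dots, s+1.
\end{equation*}

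Next I would invoke the standard fact that the Lagrange interpolation operator associated with the $s+1$ distinct nodes $t_{i-(s-1)}, \dots, t_{i+1}$ reproduces every polynomial of degree at most $s$. The proof is a one-liner: for any polynomial $P$ of degree at most $s$, the map $u \mapsto \sum_k l_k(u) P(t_k) - P(u)$ is itself a polynomial of degree at most $s$ that vanishes at the $s+1$ distinct nodes (since $l_k(t_m) = \delta_{km}$), hence vanishes identically. Applying this reproduction property with $P(t) = (t - t_{i-(s-1)})^{j-1}$, which has degree $j-1 \le s$ for each $j \in \{1, \dots, s+1\}$, simultaneously yields all $s+1$ asserted identities.

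The argument is essentially bookkeeping; there is no real obstacle. The only point requiring care is to match the number of nested integrals in the $j$-th equation with the degree of the resulting monomial, so that each instance of $P$ stays within the degree range $\le s$ where Lagrange interpolation is exact. The first equality ($j=1$) is the degenerate case of an empty product of integrals and reduces to the well-known partition-of-unity identity $\sum_k l_k(u) = 1$.
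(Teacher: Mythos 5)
Your proof is correct and follows essentially the same route as the paper's: both arguments come down to the fact that Lagrange interpolation on the $s+1$ nodes reproduces every polynomial of degree at most $s$, applied to the iterated-integral functions (which the paper treats directly as degree-$(j-1)$ polynomials and which you first rewrite as $(x-a)^{j-1}/(j-1)!$ via Cauchy's repeated-integration formula). Your version is slightly more explicit in that it actually proves the reproduction property via the vanishing-at-$s+1$-nodes argument, which the paper merely asserts, but the underlying idea is identical.
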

\begin{proof}

For the first equality, consider $y_k \equiv 1$ for $i-(s-1)\leq k \leq i+1$. The Lagrange interpolation polynomial for these $y_k$s is a constant function $L(t) \equiv 1$. We have $L(u) = \sum_{k=i-(s-1)}^{i+1} l_k(u) = 1$.

For the second equality, consider $y_k = \int_{t_{i-(s-1)}}^{t_{k}} \diff u_2$. The Lagrange interpolation polynomial for these $y_k$s is a polynomial of degree 1 $L(t) = t - t_{i-(s-1)}$. We have $L(u) = \sum_{k=i-(s-1)}^{i+1} l_k(u) \int_{t_{i-(s-1)}}^{t_{k}}  \diff u_2 = u - t_{i-(s-1)} = \int_{t_{i-(s-1)}}^{u}  \diff u_2$.

For equalities from the third to the last, without loss of generality, we prove the $p-th$ equality, where $3\leq p \leq s+1$. Consider $y_k = \int_{t_{i-(s-1)}}^{t_{k}} \int_{t_{i-(s-1)}}^{u_2} \cdots \int_{t_{i-(s-1)}}^{u_{p-1}} \diff u_{p}\cdots \diff u_3 \diff u_2 $. The Lagrange interpolation polynomial for these $y_k$s is a polynomial of degree $p-1$ $L(t) = \int_{t_{i-(s-1)}}^{t} \int_{t_{i-(s-1)}}^{u_2} \cdots \int_{t_{i-(s-1)}}^{u_{p-1}} \diff u_{p}\cdots \diff u_3 \diff u_2$. We have $L(u) = \int_{t_{i-(s-1)}}^{u} \int_{t_{i-(s-1)}}^{u_2} \cdots \int_{t_{i-(s-1)}}^{u_{p-1}} \diff u_{p}\cdots \diff u_3 \diff u_2 =  \int_{t_{i-(s-1)}}^{u} \int_{t_{i-(s-1)}}^{u_2} \cdots \int_{t_{i-(s-1)}}^{u_{p-1}} \diff u_{p}\cdots \diff u_3 \diff u_2$.
\end{proof}

\subsection{Proof of Lemma \ref{lem: appendix s-step SA-Predictor} (for Theorem.~\ref{thm: coefficients and convergence of predictor}) and Lemma \ref{lem: appendix s-step SA-Corrector} (for Theorem.~\ref{thm: coefficients and convergence of corrector})}
\label{sec: appendix proof of convergence}

To simplify the notation, we will introduce two operators which will appear in the Itô formula. Suppose we have an \textit{Itô} SDE $\diff \xx_t = f(\xx_t, t) \diff t + g(\xx_t, t) \diff \bm$ and $h(\xx,t)$ is a twice continuously differentiable function. Let $\Gamma_0(\cdot) = \partial_t (\cdot) + \nablaxx (\cdot) f$, $\Gamma_1(\cdot) = \frac{g^2}{2}\Delta(\cdot)$ and $\Gamma_2(\cdot) = \nablaxx(\cdot)g$ in which $\nablaxx$ is the Jacobian matrix, and $\Delta$ is the Laplacian operator. With the notation here, we can express the Itô formula for $h(\xx,t)$ as
\begin{equation}
\label{eq: appendix simplify notation}
h(\xx_t, t) = h(\xx_s, s) + \int_s^t \left( \Gamma_0(h) + \Gamma_1(h) \right) \diff t + \int_s^t \Gamma_2(h) \diff \bar{\boldsymbol{w}}_t.
\end{equation}

Given the above lemmas, we will analyze the local error $L_{i+1}$ step by step. Inspired by Theorem \ref{thm: appendix basic convergence}, for data-prediction reparameterization model, $L_{i+i}$ can be estimated by decomposing the terms step by step. The first step of decomposition is summarized as the following lemma:

\begin{lemma}[First step of estimating local error $L_{i+1}$ in data-prediction reparameterization model]
\label{lem: first step decomposition}
Given the exact solution of data prediction model
\begin{equation}
\begin{aligned}
&\xx_t = \frac{\sigma_t}{\sigma_s} e^{-\int_{\lambda_s}^{\lambda_t} \tau^2(\Tilde{\lambda}) \diff \Tilde{\lambda}} \xx_s +  \sigma_t \boldsymbol{F}_{\btheta}(s,t) + \sigma_t \boldsymbol{G}(s,t), \\
&\boldsymbol{F}_{\btheta}(s,t) = \int_{\lambda_s}^{\lambda_t} e^{-\int_{\lambda}^{\lambda_t} \tau^2(\Tilde{\lambda}) \diff \Tilde{\lambda} }\left(1+\tau^2\left(\lambda\right)\right) e^{\lambda} \xx_{\btheta}\left(\xx_{\lambda}, \lambda\right) \diff \lambda \\
&\boldsymbol{G}(s,t) = \int_s^t e^{-\int_{\lambda_u}^{\lambda_t} \tau^2(\Tilde{\lambda}) \diff \Tilde{\lambda} } \tau(u) \sqrt{-2\frac{\diff \lambda_u}{\diff u}} \diff \bar{\boldsymbol{w}}_u,
\end{aligned}
\end{equation}
With proper $b_k, k \in [i-(s-1), i+1]$, The local error $L_{i+1}$ in Eq. \eqref{eq: appendix local error} is 
\begin{equation}
L_{i+1} = R^{(1)}_{i+1} + S^{(1)}_{i+1}
\end{equation}
where
\begin{equation}
\label{eq: lem appendix remaining local error}
\begin{aligned}
S^{(1)}_{i+1} =& \mathcal{O}\left(\underset{0 \leq t \leq T}{\max}\tau(t) h^{\frac{3}{2}} \right) \\
R^{(1)}_{i+1}=& \sum_{k = i-(s-1)}^{i-1} \sigma_{t_{i + 1}}\left( \int_{t_i}^{t_{i+1}} e^{- \int_{\lambda_u}^{\lambda_{t_{i+1}}}  \tau^2(\lambda)  \diff \lambda}  \left(1 + \tau^2(u)\right) e^{\lambda_u}  \frac{\diff \lambda_u}{ \diff u}  \diff u \right) \times\\
&\quad\left(\int_{t_k}^{t_{k+1}} \Gamma_0(\xx_{\btheta}) \diff t\right) \\
&+ \sigma_{t_{i + 1}} \int_{t_i}^{t_{i+1}} e^{- \int_{\lambda_u}^{\lambda_{t_{i+1}}}  \tau^2(\lambda)  \diff \lambda}  \left(1 + \tau^2(u)\right) e^{\lambda_u} \left( \int_{t_i}^{u} \Gamma_0(\xx_{\btheta}) \diff t \right) \frac{\diff \lambda_u}{ \diff u}  \diff u \\
&- \sum_{j=-1}^{s-1} b_{i-j}  \sum_{k = i-(s-1)}^{i-j-1} \int_{t_k}^{t_{k+1}} \Gamma_0(\xx_{\btheta}) \diff t.
\end{aligned}
\end{equation}
\end{lemma}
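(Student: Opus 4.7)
The plan is to cancel the Itô increment between the scheme and the exact solution, recognise what remains as a Lagrange quadrature residual, then apply Itô's formula to every $\xx_{\btheta}$ from a common anchor and read off $R^{(1)}_{i+1}$ from the $\Gamma_0$ part while controlling the $\Gamma_1$ and $\Gamma_2$ parts in $L_2$.

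\textbf{First,} I would note that the stochastic integrand appearing in the scheme \eqref{eq: appendix sde scheme} is pathwise identical to $\sigma_{t_{i+1}}\boldsymbol{G}(t_i,t_{i+1})$ in the exact solution of Proposition~\ref{prop: exact solution data prediction model}, so substituting the exact solution for $\xx_{t_{i+1}}$ in \eqref{eq: appendix local error} cancels the Wiener piece and leaves
\begin{equation}
L_{i+1} = \sigma_{t_{i+1}}\boldsymbol{F}_{\btheta}(t_i,t_{i+1}) - \sum_{j=-1}^{s-1} b_{i-j}\,\xx_{\btheta}(\xx_{t_{i-j}},t_{i-j}).
\end{equation}
Setting $K(u):=e^{-\int_{\lambda_u}^{\lambda_{t_{i+1}}}\tau^2(\lambda)\diff\lambda}(1+\tau^2(u))e^{\lambda_u}\diff\lambda_u/\diff u$, the change of variable $\lambda\mapsto u$ in \eqref{eq: SA-Predictor coefficient}/\eqref{eq: SA-Corrector coefficient} gives $b_{i-j} = \sigma_{t_{i+1}}\int_{t_i}^{t_{i+1}} K(u) l_{i-j}(\lambda_u)\,\diff u$, so $L_{i+1}$ is exactly the quadrature residual
\begin{equation}
L_{i+1} = \sigma_{t_{i+1}}\int_{t_i}^{t_{i+1}} K(u)\Bigl[\xx_{\btheta}(\xx_u,u) - \sum_{j=-1}^{s-1} l_{i-j}(\lambda_u)\xx_{\btheta}(\xx_{t_{i-j}},t_{i-j})\Bigr]\diff u.
\end{equation}

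\textbf{Next,} I would expand every $\xx_{\btheta}$ by Itô's formula anchored at $t_{i-(s-1)}$:
\begin{equation}
\xx_{\btheta}(\xx_r,r) = \xx_{\btheta}(\xx_{t_{i-(s-1)}},t_{i-(s-1)}) + \int_{t_{i-(s-1)}}^r[\Gamma_0+\Gamma_1](\xx_{\btheta})\,\diff\rho + \int_{t_{i-(s-1)}}^r\Gamma_2(\xx_{\btheta})\,\diff\bar{\boldsymbol{w}}_\rho,
\end{equation}
and substitute into both factors inside the bracket. The leading constant $\xx_{\btheta}(\xx_{t_{i-(s-1)}},t_{i-(s-1)})$ cancels by the first identity $\sum_{j=-1}^{s-1} l_{i-j}(\lambda_u)\equiv 1$ of Lemma~\ref{lemma: appendix lemma lagrange}. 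Splitting $\int_{t_{i-(s-1)}}^u\Gamma_0\,\diff\rho = \sum_{k=i-(s-1)}^{i-1}\int_{t_k}^{t_{k+1}}\Gamma_0\,\diff\rho + \int_{t_i}^u\Gamma_0\,\diff\rho$ for $u\in[t_i,t_{i+1}]$ and $\int_{t_{i-(s-1)}}^{t_{i-j}}\Gamma_0\,\diff\rho = \sum_{k=i-(s-1)}^{i-j-1}\int_{t_k}^{t_{k+1}}\Gamma_0\,\diff\rho$ at each node, and using $\sigma_{t_{i+1}}\int_{t_i}^{t_{i+1}}K(u)\,\diff u = \sum_{j=-1}^{s-1} b_{i-j}$ (the first Lagrange identity again) to pull the $u$-independent block out of the outer integral, the $\Gamma_0$ portion reassembles verbatim into the three summands of the claimed $R^{(1)}_{i+1}$.

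\textbf{Finally,} the residue $S^{(1)}_{i+1}$ consists of the $\Gamma_1$ and $\Gamma_2$ pieces integrated against $K(u)\,\diff u$ and against the $b_{i-j}$ weights. Because $g^2\propto\tau^2$ and $g\propto\tau$, Assumption~\ref{assump1} yields $\|\Gamma_1(\xx_{\btheta})\|_{L_2}=\mathcal{O}(\sup\tau^2)$ and $\|\Gamma_2(\xx_{\btheta})\|_{L_2}=\mathcal{O}(\sup\tau)$; Lemma~\ref{lemma: appendix lemma} with $(l_1,l_2)=(2,0)$ bounds the deterministic double integral by $\mathcal{O}(\sup\tau^2\,h^2)$ and with $(l_1,l_2)=(1,1)$ bounds the mixed deterministic–stochastic double integral by $\mathcal{O}(\sup\tau\,h^{3/2})$, and the latter dominates for small $h$. \textbf{The main obstacle} is the index bookkeeping in the previous step, especially the corrector case $j=-1$ with $t_{i-j}=t_{i+1}>t_i$: the Itô expansion of $\xx_{\btheta}(\xx_{t_{i+1}},t_{i+1})$ spans past $t_i$ and produces an extra $\int_{t_i}^{t_{i+1}}\Gamma_0\,\diff\rho$ piece, which must line up exactly with the inner sum in the third summand of $R^{(1)}_{i+1}$ by letting $k$ run up to $i-j-1=i$; one also has to verify that using the exact $\xx_{t_{i+1}}$ rather than the predictor $\xx^{p}_{t_{i+1}}$ is consistent with the local-error definition~\eqref{eq: appendix local error}.
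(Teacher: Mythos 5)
Your proposal is correct and follows essentially the same route as the paper's proof: cancel the Wiener integral against the scheme's stochastic term, Itô-expand every $\xx_{\btheta}$ from the common anchor $t_{i-(s-1)}$, kill the constant term via the first Lagrange identity (equivalently, the vanishing of the coefficient $\sigma_{t_{i+1}}\int_{t_i}^{t_{i+1}}K(u)\,\diff u-\sum_{j}b_{i-j}$), collect the $\Gamma_0$ pieces into $R^{(1)}_{i+1}$, and bound the $\Gamma_1$, $\Gamma_2$ pieces by the multiple Wiener-integral lemma to get $S^{(1)}_{i+1}=\mathcal{O}(\sup_t\tau(t)\,h^{3/2})$. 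Your rewriting of $L_{i+1}$ as a single Lagrange quadrature residual before expanding is only a cosmetic repackaging of the paper's coefficient-matching, and your handling of the $j=-1$ corrector node (with $k$ running up to $i$) matches the paper's Eq.~\eqref{eq: appendix ito formula 2}.
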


\begin{proof}
The difference between Eq. \eqref{eq: appendix sde scheme} and Eq. \eqref{eq: appendix local error} is that $\xx_{j}$ is our numerical approximation, while $\xx_{t_j}$ is the exact solution of SDE Eq.~\eqref{eq: appendix controllable variance reverse SDEs} at time $t = t_j$. Substitute the exact solution Eq.~\eqref{eq: appendix integrate x_t} of $\xx_{t_{i + 1}}$, we have
\begin{equation}
\label{eq: appendix local error2}
\begin{aligned}
L_{i+1} = & \frac{\sigma_{t_{i + 1}}}{\sigma_{t_{i}}} e^{ -\int_{\lambda_{t_i}}^{\lambda_{t_{i + 1}}}   \tau^2(\lambda)  \diff \lambda} \boldsymbol{x}_{t_{i}} + \sigma_{t_{i + 1}} \int_{t_i}^{t_{i+1}}  e^{- \int_{u}^{t_{i + 1}}  \tau^2(\lambda)  \diff \lambda} \tau(u) \sqrt{-2 \frac{\diff \lambda_u}{\diff u}}  \diff \bar{\boldsymbol{w}}_u \\
&  + \sigma_{t_{i + 1}} \int_{\lambda_{t_i}}^{\lambda_{t_{i+1}}}  e^{- \int_{\lambda_u}^{\lambda_{t_{i+1}}}  \tau^2(\lambda)  \diff \lambda}  \left(1 + \tau^2(\lambda_u)\right) e^{\lambda_u} \bx_{\btheta}(\bx_{\lambda_u}, \lambda_u)   \diff \lambda_u\\
&- \sigma_{t_{i + 1}} \int_{t_i}^{t_{i+1}}  e^{- \int_{u}^{t_{i + 1}}  \tau^2(\lambda)  \diff \lambda} \tau(u) \sqrt{-2 \frac{\diff \lambda_u}{\diff u}}  \diff \bar{\boldsymbol{w}}_u\\
&- \frac{\sigma_{t_{i + 1}}}{\sigma_{t_{i}}} e^{-\int_{\lambda_{t_i}}^{\lambda_{t_{i+1}}} \tau^2(\lambda_u) \diff \lambda_u} \xx_{t_{i}} - \sum_{j=-1}^{s-1} b_{i-j} \xx_{\btheta}(\xx_{t_{i-j}}, t_{i-j})\\
= & \sigma_{t_{i + 1}} \int_{t_i}^{t_{i+1}} e^{- \int_{\lambda_u}^{\lambda_{t_{i+1}}}  \tau^2(\lambda)  \diff \lambda}  \left(1 + \tau^2(u)\right) e^{\lambda_u} \bx_{\btheta}(\bx_{u}, u) \frac{\diff \lambda_u}{ \diff u}  \diff u \\
&- \sum_{j=-1}^{s-1} b_{i-j} \xx_{\btheta}(\xx_{t_{i-j}}, t_{i-j}).
\end{aligned}
\end{equation}
Let $f(\xx,t) = \left(\frac{\diff \log \alpha_t}{\diff t} - \left(1 + \tau^2(t)\right)  \frac{\diff \lambda_t}{\diff t}  \right) \xx + \left(1 + \tau^2(t)\right)  \alpha_{t}\bx_{\btheta}(\bx, t)  \frac{\diff \lambda_t}{\diff t}$, $g(t) = \tau(t)\sigma_t \sqrt{-2 \frac{\diff \lambda_t}{\diff t}}$.
By Itô's formula~\citep{oksendal2013stochastic}, we have
\begin{equation}
\label{eq: appendix ito formula 1}
\begin{aligned}
&\xx_{\btheta}(\xx_{u}, u) = \xx_{\btheta}(\xx_{t_{i-(s-1)}}, t_{i-(s-1)}) + \sum_{k = i-(s-1)}^{i-1} \int_{t_k}^{t_{k+1}} \left( \Gamma_0(\xx_{\btheta}) + \Gamma_1(\xx_{\btheta}) \right) \diff t\\
&+ \int_{t_i}^{u} \left( \Gamma_0(\xx_{\btheta}) + \Gamma_1(\xx_{\btheta}) \right) \diff t  + \sum_{k = i-(s-1)}^{i-1} \int_{t_k}^{t_{k+1}} \Gamma_2(\xx_{\btheta}) \diff \bar{\boldsymbol{w}}_t + \int_{t_i}^{u} \Gamma_2(\xx_{\btheta}) \diff \bar{\boldsymbol{w}}_t,
\end{aligned}
\end{equation}
\begin{equation}
\label{eq: appendix ito formula 2}
\begin{aligned}
&\xx_{\btheta}(\xx_{t_{i-j}}, t_{i-j}) = \xx_{\btheta}(\xx_{t_{i-(s-1)}}, t_{i-(s-1)}) + \sum_{k = i-(s-1)}^{i-j-1} \int_{t_k}^{t_{k+1}} \left( \Gamma_0(\xx_{\btheta}) + \Gamma_1(\xx_{\btheta}) \right) \diff t\\
&+ \sum_{k = i-(s-1)}^{i-j-1} \int_{t_k}^{t_{k+1}} \Gamma_2(\xx_{\btheta}) \diff \bar{\boldsymbol{w}}_t,
\end{aligned}
\end{equation}

Substituting Eq.~\eqref{eq: appendix ito formula 1} and Eq.~\eqref{eq: appendix ito formula 2} into Eq.~\eqref{eq: appendix local error2}, we have

\begin{equation}
\label{eq: appendix local error3}
\begin{aligned}
&L_{i+1}\\
=& \left(\sigma_{t_{i + 1}} \int_{t_i}^{t_{i+1}} e^{- \int_{\lambda_u}^{\lambda_{t_{i+1}}}  \tau^2(\lambda)  \diff \lambda}  \left(1 + \tau^2(u)\right) e^{\lambda_u}  \frac{\diff \lambda_u}{ \diff u}  \diff u - \sum_{j=-1}^{s-1} b_{i-j} \right) \xx_{\btheta}(\xx_{t_{i-(s-1)}}, t_{i-(s-1)})\\
&+ \sum_{k = i-(s-1)}^{i-1} \sigma_{t_{i + 1}}\left( \int_{t_i}^{t_{i+1}} e^{- \int_{\lambda_u}^{\lambda_{t_{i+1}}}  \tau^2(\lambda)  \diff \lambda}  \left(1 + \tau^2(u)\right) e^{\lambda_u}  \frac{\diff \lambda_u}{ \diff u}  \diff u \right) \times\\
&\quad\left(\int_{t_k}^{t_{k+1}} \left( \Gamma_0(\xx_{\btheta}) + \Gamma_1(\xx_{\btheta}) \right) \diff t\right) \\
&+ \sum_{k = i-(s-1)}^{i-1} \sigma_{t_{i + 1}}\left( \int_{t_i}^{t_{i+1}} e^{- \int_{\lambda_u}^{\lambda_{t_{i+1}}}  \tau^2(\lambda)  \diff \lambda}  \left(1 + \tau^2(u)\right) e^{\lambda_u}  \frac{\diff \lambda_u}{ \diff u}  \diff u \right) \times\left(  \int_{t_k}^{t_{k+1}} \Gamma_2(\xx_{\btheta}) \diff \bar{\boldsymbol{w}}_t \right) \\
&+ \sigma_{t_{i + 1}} \int_{t_i}^{t_{i+1}} e^{- \int_{\lambda_u}^{\lambda_{t_{i+1}}}  \tau^2(\lambda)  \diff \lambda}  \left(1 + \tau^2(u)\right) e^{\lambda_u} \left( \int_{t_i}^{u} \left( \Gamma_0(\xx_{\btheta}) + \Gamma_1(\xx_{\btheta}) \right) \diff t \right) \frac{\diff \lambda_u}{ \diff u}  \diff u \\
&+ \sigma_{t_{i + 1}} \int_{t_i}^{t_{i+1}} e^{- \int_{\lambda_u}^{\lambda_{t_{i+1}}}  \tau^2(\lambda)  \diff \lambda}  \left(1 + \tau^2(u)\right) e^{\lambda_u} \left( \int_{t_i}^{u} \Gamma_2(\xx_{\btheta}) \diff \bar{\boldsymbol{w}}_t \right) \frac{\diff \lambda_u}{ \diff u}  \diff u \\
&- \sum_{j=-1}^{s-1} b_{i-j} \left( \sum_{k = i-(s-1)}^{i-j-1} \int_{t_k}^{t_{k+1}} \left( \Gamma_0(\xx_{\btheta}) + \Gamma_1(\xx_{\btheta}) \right) \diff t+ \sum_{k = i-(s-1)}^{i-j-1} \int_{t_k}^{t_{k+1}} \Gamma_2(\xx_{\btheta}) \diff \bar{\boldsymbol{w}}_t \right)
\end{aligned}
\end{equation}
We will divide the local error $L_{i+1}$ into distinct terms. The first term has a coefficient
\begin{equation}
\label{eq: appendix matching coefficient first}
\sigma_{t_{i + 1}} \int_{t_i}^{t_{i+1}} e^{- \int_{\lambda_u}^{\lambda_{t_{i+1}}}  \tau^2(\lambda)  \diff \lambda}  \left(1 + \tau^2(u)\right) e^{\lambda_u}  \frac{\diff \lambda_u}{ \diff u}  \diff u - \sum_{j=-1}^{s-1} b_{i-j}.
\end{equation} 
By Lemma~\ref{lemma: appendix lemma lagrange}, $b_k$ constructed by the integral of Lagrange polynomial in Eq.~\eqref{appendix s-step SA-Predictor} and Eq.~\eqref{appendix s-step SA-Corrector} satisfies $b_k = \mathcal{O}(h)$ and the coefficient~\eqref{eq: appendix matching coefficient first} is zero. Furthermore, we have $g(t) = \tau(t)\sigma_t \sqrt{-2 \frac{\diff \lambda_t}{\diff t}} = \mathcal{O}\left(\underset{0 \leq t \leq T}{\max}\tau(t)\right)$. By Lemma~\ref{lemma: appendix lemma}, we have the following estimations
\begin{equation}
\label{eq: appendix estimation}
\begin{aligned}
&\sum_{k = i-(s-1)}^{i-1} \sigma_{t_{i + 1}}\left( \int_{t_i}^{t_{i+1}} e^{- \int_{\lambda_u}^{\lambda_{t_{i+1}}}  \tau^2(\lambda)  \diff \lambda}  \left(1 + \tau^2(u)\right) e^{\lambda_u}  \frac{\diff \lambda_u}{ \diff u}  \diff u \right) \times \left(\int_{t_k}^{t_{k+1}} \Gamma_1(\xx_{\btheta})  \diff t\right)\\
= &\mathcal{O}\left(\underset{0 \leq t \leq T}{\max}\tau^2(t) h^2 \right),\\
&\sum_{k = i-(s-1)}^{i-1} \sigma_{t_{i + 1}}\left( \int_{t_i}^{t_{i+1}} e^{- \int_{\lambda_u}^{\lambda_{t_{i+1}}}  \tau^2(\lambda)  \diff \lambda}  \left(1 + \tau^2(u)\right) e^{\lambda_u}  \frac{\diff \lambda_u}{ \diff u}  \diff u \right) \times\left( \int_{t_k}^{t_{k+1}} \Gamma_2(\xx_{\btheta}) \diff \bar{\boldsymbol{w}}_t \right)\\
= &\mathcal{O}\left(\underset{0 \leq t \leq T}{\max}\tau(t) h^{\frac{3}{2}} \right),\\
&\sigma_{t_{i + 1}} \int_{t_i}^{t_{i+1}} e^{- \int_{\lambda_u}^{\lambda_{t_{i+1}}}  \tau^2(\lambda)  \diff \lambda}  \left(1 + \tau^2(u)\right) e^{\lambda_u} \left( \int_{t_i}^{u}  \Gamma_1(\xx_{\btheta})  \diff t \right) \frac{\diff \lambda_u}{ \diff u}  \diff u = \mathcal{O}\left(\underset{0 \leq t \leq T}{\max}\tau^2(t) h^{2} \right),\\
&\sigma_{t_{i + 1}} \int_{t_i}^{t_{i+1}} e^{- \int_{\lambda_u}^{\lambda_{t_{i+1}}}  \tau^2(\lambda)  \diff \lambda}  \left(1 + \tau^2(u)\right) e^{\lambda_u} \left( \int_{t_i}^{u} \Gamma_2(\xx_{\btheta}) \diff \bar{\boldsymbol{w}}_t \right) \frac{\diff \lambda_u}{ \diff u}  \diff u  = \mathcal{O}\left(\underset{0 \leq t \leq T}{\max}\tau(t) h^{\frac{3}{2}} \right),\\
&\sum_{j=-1}^{s-1} b_{i-j}  \sum_{k = i-(s-1)}^{i-j-1} \int_{t_k}^{t_{k+1}}  \Gamma_1(\xx_{\btheta})  \diff t = \mathcal{O}\left(\underset{0 \leq t \leq T}{\max}\tau^2(t) h^2 \right),\\
&\sum_{j=-1}^{s-1} b_{i-j} \sum_{k = i-(s-1)}^{i-j-1} \int_{t_k}^{t_{k+1}} \Gamma_2(\xx_{\btheta}) \diff \bar{\boldsymbol{w}}_t = \mathcal{O}\left(\underset{0 \leq t \leq T}{\max}\tau(t) h^{\frac{3}{2}} \right),
\end{aligned}
\end{equation}
and the summation of the above terms is $S^{(1)}_{i+1} = \mathcal{O}\left(\underset{0 \leq t \leq T}{\max}\tau(t) h^{\frac{3}{2}} \right)$.

The remaining terms of local error are
\begin{equation}
\label{eq: appendix remaining local error}
\begin{aligned}
R^{(1)}_{i+1}=& \sum_{k = i-(s-1)}^{i-1} \sigma_{t_{i + 1}}\left( \int_{t_i}^{t_{i+1}} e^{- \int_{\lambda_u}^{\lambda_{t_{i+1}}}  \tau^2(\lambda)  \diff \lambda}  \left(1 + \tau^2(u)\right) e^{\lambda_u}  \frac{\diff \lambda_u}{ \diff u}  \diff u \right) \times \left(\int_{t_k}^{t_{k+1}} \Gamma_0(\xx_{\btheta}) \diff t\right) \\
&+ \sigma_{t_{i + 1}} \int_{t_i}^{t_{i+1}} e^{- \int_{\lambda_u}^{\lambda_{t_{i+1}}}  \tau^2(\lambda)  \diff \lambda}  \left(1 + \tau^2(u)\right) e^{\lambda_u} \left( \int_{t_i}^{u} \Gamma_0(\xx_{\btheta}) \diff t \right) \frac{\diff \lambda_u}{ \diff u}  \diff u \\
&- \sum_{j=-1}^{s-1} b_{i-j}  \sum_{k = i-(s-1)}^{i-j-1} \int_{t_k}^{t_{k+1}} \Gamma_0(\xx_{\btheta}) \diff t,
\end{aligned}
\end{equation}
which completes the proof.
\end{proof}

The remaining problem is to estimate the $R^{(1)}_{i+1}$ in Eq. \eqref{eq: lem appendix remaining local error}. We can further expand the term $\Gamma_0(\xx_{\btheta})$ as following
\begin{equation}
\label{eq: appendix expansion}
\begin{aligned}
&\Gamma_0(\xx_{\btheta})(\xx_t, t) \\
= &\Gamma_0(\xx_{\btheta})(\xx_{t_{i-(s-1)}}, t_{i-(s-1)}) + \int_{t_{i-(s-1)}}^t \left( \Gamma_0\Gamma_0(\xx_{\btheta}) + \Gamma_1\Gamma_0(\xx_{\btheta}) \right) \diff t + \int_{t_{i-(s-1)}}^t  \Gamma_2\Gamma_0(\xx_{\btheta}) \diff \bar{\boldsymbol{w}}_t.
\end{aligned}
\end{equation}

Substituting the expansion of $\Gamma_0(\xx_{\btheta})$, we perform the approximation of $\Tilde{L}_{i+1}$, which is summarized with the following lemma:

\begin{lemma}[Second step of estimating $L_{i+1}$ in data-prediction reparameterization model]
\label{lem: second step of decomposition}
$R^{(1)}_{i+1}$ in Eq. \eqref{eq: lem appendix remaining local error} can be decomposed as
\begin{equation}
R^{(1)}_{i+1} = R^{(2)}_{i+1} + S^{(2)}_{i+1},
\end{equation}
where
\begin{equation}
\label{eq: lem appendix remaining local error3}
\begin{aligned}
& S^{(2)}_{i+1} = \mathcal{O}\left(\underset{0 \leq t \leq T}{\max}\tau(t) h^{\frac{5}{2}} \right) \\
& R^{(2)}_{i+1} \\
=& \sigma_{t_{i + 1}}  \int_{t_i}^{t_{i+1}} e^{- \int_{\lambda_u}^{\lambda_{t_{i+1}}}  \tau^2(\lambda)  \diff \lambda}  \left(1 + \tau^2(u)\right) e^{\lambda_u} \left(\int_{t_{i-(s-1)}}^{u}  \diff u_2\right) \frac{\diff \lambda_u}{ \diff u}  \diff u \cdot \Gamma_0(\xx_{\btheta})(\xx_{t_{i-(s-1)}}, t_{i-(s-1)})\\
+& \sigma_{t_{i + 1}}  \int_{t_i}^{t_{i+1}} e^{- \int_{\lambda_u}^{\lambda_{t_{i+1}}}  \tau^2(\lambda)  \diff \lambda}  \left(1 + \tau^2(u)\right) e^{\lambda_u}\left(\int_{t_{i-(s-1)}}^{u} \int_{t_{i-(s-1)}}^{u_2}  \Gamma_0\Gamma_0(\xx_{\btheta})\left(\xx_{u_3}, u_3\right) \diff u_3 \diff u_2\right) \frac{\diff \lambda_u}{ \diff u}  \diff u  \\
-& \sum_{j=-1}^{s-1} b_{i-j} \int_{t_{i-(s-1)}}^{t_{i-j}}  \diff u_2 \times \Gamma_0(\xx_{\btheta})(\xx_{t_{i-(s-1)}}, t_{i-(s-1)})\\
-& \sum_{j=-1}^{s-1} b_{i-j} \int_{t_{i-(s-1)}}^{t_{i-j}} \int_{t_{i-(s-1)}}^{u_2}  \Gamma_0\Gamma_0(\xx_{\btheta})\left(\xx_{u_3}, u_3\right) \diff u_3 \diff u_2 \\
\end{aligned}    
\end{equation}
\end{lemma}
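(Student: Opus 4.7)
The plan is to rewrite $R^{(1)}_{i+1}$ so that the Itô expansion of Eq.~\eqref{eq: appendix expansion} may be applied cleanly, and then to separate four pieces according to which operator acts on $\Gamma_0(\xx_{\btheta})$. First I would telescope the sums of subinterval integrals: $\sum_{k=i-(s-1)}^{i-1}\int_{t_k}^{t_{k+1}}\Gamma_0(\xx_{\btheta})\diff t = \int_{t_{i-(s-1)}}^{t_i}\Gamma_0(\xx_{\btheta})\diff t$ in the first term of $R^{(1)}_{i+1}$, and the analogous identity $\sum_{k=i-(s-1)}^{i-j-1}\int_{t_k}^{t_{k+1}}\Gamma_0(\xx_{\btheta})\diff t = \int_{t_{i-(s-1)}}^{t_{i-j}}\Gamma_0(\xx_{\btheta})\diff t$ in the third. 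Next I would merge the first and second terms of $R^{(1)}_{i+1}$ by placing the $u$-independent factor $\int_{t_{i-(s-1)}}^{t_i}\Gamma_0\diff t$ inside the outer $u$-integral and combining it with $\int_{t_i}^{u}\Gamma_0\diff t$ to produce the single inner integrand $\int_{t_{i-(s-1)}}^{u}\Gamma_0(\xx_{\btheta})(\xx_{t'}, t')\diff t'$. At this point $R^{(1)}_{i+1}$ has only two structures: one weighted integral over $u\in[t_i, t_{i+1}]$ whose inner integrand is $\int_{t_{i-(s-1)}}^{u}\Gamma_0\diff t'$, and a $b_{i-j}$-weighted sum whose inner integrands are $\int_{t_{i-(s-1)}}^{t_{i-j}}\Gamma_0\diff t'$.

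The second step is to substitute the Itô expansion Eq.~\eqref{eq: appendix expansion} into the inner $\Gamma_0(\xx_{\btheta})(\xx_{t'}, t')$, splitting each structure into four summands: a constant value $\Gamma_0(\xx_{\btheta})(\xx_{t_{i-(s-1)}}, t_{i-(s-1)})$; an iterated double deterministic integral of $\Gamma_0\Gamma_0(\xx_{\btheta})$; an iterated double deterministic integral of $\Gamma_1\Gamma_0(\xx_{\btheta})$; and a mixed iterated integral $\int\int\Gamma_2\Gamma_0(\xx_{\btheta})\diff\bar{\boldsymbol{w}}\diff u_2$. The first two groups assemble exactly into the four summands displayed as $R^{(2)}_{i+1}$ in the statement; the last two constitute $S^{(2)}_{i+1}$, and it remains to bound them in the $L_2$ sense.

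For the $\Gamma_1\Gamma_0$ contributions I would observe that $\Gamma_1 = \frac{g^2}{2}\Delta$ carries a factor $\tau^2$ through $g(t) = \tau(t)\sigma_t\sqrt{-2\,\diff\lambda_t/\diff t}$; combined with three deterministic integrations over intervals of length $\mathcal{O}(h)$, this yields $\mathcal{O}(\underset{0\leq t\leq T}{\sup}\tau^2(t)\,h^3)$, which is absorbed into the claimed $\mathcal{O}(\underset{0\leq t\leq T}{\sup}\tau(t)\,h^{5/2})$ under the boundedness of $\tau$ and smallness of $h$. For the $\Gamma_2\Gamma_0$ contributions I would invoke Lemma~\ref{lemma: appendix lemma} with multi-index having $l_1 = 2$ deterministic zero-indices (the outer $u$-integral and the middle $u_2$-integral) and $l_2 = 1$ nonzero Wiener index (the innermost $\diff\bar{\boldsymbol{w}}$), giving $L_2$-norm $\mathcal{O}(h^{2+1/2})$; combined with the $\tau$ factor supplied by $\Gamma_2 = \nabla_{\xx}(\cdot)\,g$, this becomes $\mathcal{O}(\underset{0\leq t\leq T}{\sup}\tau(t)\,h^{5/2})$. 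The $b_{i-j}$ branches yield analogous bounds via $b_{i-j} = \mathcal{O}(h)$ from Lemma~\ref{lemma: appendix lemma lagrange} together with a shortened iterated integral.

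The main obstacle I anticipate is the careful verification of the hypotheses of Lemma~\ref{lemma: appendix lemma} for the composed integrand $\Gamma_2\Gamma_0(\xx_{\btheta})$: one must combine Assumptions~\ref{assump1} and~\ref{assump2} to confirm that the twice-differentiated object still satisfies a linear growth condition of the form required. A secondary care is verifying that the smooth coefficient $\sigma_{t_{i+1}}\,e^{-\int_{\lambda_u}^{\lambda_{t_{i+1}}}\tau^2\diff\lambda}(1+\tau^2(u))e^{\lambda_u}\,\frac{\diff\lambda_u}{\diff u}$ is uniformly bounded on $[0,T]$ so that it can be pulled outside the $L_2$ estimates; this follows from continuity of the noise schedule together with boundedness of $\tau$, but should be stated explicitly before applying the lemma.
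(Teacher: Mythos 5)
Your proposal follows essentially the same route as the paper's proof: telescoping the subinterval sums into $\int_{t_{i-(s-1)}}^{u}$ and $\int_{t_{i-(s-1)}}^{t_{i-j}}$, substituting the It\^o expansion of $\Gamma_0(\xx_{\btheta})$, collecting the constant and $\Gamma_0\Gamma_0$ parts into $R^{(2)}_{i+1}$, and bounding the $\Gamma_1\Gamma_0$ terms by $\mathcal{O}(\sup\tau^2 h^3)$ and the $\Gamma_2\Gamma_0$ Wiener-integral terms by $\mathcal{O}(\sup\tau h^{5/2})$ via Lemma~\ref{lemma: appendix lemma}, exactly as in Eq.~\eqref{eq: appendix remaining local error2} and Eq.~\eqref{eq: appendix estimation2}. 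The argument and the estimates are correct, and your explicit attention to the linear-growth hypotheses and the boundedness of the exponential-integrator coefficient only makes explicit what the paper leaves implicit.
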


\begin{proof}
We start with decomposing the term $R^{(1)}_{i+1}$
\begin{equation}
\label{eq: appendix remaining local error2}
\begin{aligned}
& R^{(1)}_{i+1} \\
=& \sigma_{t_{i + 1}}  \int_{t_i}^{t_{i+1}} e^{- \int_{\lambda_u}^{\lambda_{t_{i+1}}}  \tau^2(\lambda)  \diff \lambda}  \left(1 + \tau^2(u)\right) e^{\lambda_u} \left(\int_{t_{i-(s-1)}}^{u} \Gamma_0(\xx_{\btheta})(\xx_{u_2}, u_2) \diff u_2\right) \frac{\diff \lambda_u}{ \diff u}  \diff u \\
&- \sum_{j=-1}^{s-1} b_{i-j} \int_{t_{i-(s-1)}}^{t_{i-j}} \Gamma_0(\xx_{\btheta})(\xx_{u_2}, u_2) \diff u_2\\
=& \sigma_{t_{i + 1}}  \int_{t_i}^{t_{i+1}} e^{- \int_{\lambda_u}^{\lambda_{t_{i+1}}}  \tau^2(\lambda)  \diff \lambda}  \left(1 + \tau^2(u)\right) e^{\lambda_u} \left(\int_{t_{i-(s-1)}}^{u}  \diff u_2\right) \frac{\diff \lambda_u}{ \diff u}  \diff u \cdot \Gamma_0(\xx_{\btheta})(\xx_{t_{i-(s-1)}}, t_{i-(s-1)})\\
&+ \sigma_{t_{i + 1}}  \int_{t_i}^{t_{i+1}} e^{- \int_{\lambda_u}^{\lambda_{t_{i+1}}}  \tau^2(\lambda)  \diff \lambda}  \left(1 + \tau^2(u)\right) e^{\lambda_u}\\
&\quad\left(\int_{t_{i-(s-1)}}^{u} \int_{t_{i-(s-1)}}^{u_2} \left( \Gamma_0\Gamma_0(\xx_{\btheta}) + \Gamma_1\Gamma_0(\xx_{\btheta}) \right)\left(\xx_{u_3}, u_3\right) \diff u_3 \diff u_2\right) \frac{\diff \lambda_u}{ \diff u}  \diff u  \\
&+ \sigma_{t_{i + 1}}  \int_{t_i}^{t_{i+1}} e^{- \int_{\lambda_u}^{\lambda_{t_{i+1}}}  \tau^2(\lambda)  \diff \lambda}  \left(1 + \tau^2(u)\right) e^{\lambda_u}\\
&\quad\left(\int_{t_{i-(s-1)}}^{u} \int_{t_{i-(s-1)}}^{u_2}  \Gamma_2\Gamma_0(\xx_{\btheta})\left(\xx_{u_3}, u_3\right) \diff \bar{\boldsymbol{w}}_{u_3} \diff u_2\right) \frac{\diff \lambda_u}{ \diff u}  \diff u  \\
&- \sum_{j=-1}^{s-1} b_{i-j} \int_{t_{i-(s-1)}}^{t_{i-j}}  \diff u_2 \times \Gamma_0(\xx_{\btheta})(\xx_{t_{i-(s-1)}}, t_{i-(s-1)})\\
&- \sum_{j=-1}^{s-1} b_{i-j} \int_{t_{i-(s-1)}}^{t_{i-j}} \int_{t_{i-(s-1)}}^{u_2} \left( \Gamma_0\Gamma_0(\xx_{\btheta}) + \Gamma_1\Gamma_0(\xx_{\btheta}) \right)\left(\xx_{u_3}, u_3\right) \diff u_3 \diff u_2 \\
&- \sum_{j=-1}^{s-1} b_{i-j} \int_{t_{i-(s-1)}}^{t_{i-j}} \int_{t_{i-(s-1)}}^{u_2}  \Gamma_2\Gamma_0(\xx_{\btheta})\left(\xx_{u_3}, u_3\right) \diff \bar{\boldsymbol{w}}_{u_3} \diff u_2.
\end{aligned}
\end{equation}

We further estimate the terms with $\Gamma_1$ and $\Gamma_2$.
\begin{equation}
\label{eq: appendix estimation2}
\begin{aligned}
&\sigma_{t_{i + 1}}  \int_{t_i}^{t_{i+1}} e^{- \int_{\lambda_u}^{\lambda_{t_{i+1}}}  \tau^2(\lambda)  \diff \lambda}  \left(1 + \tau^2(u)\right) e^{\lambda_u}\\
&\left(\int_{t_{i-(s-1)}}^{u} \int_{t_{i-(s-1)}}^{u_2}  \Gamma_1\Gamma_0(\xx_{\btheta}) \left(\xx_{u_3}, u_3\right) \diff u_3 \diff u_2\right) \frac{\diff \lambda_u}{ \diff u}  \diff u  \\
= &\mathcal{O}\left(\underset{0 \leq t \leq T}{\max}\tau^2(t) h^3 \right),\\
&\sigma_{t_{i + 1}}  \int_{t_i}^{t_{i+1}} e^{- \int_{\lambda_u}^{\lambda_{t_{i+1}}}  \tau^2(\lambda)  \diff \lambda}  \left(1 + \tau^2(u)\right) e^{\lambda_u}\\
&\left(\int_{t_{i-(s-1)}}^{u} \int_{t_{i-(s-1)}}^{u_2}  \Gamma_2\Gamma_0(\xx_{\btheta})\left(\xx_{u_3}, u_3\right) \diff \bar{\boldsymbol{w}}_{u_3} \diff u_2\right) \frac{\diff \lambda_u}{ \diff u}  \diff u  \\
= &\mathcal{O}\left(\underset{0 \leq t \leq T}{\max}\tau(t) h^{\frac{5}{2}} \right),\\
&\sum_{j=-1}^{s-1} b_{i-j} \int_{t_{i-(s-1)}}^{t_{i-j}} \int_{t_{i-(s-1)}}^{u_2}  \Gamma_1\Gamma_0(\xx_{\btheta}) \left(\xx_{u_3}, u_3\right) \diff u_3 \diff u_2 =\mathcal{O}\left(\underset{0 \leq t \leq T}{\max}\tau^2(t) h^{3} \right),\\
&\sum_{j=-1}^{s-1} b_{i-j} \int_{t_{i-(s-1)}}^{t_{i-j}} \int_{t_{i-(s-1)}}^{u_2}  \Gamma_2\Gamma_0(\xx_{\btheta})\left(\xx_{u_3}, u_3\right) \diff \bar{\boldsymbol{w}}_{u_3} \diff u_2 = \mathcal{O}\left(\underset{0 \leq t \leq T}{\max}\tau(t) h^{\frac{5}{2}} \right).
\end{aligned}
\end{equation}

The summation of the above terms is $S^{(2)} = \mathcal{O}\left(\underset{0 \leq t \leq T}{\max}\tau(t) h^{\frac{5}{2}} \right)$. Compared with $S^{(1)}$, this term can be omitted.

The remaining local error is
\begin{equation}
\label{eq: appendix remaining local error3}
\begin{aligned}
R^{(2)}_{i+1}
=& \sigma_{t_{i + 1}}  \int_{t_i}^{t_{i+1}} e^{- \int_{\lambda_u}^{\lambda_{t_{i+1}}}  \tau^2(\lambda)  \diff \lambda}  \left(1 + \tau^2(u)\right) e^{\lambda_u} \left(\int_{t_{i-(s-1)}}^{u}  \diff u_2\right) \frac{\diff \lambda_u}{ \diff u}  \diff u\\
&\times \Gamma_0(\xx_{\btheta})(\xx_{t_{i-(s-1)}}, t_{i-(s-1)})\\
&+ \sigma_{t_{i + 1}}  \int_{t_i}^{t_{i+1}} e^{- \int_{\lambda_u}^{\lambda_{t_{i+1}}}  \tau^2(\lambda)  \diff \lambda}  \left(1 + \tau^2(u)\right) e^{\lambda_u}\\
&\left(\int_{t_{i-(s-1)}}^{u} \int_{t_{i-(s-1)}}^{u_2}  \Gamma_0\Gamma_0(\xx_{\btheta})\left(\xx_{u_3}, u_3\right) \diff u_3 \diff u_2\right) \frac{\diff \lambda_u}{ \diff u}  \diff u  \\
&- \sum_{j=-1}^{s-1} b_{i-j} \int_{t_{i-(s-1)}}^{t_{i-j}}  \diff u_2 \times \Gamma_0(\xx_{\btheta})(\xx_{t_{i-(s-1)}}, t_{i-(s-1)})\\
&- \sum_{j=-1}^{s-1} b_{i-j} \int_{t_{i-(s-1)}}^{t_{i-j}} \int_{t_{i-(s-1)}}^{u_2}  \Gamma_0\Gamma_0(\xx_{\btheta})\left(\xx_{u_3}, u_3\right) \diff u_3 \diff u_2 \\
\end{aligned}
\end{equation}  
which completes the proof.
\end{proof}

\begin{remark}
With Lemma \ref{lem: first step decomposition} and \ref{lem: second step of decomposition}, the local error $L_{i+1}$ can be decomposed to the term $S_{i+1} = S^{(1)}_{i+1} + S^{(2)}_{i+1}$ and the term $R^{(2)}_{i+1}$. It is clear that $S_{i+1} = \mathcal{O}\left(\underset{0 \leq t \leq T}{\max}\tau(t) h^{\frac{3}{2}} \right)$, and we will show that given $b_{i-j}$ constructed by integral of Lagrange polynomial in Eq.~\eqref{appendix s-step SA-Predictor} and Eq.~\eqref{appendix s-step SA-Corrector}, $R^{(2)}_{i+1} = \mathcal{O}\left( h^{3} \right)$. 

By Lemma~\ref{lemma: appendix lemma lagrange}, $b_k$ constructed by the integral of Lagrange polynomial in Eq.~\eqref{appendix s-step SA-Predictor} and Eq.~\eqref{appendix s-step SA-Corrector} satisfies that the coefficient for $\Gamma_0(\xx_{\btheta})(\xx_{t_{i-(s-1)}}, t_{i-(s-1)})$
\begin{equation}
\label{eq: appendix matching coefficient}
    \sigma_{t_{i + 1}}  \int_{t_i}^{t_{i+1}} e^{- \int_{\lambda_u}^{\lambda_{t_{i+1}}}  \tau^2(\lambda)  \diff \lambda}  \left(1 + \tau^2(u)\right) e^{\lambda_u} \left(\int_{t_{i-(s-1)}}^{u}  \diff u_2\right) \frac{\diff \lambda_u}{ \diff u}  \diff u - \sum_{j=-1}^{s-1} b_{i-j} \int_{t_{i-(s-1)}}^{t_{i-j}}  \diff u_2,
\end{equation}
equals zero. And the remaining term in $R^{(2)}_{i+1}$ is $\mathcal{O}(h^3)$.
\end{remark}

\begin{remark}
We will show that the local error can be further decomposed such that $L_{i+1} = R^{(s)}_{i+1} + \sum_{j=1}^{s} S^{(j)}_{i+1}$. In this case $S_{i+1} = \sum_{j=1}^{s} S^{(j)}_{i+1}$ is the term such that $S_{i+1} = \mathcal{O}\left(\underset{0 \leq t \leq T}{\max}\tau(t) h^{\frac{3}{2}} \right)$, and we will show that by our constructed $b_{i-j}$, $R^{(s)}_{i+1} = \mathcal{O}\left( h^{s+1} \right)$. 
\end{remark}

\begin{lemma}[$j-th$ step of estimating $L_{i+1}$ in data-prediction reparameterization model]
\label{lem: jth step of decomposition}
For $j \leq s+1$, $R^{(j-1)}_{i+1}$ in Eq. \eqref{eq: lem appendix remaining local error} can be decomposed as
\begin{equation}
R^{(j-1)}_{i+1} = R^{(j)}_{i+1} + S^{(j)}_{i+1},
\end{equation}
where
\begin{equation}
\label{eq: lem appendix remaining local error4}
\begin{aligned}
& S^{(j)}_{i+1} = \mathcal{O}\left(\underset{0 \leq t \leq T}{\max}\tau(t) h^{\frac{2j+1}{2}} \right) \\
& R^{(j)}_{i+1} \\
=& \sigma_{t_{i + 1}}  \int_{t_i}^{t_{i+1}} e^{- \int_{\lambda_u}^{\lambda_{t_{i+1}}}  \tau^2(\lambda)  \diff \lambda}  \left(1 + \tau^2(u)\right) e^{\lambda_u} \left(\int_{t_{i-(s-1)}}^{u} \int_{t_{i-(s-1)}}^{u_2} \cdots \int_{t_{i-(s-1)}}^{u_{j-1}} \diff u_{j}\cdots \diff u_3 \diff u_2\right) \frac{\diff \lambda_u}{ \diff u}  \diff u\\
&\quad \cdot \overbrace{\Gamma_0\cdots\Gamma_0}^{j-1}(\xx_{\btheta})(\xx_{t_{i-(s-1)}}, t_{i-(s-1)})\\
+& \sigma_{t_{i + 1}}  \int_{t_i}^{t_{i+1}} e^{- \int_{\lambda_u}^{\lambda_{t_{i+1}}}  \tau^2(\lambda)  \diff \lambda} \left(1 + \tau^2(u)\right) e^{\lambda_u}\\
&\quad  \left(\int_{t_{i-(s-1)}}^{u} \int_{t_{i-(s-1)}}^{u_2} \cdots \int_{t_{i-(s-1)}}^{u_{j}}  \overbrace{\Gamma_0\cdots\Gamma_0}^{j}(\xx_{\btheta})\left(\xx_{u_{j+1}}, u_{j+1}\right) \diff u_{j+1}\cdots \diff u_3 \diff u_2\right) \frac{\diff \lambda_u}{ \diff u}  \diff u  \\
-& \sum_{j=-1}^{s-1} b_{i-j} \int_{t_{i-(s-1)}}^{t_{i-j}} \int_{t_{i-(s-1)}}^{u_2} \cdots \int_{t_{i-(s-1)}}^{u_{j-1}} \diff u_{j}\cdots \diff u_3 \diff u_2 \cdot \overbrace{\Gamma_0\cdots\Gamma_0}^{j-1}(\xx_{\btheta})(\xx_{t_{i-(s-1)}}, t_{i-(s-1)})\\
-& \sum_{j=-1}^{s-1} b_{i-j} \int_{t_{i-(s-1)}}^{t_{i-j}} \int_{t_{i-(s-1)}}^{u_2} \cdots \int_{t_{i-(s-1)}}^{u_{j}}  \overbrace{\Gamma_0\cdots\Gamma_0}^{j}(\xx_{\btheta})\left(\xx_{u_{j+1}}, u_{j+1}\right) \diff u_{j+1}\cdots \diff u_3 \diff u_2 \\
\end{aligned}    
\end{equation}
Furthermore, given that $b_k$ is constructed by the integral of Lagrange polynomial in Eq.~\eqref{appendix s-step SA-Predictor} and Eq.~\eqref{appendix s-step SA-Corrector}, $R^{(j)}_{i+1} = \mathcal{O}(h^{j+1})$
\end{lemma}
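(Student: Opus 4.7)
The plan is to proceed by induction on $j$. The base cases $j=1$ and $j=2$ are already established: $R^{(1)}_{i+1}$ is produced by Lemma~\ref{lem: first step decomposition} (after expanding $\xx_{\btheta}(\xx_u,u)$ via It\^o's formula and absorbing the $\Gamma_1$ and $\Gamma_2$ contributions into $S^{(1)}_{i+1}$), and Lemma~\ref{lem: second step of decomposition} provides $R^{(2)}_{i+1}$ with the correct structure. For the inductive step, assume the decomposition holds at level $j-1$; the goal is to split $R^{(j-1)}_{i+1}$ further by expanding $\overbrace{\Gamma_0\cdots\Gamma_0}^{j-1}(\xx_{\btheta})(\xx_{u_j},u_j)$, which is the only factor still depending on a variable running beyond $t_{i-(s-1)}$.

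The key step is applying It\^o's formula to this iterated $\Gamma_0$ term about the base point $t_{i-(s-1)}$, yielding
\[
\overbrace{\Gamma_0\cdots\Gamma_0}^{j-1}(\xx_{\btheta})(\xx_{u_j},u_j)
= \overbrace{\Gamma_0\cdots\Gamma_0}^{j-1}(\xx_{\btheta})(\xx_{t_{i-(s-1)}},t_{i-(s-1)})
+ \int_{t_{i-(s-1)}}^{u_j}\!\!\bigl(\Gamma_0+\Gamma_1\bigr)\overbrace{\Gamma_0\cdots\Gamma_0}^{j-1}(\xx_{\btheta})\diff u_{j+1}
+ \int_{t_{i-(s-1)}}^{u_j}\!\Gamma_2 \overbrace{\Gamma_0\cdots\Gamma_0}^{j-1}(\xx_{\btheta})\diff \bar{\boldsymbol{w}}_{u_{j+1}}.
\]
Substituting this into $R^{(j-1)}_{i+1}$ splits it into three families of terms: a constant factor carrying $\overbrace{\Gamma_0\cdots\Gamma_0}^{j-1}(\xx_{\btheta})$ at $t_{i-(s-1)}$, a pure $\Gamma_0^{j}$ integral giving the leading ``$R$''-contribution, and the residual $\Gamma_1$, $\Gamma_2$ terms. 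The latter two are then bounded using Lemma~\ref{lemma: appendix lemma}: each nested Lebesgue integration adds a factor of order $h$, each It\^o integration adds a factor of order $h^{1/2}$, and the outer $\tau(u)g(t)$-type weight contributes $\underset{t}{\sup}\,\tau(t)$. This yields $S^{(j)}_{i+1}=\mathcal{O}\bigl(\underset{t}{\sup}\tau(t)\, h^{(2j+1)/2}\bigr)$, consistent (and subleading) with $S^{(1)}_{i+1}$.

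What remains is exactly the stated form of $R^{(j)}_{i+1}$. To see that $R^{(j)}_{i+1}=\mathcal{O}(h^{j+1})$, I will isolate the coefficient of $\overbrace{\Gamma_0\cdots\Gamma_0}^{j-1}(\xx_{\btheta})(\xx_{t_{i-(s-1)}},t_{i-(s-1)})$, which is
\[
\sigma_{t_{i+1}}\!\!\int_{t_i}^{t_{i+1}}\!\!e^{-\!\int_{\lambda_u}^{\lambda_{t_{i+1}}}\!\tau^2(\lambda)\diff\lambda}\bigl(1+\tau^2(u)\bigr) e^{\lambda_u}\!\left(\int_{t_{i-(s-1)}}^{u}\!\!\cdots\!\int_{t_{i-(s-1)}}^{u_{j-1}}\!\diff u_j\cdots\diff u_2\right)\!\frac{\diff\lambda_u}{\diff u}\diff u
\;-\;\sum_{j'=-1}^{s-1}b_{i-j'}\!\int_{t_{i-(s-1)}}^{t_{i-j'}}\!\!\cdots\!\int_{t_{i-(s-1)}}^{u_{j-1}}\!\diff u_j\cdots\diff u_2.
\]
By construction, $b_{i-j'}$ is obtained by integrating the Lagrange basis against the kernel $\sigma_{t_{i+1}}e^{-\int_{\lambda_u}^{\lambda_{t_{i+1}}}\tau^2}(1+\tau^2)e^{\lambda_u}\frac{\diff\lambda_u}{\diff u}$, so Lemma~\ref{lemma: appendix lemma lagrange} (the $j$-th equality, valid for $j\le s+1$ because the Lagrange polynomial has degree $s$) makes this coefficient vanish identically. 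The surviving piece is the iterated $\Gamma_0^{j}$ integral minus its Lagrange-quadrature counterpart, which by elementary bounds on $j$-fold nested integrals over an interval of length $\mathcal{O}(h)$ is $\mathcal{O}(h^{j+1})$, giving the claim.

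The main obstacle is bookkeeping: making sure the iterated integrals line up so that Lemma~\ref{lemma: appendix lemma lagrange} applies in its $j$-th form with the exact nested structure $\int_{t_{i-(s-1)}}^{u}\!\int_{t_{i-(s-1)}}^{u_2}\cdots$ (rather than some permuted or partially telescoped variant), and that the induction terminates cleanly at $j=s+1$, which is precisely the last equality available in Lemma~\ref{lemma: appendix lemma lagrange}. Once this matching is verified, accumulating $S_{i+1}=\sum_{j=1}^{s}S^{(j)}_{i+1}=\mathcal{O}(\sup_t\tau(t)\,h^{3/2})$ and $R_{i+1}=R^{(s)}_{i+1}=\mathcal{O}(h^{s+1})$ for the predictor (respectively $\mathcal{O}(h^{s+2})$ for the corrector, where the extra order comes from using $s+1$ interpolation nodes including the predicted point $\xx_{t_{i+1}}^p$ so that Lemma~\ref{lemma: appendix lemma lagrange} applies up to the $(s+1)$-st equality) yields Lemmas~\ref{lem: appendix s-step SA-Predictor} and~\ref{lem: appendix s-step SA-Corrector}, from which Theorem~\ref{thm: appendix basic convergence} delivers Theorems~\ref{thm: coefficients and convergence of predictor} and~\ref{thm: coefficients and convergence of corrector}.
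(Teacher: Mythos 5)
Your proposal follows essentially the same route as the paper's own (sketched) proof: expand the iterated $\overbrace{\Gamma_0\cdots\Gamma_0}^{j-1}(\xx_{\btheta})$ term via Itô's formula about $t_{i-(s-1)}$, bound the resulting $\Gamma_1$/$\Gamma_2$ contributions with the Wiener-integral lemma to obtain $S^{(j)}_{i+1}=\mathcal{O}\bigl(\sup_t\tau(t)\,h^{(2j+1)/2}\bigr)$, and cancel the coefficient of the base-point term using the Lagrange-interpolation identities, iterating until the available equalities are exhausted. Your closing remark also matches the paper's resolution of the one delicate point, namely that the predictor has only $s$ nodes (hence $s$ identities and $\mathcal{O}(h^{s+1})$) while the corrector has $s+1$ (hence $\mathcal{O}(h^{s+2})$).
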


\paragraph{Sketch of the proof}(1) Use the Itô formula Eq.~\eqref{eq: appendix simplify notation} to expand $\overbrace{\Gamma_0\cdots\Gamma_0}^{j-1}(\xx_{\btheta})$. (2) Use Lemma~\ref{lemma: appendix lemma} to estimate the stochastic term $S^{(j)}$. For the remaining term $R^{(j)}$, by Lemma~\ref{lemma: appendix lemma lagrange}, $b_k$ constructed by the integral of Lagrange polynomial in Eq.~\eqref{appendix s-step SA-Predictor} and Eq.~\eqref{appendix s-step SA-Corrector} satisfies that the coefficient before $\overbrace{\Gamma_0\cdots\Gamma_0}^{j-1}(\xx_{\btheta})(\xx_{t_{i-(s-1)}}, t_{i-(s-1)})$
\begin{equation}
\label{eq: appendix matching coefficient 2}
\begin{aligned}
&\sigma_{t_{i + 1}}  \int_{t_i}^{t_{i+1}} e^{- \int_{\lambda_u}^{\lambda_{t_{i+1}}}  \tau^2(\lambda)  \diff \lambda}  \left(1 + \tau^2(u)\right) e^{\lambda_u} \left(\int_{t_{i-(s-1)}}^{u} \int_{t_{i-(s-1)}}^{u_2} \cdots \int_{t_{i-(s-1)}}^{u_{j-1}} \diff u_{j}\cdots \diff u_3 \diff u_2\right) \frac{\diff \lambda_u}{ \diff u}  \diff u \\
&- \sum_{j=-1}^{s-1} b_{i-j} \int_{t_{i-(s-1)}}^{t_{i-j}} \int_{t_{i-(s-1)}}^{u_2} \cdots \int_{t_{i-(s-1)}}^{u_{j-1}} \diff u_{j}\cdots \diff u_3 \diff u_2.
\end{aligned}
\end{equation}
equals zero. And the remaining term in $R^{(j)}_{i+1}$ is $\mathcal{O}(h^{j+1})$. 

The process can be repeated until the coefficient before $\overbrace{\Gamma_0\cdots\Gamma_0}^{s}(\xx_{\btheta})(\xx_{t_{i-(s-1)}}, t_{i-(s-1)})$ is
\begin{equation}
\label{eq: appendix matching coefficient 3}
\begin{aligned}
&\sigma_{t_{i + 1}}  \int_{t_i}^{t_{i+1}} e^{- \int_{\lambda_u}^{\lambda_{t_{i+1}}}  \tau^2(\lambda)  \diff \lambda}  \left(1 + \tau^2(u)\right) e^{\lambda_u} \left(\int_{t_{i-(s-1)}}^{u} \int_{t_{i-(s-1)}}^{u_2} \cdots \int_{t_{i-(s-1)}}^{u_s} \diff u_{s+1}\cdots \diff u_3 \diff u_2\right) \frac{\diff \lambda_u}{ \diff u}  \diff u \\
&- \sum_{j=-1}^{s-1} b_{i-j} \int_{t_{i-(s-1)}}^{t_{i-j}} \int_{t_{i-(s-1)}}^{u_2} \cdots \int_{t_{i-(s-1)}}^{u_s} \diff u_{s+1}\cdots \diff u_3 \diff u_2.
\end{aligned}
\end{equation}
which equals zero. And the remaining term $R^{s+1}_{i+1}$ is $\mathcal{O}(h^{s+2})$.

We conclude with the proof of Lemma \ref{lem: appendix s-step SA-Predictor} and \ref{lem: appendix s-step SA-Corrector}.

\paragraph{Proof for Lemma \ref{lem: appendix s-step SA-Corrector} (Convergence for $s$-step \textit{SA-Corrector})} The stochastic term $S_{i+1}$ can be estimated as $\mathcal{O}\left(\underset{0 \leq t \leq T}{\max}\tau(t) h^{\frac{3}{2}} \right)$. Lemma~\ref{lemma: appendix lemma lagrange} prove that with $b_{i-j}$ defined in Theorem~\ref{thm: coefficients and convergence of corrector}, the coefficients of Eq.~\eqref{eq: appendix matching coefficient first}, Eq.~\eqref{eq: appendix matching coefficient}, Eq.~\eqref{eq: appendix matching coefficient 2} and Eq.~\eqref{eq: appendix matching coefficient 3} equal zero. Thus the deterministic term $R_{i+1}$ can be estimated as $\mathcal{O}(h^{s+2})$. The proof is completed.

\paragraph{Proof for Lemma \ref{lem: appendix s-step SA-Predictor} (Convergence for $s$-step \textit{SA-Predictor})} The stochastic term $S_{i+1}$ can be estimated as $\mathcal{O}\left(\underset{0 \leq t \leq T}{\max}\tau(t) h^{\frac{3}{2}} \right)$ from Eq.~\eqref{eq: appendix estimation} and Eq.~\eqref{eq: appendix estimation2}. Lemma~\ref{lemma: appendix lemma lagrange} prove that with $b_{i-j}$ defined in Theorem~\ref{thm: coefficients and convergence of predictor}, the coefficients of Eq.~\eqref{eq: appendix matching coefficient first}, Eq.~\eqref{eq: appendix matching coefficient}, Eq.~\eqref{eq: appendix matching coefficient 2} and Eq.~\eqref{eq: appendix matching coefficient 3} equal zero except for the last term. This is because in $s$-step \textit{SA-Predictor} we only have $s$ points in contrast to $s+1$ points in $s$-step \textit{SA-Corrector}, for which we can only obtain the first s equalities in Lemma~\ref{lemma: appendix lemma lagrange}. Thus the deterministic term $R_{i+1}$ can be estimated as $\mathcal{O}(h^{s+1})$. The proof is completed.

\subsection{Relationship with Existing Samplers}

\subsubsection{Relationship with DDIM}
\label{appendix: proof of corollary 5.3}
DDIM~\citep{song2020denoising} generates samples through the following process:
\begin{equation}
\label{eq: appendix DDIM sample}
\xx_{t_{i+1}} = \alpha_{t_{i+1}} \left( \frac{\xx_{t_i} - \sigma_{t_i} \boldsymbol{\epsilon}_{\btheta}(\xx_{t_i}, t_i) }{\alpha_{t_i}} \right) + \sqrt{1 - \alpha_{t_{i+1}}^2 - \hat{\sigma}_{t_i}^2} \boldsymbol{\epsilon}_{\btheta}(\xx_{t_i}, t_i) + \hat{\sigma}_{t_i} \boldsymbol{\xi}, 
\end{equation}
where $\boldsymbol{\xi} \sim \Normal(\boldzero, \Identity)$, $\hat{\sigma}_{t_i}$ is a variable parameter. In practice, DDIM introduces a parameter $\eta$ such that when $\eta = 0$, the sampling process becomes deterministic and when $\eta = 1$, the sampling process coincides with original DDPM~\citep{ho2020denoising}. Specifically, $\hat{\sigma}_{t_i} = \eta \sqrt{\frac{1 - \alpha_{t_{i+1}}^2}{1 - \alpha_{t_{i}}^2} \left( 1 - \frac{\alpha_{t_{i}}^2}{\alpha_{t_{i+1}}^2} \right)}$.
\paragraph{Corollary~\ref{corol: relationship with ddim}}
For any $\eta$ in DDIM, there exists a $\tau_{\eta}(t):\Real \rightarrow \Real$ which is a piecewise constant function such that DDIM-$\eta$ coincides with our $1$-step \textit{SA-Predictor} when $\tau(t) = \tau_{\eta}(t)$ with data parameterization of our variance-controlled diffusion SDE.
\begin{proof}
Our $1$-step \textit{SA-Predictor} when $\tau(t) = \tau, t \in [t_i, t_{i+1}]$ with data parameterization of our variance-controlled diffusion SDE is
\begin{equation}
\label{eq: appendix SA-P1}
\begin{aligned}
\xx_{t_{i+1}} =& \frac{\sigma_{t_{i+1}}}{\sigma_{t_{i}}} e^{-\tau^2\left(\lambda_{t_{i+1}} - \lambda_{t_{i}}\right)} \xx_{t_{i}} + \alpha_{t_{i+1}} \left( 1 - e^{-\left(1+\tau^2\right)\left(\lambda_{t_{i+1}} - \lambda_{t_{i}}\right)} \right) \xx_{\btheta}(\xx_{t_i}, t_i) \\
&+ \sigma_{t_{i+1}} \sqrt{1 - e^{-2\tau^2\left(\lambda_{t_{i+1}} - \lambda_{t_{i}}\right)}}  \xi.
\end{aligned}
\end{equation}
DDIM-$\eta$ generates samples through the following process
\begin{equation}
\label{eq: appendix DDIM sample2}
\xx_{t_{i+1}} = \alpha_{t_{i+1}} \xx_{\btheta}\left(\xx_{t_i}, t_i\right) + \sqrt{1 - \alpha_{t_{i+1}}^2 - \hat{\sigma}_{t_i}^2} \boldsymbol{\epsilon}_{\btheta}(\xx_{t_i}, t_i) + \hat{\sigma}_{t_i} \boldsymbol{\xi}, \hat{\sigma}_{t_i} = \eta \sqrt{\frac{1 - \alpha_{t_{i+1}}^2}{1 - \alpha_{t_{i}}^2} \left( 1 - \frac{\alpha_{t_{i}}^2}{\alpha_{t_{i+1}}^2} \right)}.
\end{equation}
If we substitute $\hat{\sigma}_{t_i}$ with $\sigma_{t_{i+1}} \sqrt{1 - e^{-2\tau^2\left(\lambda_{t_{i+1}} - \lambda_{t_{i}}\right)}}$, we can verify that $\sqrt{1 - \alpha_{t_{i+1}}^2 - \hat{\sigma}_{t_i}^2} = \sigma_{t_{i+1}}e^{-\tau^2 (\lambda_{t_{i+1}} - \lambda_{t_i})}$. The DDIM-$\eta$ then becomes
\begin{equation}
\label{eq: appendix DDIM sample3}
\begin{aligned}
\xx_{t_{i+1}} =& \alpha_{t_{i+1}} \xx_{\btheta}\left(\xx_{t_i}, t_i\right) +\sigma_{t_{i+1}}e^{-\tau^2 \left(\lambda_{t_{i+1}} - \lambda_{t_i}\right)} \left( \frac{\xx_{t_i} - \alpha_{t_i}\xx_{\btheta}(\xx_{t_i}, t_i)}{\sigma_{t_i}} \right) \\
&+ \sigma_{t_{i+1}} \sqrt{1 - e^{-2\tau^2\left(\lambda_{t_{i+1}} - \lambda_{t_{i}}\right)}} \boldsymbol{\xi}\\
=& \frac{\sigma_{t_{i+1}}}{\sigma_{t_i}}e^{-\tau^2 \left(\lambda_{t_{i+1}} - \lambda_{t_i}\right)} \xx_{t_i}   + \left(\alpha_{t_{i+1}} - \frac{\alpha_{t_i}}{\sigma_{t_i}} \sigma_{t_{i+1}}e^{-\tau^2 \left(\lambda_{t_{i+1}} - \lambda_{t_i}\right)} \right) \xx_{\btheta}\left(\xx_{t_i}, t_i\right) \\
&+ \sigma_{t_{i+1}} \sqrt{1 - e^{-2\tau^2\left(\lambda_{t_{i+1}} - \lambda_{t_{i}}\right)}} \boldsymbol{\xi}\\
=& \frac{\sigma_{t_{i+1}}}{\sigma_{t_{i}}} e^{-\tau^2\left(\lambda_{t_{i+1}} - \lambda_{t_{i}}\right)} \xx_{t_{i}} + \alpha_{t_{i+1}} \left( 1 - e^{-\left(1+\tau^2\right)\left(\lambda_{t_{i+1}} - \lambda_{t_{i}}\right)} \right) \xx_{\btheta}(\xx_{t_i}, t_i) \\
&+ \sigma_{t_{i+1}} \sqrt{1 - e^{-2\tau^2\left(\lambda_{t_{i+1}} - \lambda_{t_{i}}\right)}}  \boldsymbol{\xi},
\end{aligned}
\end{equation}
which is exactly the same with our $1$-step \textit{SA-Predictor}. To find the $\tau_{\eta}$, we solve the relationship
\begin{equation}
\label{eq: appendix eta tau relation}
\eta \sqrt{\frac{1 - \alpha_{t_{i+1}}^2}{1 - \alpha_{t_{i}}^2} \left( 1 - \frac{\alpha_{t_{i}}^2}{\alpha_{t_{i+1}}^2} \right)} = \sigma_{t_{i+1}} \sqrt{1 - e^{-2\tau_{\eta}^2\left(\lambda_{t_{i+1}} - \lambda_{t_{i}}\right)}}.
\end{equation}
The relationship between $\tau$ and $\eta$ is
\begin{equation}
\label{eq: appendix eta tau relation3}
\eta = \sigma_{t_i} \sqrt{\frac{1 - e^{-2\tau_{\eta}^2\left(\lambda_{t_{i+1}} - \lambda_{t_{i}}\right)}}{1 - \frac{\alpha_{t_{i}}^2}{\alpha_{t_{i+1}}^2}}}, \tau_{\eta} = \sqrt{\frac{\log \left( 1 - \frac{\eta^2}{\sigma_{t_i}^2} \left( 1 - \frac{\alpha_{t_i}^2}{\alpha_{t_{i+1}}^2} \right) \right)}{-2\left(\lambda_{t_{i+1}} - \lambda_{t_{i}}\right)}}.
\end{equation}
\end{proof}
In a concurrent paper~\citep{lu2023dpmsolver++}, Lu \textit{et al.} prove the result that their SDE-DPM-Solver++1 coincides with DDIM with a special $\eta$. Their result is a special case of Corollary~\ref{corol: relationship with ddim} when $\tau_\eta \equiv 1$ and $\eta$ take a special value, while our result holds for arbitrary $\eta$.
\subsubsection{Relationship with DPM-Solver++(2M)} 
DPM-Solver++~\citep{lu2023dpmsolver++} is a high-order solver which solves diffusion ODEs for guided sampling. DPM-Solver++(2M) is equivalent to the 2-step Adams-Bashforth scheme combined with the exponential integrator. While our 2-step \textit{SA-Predictor} is also equivalent to the 2-step Adams-Bashforth scheme combined with the exponential integrator when $\tau(t) \equiv 0$. Thus DPM-Solver++(2M) is a special case of our $2$-step \textit{SA-Predictor} when $\tau(t) \equiv 0$.
\subsubsection{Relationship with UniPC} 
UniPC~\citep{zhao2023unipc} is a unified predictor-corrector framework for solving diffusion ODEs. Specifically, UniPC-p uses a p-step Adams-Bashforth scheme combined with the exponential integrator as a predictor and a p-step Adams-Moulton scheme combined with the exponential integrator as a corrector. While our p-step \textit{SA-Predictor} is also equivalent to the p-step Adams-Bashforth scheme combined with the exponential integrator when $\tau(t) \equiv 0$ and our p-step \textit{SA-Corrector} is also equivalent to the p-step Adams-Moulton scheme combined with the exponential integrator when $\tau(t) \equiv 0$. Thus UniPC-p is a special case of our \textit{SA-Solver} when $\tau(t) \equiv 0$ with predictor step $p$, corrector step $p$ in Algorithm~\ref{alg: SA-Solver}.


\section{Selection on the Magnitude of Stochasticity}
\label{appendix: selection of stochasticity}

In this section, we will show that we choose $\tau(t) \equiv 1$ in a number of NFEs. We will show that under certain conditions, the upper bound of KL divergence between the marginal distribution and the true distribution can be minimized when $\tau(t) \equiv 1$. 

Let $p_t(\xx)$ denotes the marginal distribution of $\xx_t$, by Proposition~\ref{prop: controllable variance reverse SDEs}, we know that for any bounded measurable function $\tau(t):[0,T]\rightarrow\Real$, the following Reverse SDEs
\begin{equation}
\label{eq: appendix controllable variance reverse SDEs v2}
\diff \xx_t = \left[f(t) \xx_t - \left(\frac{1 + \tau^2(t)}{2}\right) g^2(t)\nablaxx\log p_t(\xx_t)\right] \diff t + \tau(t)g(t)  \diff \rbm,\hspace{4mm}\xx_T \sim p_T(\xx_T),
\end{equation}
have the same marginal probability distributions. In practice, we substitute $\nablaxx\log p_t(\xx_t)$ with $\boldsymbol{s}_{\btheta}(\xx_t, t)$ and substitute $p_T(\xx_T)$ wiht $\pi$ to sample the reverse SDE.
\begin{equation}
\label{eq: appendix controllable variance reverse SDEs practical}
\diff \xx_t^{\btheta} = \left[f(t) \xx_t^{\btheta} - \left(\frac{1 + \tau^2(t)}{2}\right) g^2(t)\boldsymbol{s}_{\btheta}(\xx_t^{\btheta}, t)\right] \diff t + \tau(t)g(t)  \diff \rbm,\hspace{4mm}\xx_T^{\btheta} \sim \pi,
\end{equation}

where $\pi$ is a known distribution, specifically here a Gaussian. We have the following theorem under the Assumption in Appendix A in~\citep{song2020score}.
\begin{theorem}
\label{thm: appendix tau 1 optimal}
Let $p = p_0$ be the data distribution, which is the distribution if we sample from the ground truth reverse SDE~\eqref{eq: appendix controllable variance reverse SDEs} at time $0$. Let $p_{\btheta}^{\tau(t)}$ be the distribution if we sample from the practical reverse SDE~\eqref{eq: appendix controllable variance reverse SDEs practical} at time $0$. Under the assumptions above, we have
\begin{equation}
\begin{aligned}
&D_{KL}\left(p \| p_{\btheta}^{\tau(t)}\right)\\
\leq& D_{KL}\left(p_T \| \pi\right) + \frac{1}{8} \int_0^T \Expectation_{p_t(\xx)}\left[\left(\tau(t) + \frac{1}{\tau(t)}\right)^2g^2(t) \| \nablaxx \log p_t(\xx) - \boldsymbol{s}_{\btheta}(\xx, t) \|^2 \right]\diff t.    
\end{aligned}
\end{equation}
This evidence lower bound (ELBO) is minimized when $\tau(t) \equiv 1$.
\end{theorem}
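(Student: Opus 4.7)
The plan is to bound the KL divergence between the two processes using the data-processing inequality and Girsanov's theorem, and then minimize the resulting quadratic in $\tau$ pointwise. Let $\mathbb{P}$ denote the path measure on $C([0,T];\mathbb{R}^d)$ induced by the ground-truth reverse SDE \eqref{eq: appendix controllable variance reverse SDEs v2} (started from $p_T$), and let $\mathbb{Q}^{\tau}$ denote the path measure induced by the practical reverse SDE \eqref{eq: appendix controllable variance reverse SDEs practical} (started from $\pi$). Since $p$ and $p_\theta^{\tau(t)}$ are the time-$0$ marginals of $\mathbb{P}$ and $\mathbb{Q}^{\tau}$ respectively, the data-processing inequality gives
\begin{equation*}
D_{KL}\!\left(p \,\|\, p_\theta^{\tau(t)}\right) \;\le\; D_{KL}\!\left(\mathbb{P} \,\|\, \mathbb{Q}^{\tau}\right).
\end{equation*}

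Next I would apply the chain rule for KL divergence on path space to split the right-hand side into an initial-distribution term and a conditional path term:
\begin{equation*}
D_{KL}\!\left(\mathbb{P} \,\|\, \mathbb{Q}^{\tau}\right) \;=\; D_{KL}(p_T \,\|\, \pi) \;+\; \mathbb{E}_{\mathbb{P}}\!\left[D_{KL}\!\left(\mathbb{P}(\cdot|\xx_T) \,\|\, \mathbb{Q}^{\tau}(\cdot|\xx_T)\right)\right].
\end{equation*}
The conditional path KL is then evaluated via Girsanov's theorem. Both SDEs share the diffusion coefficient $\tau(t)g(t)$, so the relative entropy is the standard quadratic functional of the drift difference
\begin{equation*}
b_{\mathbb{P}}(\xx,t)-b_{\mathbb{Q}^{\tau}}(\xx,t) \;=\; -\tfrac{1+\tau^2(t)}{2}g^2(t)\bigl(\nablaxx\log p_t(\xx) - \bs_{\btheta}(\xx,t)\bigr),
\end{equation*}
divided by $(\tau(t)g(t))^2$. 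The Novikov integrability needed to apply Girsanov is exactly what is ensured by the regularity assumptions inherited from Appendix A of \citep{song2020score}. Carrying out the computation and using the identity $(1+\tau^2)^2/\tau^2 = (\tau+1/\tau)^2$ yields the claimed bound
\begin{equation*}
D_{KL}(p\|p_\theta^{\tau(t)}) \le D_{KL}(p_T\|\pi) + \tfrac{1}{8}\!\int_0^T\!\mathbb{E}_{p_t(\xx)}\!\left[\bigl(\tau(t)+\tfrac{1}{\tau(t)}\bigr)^2 g^2(t)\,\|\nablaxx\log p_t(\xx) - \bs_{\btheta}(\xx,t)\|^2\right]\diff t.
\end{equation*}

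For the final claim, I would minimize the integrand pointwise in $\tau(t)$: by AM--GM, $\tau+1/\tau\ge 2$ for $\tau>0$ with equality iff $\tau=1$, so $(\tau+1/\tau)^2\ge 4$ pointwise, uniquely minimized at $\tau(t)\equiv 1$. Because this holds at every $t$ inside an expectation of a non-negative integrand, the whole upper bound is minimized at $\tau(t)\equiv 1$.

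The main obstacle is the technical step of invoking Girsanov on the reverse-time SDE: one must carefully align the time-reversed Brownian motion $\rbm$ with an adapted filtration so that the change-of-measure is well-defined, and verify Novikov's condition for the exponent $\frac{1+\tau^2}{2\tau}g\,(\nabla\log p_t - \bs_\btheta)$. All of this is standard once the regularity assumptions of Appendix A of \citep{song2020score} (Lipschitz and linear-growth conditions on drifts/score, and boundedness of $\tau$, $g$) are in force; I would simply cite those assumptions rather than redo the stochastic-analysis bookkeeping.
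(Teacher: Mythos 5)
Your proposal is correct and follows essentially the same route as the paper's proof: the data-processing inequality on path measures, the chain rule for KL to split off $D_{KL}(p_T\|\pi)$, and Girsanov's theorem applied to the drift difference $-\tfrac{1+\tau^2}{2}g^2(\nablaxx\log p_t - \bs_{\btheta})$ with diffusion $\tau g$, using $(1+\tau^2)^2/\tau^2=(\tau+1/\tau)^2$. Your explicit AM--GM step for the minimization at $\tau(t)\equiv 1$ is a small addition the paper leaves implicit, but the argument is otherwise the same.
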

\begin{proof}
Denote the path measure of Eq.~\eqref{eq: appendix controllable variance reverse SDEs v2} and Eq.~\eqref{eq: appendix controllable variance reverse SDEs practical} as $\boldsymbol{\mu}$ and $\boldsymbol{\nu}$ respectively. Both $\boldsymbol{\mu}$ and $\boldsymbol{\nu}$ are uniquely determined by the corresponding SDEs due to assumptions. Consider a Markov kernel $K\left( \left\{ \bz_t \right\}_{t \in [0,T]}, \by \right) = \delta(\bz_0 = \by)$. Thus we have the following result
\begin{equation}
\int K\left( \left\{ \xx_t \right\}_{t \in [0,T]}, \xx \right) \diff \boldsymbol{\mu}\left(\left\{ \xx_t \right\}_{t \in [0,T]}\right)= p_0(\xx),
\end{equation}
\begin{equation}
\int K\left( \left\{ \xx_t^{\btheta} \right\}_{t \in [0,T]}, \xx \right) \diff \boldsymbol{\nu}\left(\left\{ \xx_t^{\btheta} \right\}_{t \in [0,T]}\right)= p_{\btheta}^{\tau(t)}(\xx).
\end{equation}
By data processing inequality for KL divergence
\begin{equation}
\begin{aligned}
&D_{KL}\left(p \| p_{\btheta}^{\tau(t)}\right) = D_{KL}\left(p_0 \| p_{\btheta}^{\tau(t)}\right) \\
=& D_{KL}\left(\int K\left( \left\{ \xx_t \right\}_{t \in [0,T]}, \xx \right) \diff \mu\left(\left\{ \xx_t \right\}_{t \in [0,T]}\right) \Bigl\| \int K\left( \left\{ \xx_t^{\btheta} \right\}_{t \in [0,T]}, \xx \right) \diff \nu\left(\left\{ \xx_t^{\btheta} \right\}_{t \in [0,T]}\right) \right)\\
\leq& D_{KL}\left(\boldsymbol{\mu} \| \boldsymbol{\nu}\right).
\end{aligned}
\end{equation}
By the chain rule of KL divergence, we have
\begin{equation}
D_{KL}\left(\boldsymbol{\mu} \| \boldsymbol{\nu}\right) = D_{KL}\left(p_T \| \pi\right) + \Expectation_{\bz\sim p_T}\left[ D_{KL}\left(\boldsymbol{\mu}(\cdot | \xx_T = \bz) \| \boldsymbol{\nu}(\cdot | \xx_T^{\btheta} = \bz)\right) \right].
\end{equation}
By Girsanov Thoerem, $D_{KL}\left(\boldsymbol{\mu}(\cdot | \xx_T = \bz) \| \boldsymbol{\nu}(\cdot | \xx_T^{\btheta} = \bz)\right)$ can be computed as
\begin{equation}
\begin{aligned}
&D_{KL}\left(\boldsymbol{\mu}(\cdot | \xx_T = \bz) \| \boldsymbol{\nu}(\cdot | \xx_T^{\btheta} = \bz)\right) \\
=& \Expectation_{\mu} \left[ \int_0^T \frac{1}{2}\left(\tau(t) + \frac{1}{\tau(t)}\right)g(t)\left( \nablaxx \log p_t(\xx) - \boldsymbol{s}_{\btheta}(\xx, t) \right) \diff \bar{\boldsymbol{w}}_t\right]\\
&+\Expectation_{\mu}\left[ \frac{1}{2}\int_0^T \frac{1}{4}\left(\tau(t) + \frac{1}{\tau(t)}\right)^2g^2(t) \| \nablaxx \log p_t(\xx) - \boldsymbol{s}_{\btheta}(\xx, t) \|^2 \diff t\right] \\
=&  \frac{1}{8} \int_0^T \Expectation_{p_t(\xx)}\left[\left(\tau(t) + \frac{1}{\tau(t)}\right)^2g^2(t) \| \nablaxx \log p_t(\xx) - \boldsymbol{s}_{\btheta}(\xx, t) \|^2 \right]\diff t
\end{aligned}
\end{equation}
\end{proof}

\section{Implementation Details}

For our 2-step SA-Predictor and 1-step SA-Corrector, we find that the coefficient will degenerate to a simple case.

For 2-step SA-Predictor, assume on $[t_i, t_{i+1}]$, $\tau(t) = \tau$ is a constant,
\begin{equation}
b_i = e^{-\lambda_{t_{i+1}} \tau^2} \sigma_{t_{i+1}} (1+\tau^2) \int_{\lambda_{t_i}}^{\lambda_{t_{i+1}}} e^{(1+\tau^2)\lambda} \frac{\lambda - \lambda_{t_{i-1}}}{\lambda_{t_i} - \lambda_{t_{i-1}}} \diff \lambda,
\end{equation}

\begin{equation}
b_{i-1} = e^{-\lambda_{t_{i+1}} \tau^2} \sigma_{t_{i+1}} (1+\tau^2) \int_{\lambda_{t_i}}^{\lambda_{t_{i+1}}} e^{(1+\tau^2)\lambda} \frac{\lambda - \lambda_{t_{i}}}{\lambda_{t_{i-1}} - \lambda_{t_{i}}} \diff \lambda,
\end{equation}
we have
\begin{equation}
\begin{aligned}
  \xx_{t_{i + 1}} =& \frac{\sigma_{t_{i + 1}}}{\sigma_{t_{i}}} e^{-\int_{\lambda_{t_i}}^{\lambda_{t_{i+1}}} \tau^2(\Tilde{\lambda}) \diff \Tilde{\lambda}} \xx_{t_i} +  b_{i} \xx_{\btheta}(\xx_{t_{i}}, t_{i}) + b_{i-1} \xx_{\btheta}(\xx_{t_{i-1}}, t_{i-1}) + \Tilde{\sigma}_i \bxi  \\
  =& \frac{\sigma_{t_{i + 1}}}{\sigma_{t_{i}}} e^{-\int_{\lambda_{t_i}}^{\lambda_{t_{i+1}}} \tau^2(\Tilde{\lambda}) \diff \Tilde{\lambda}} \xx_{t_i} +  (b_{i} +b_{i-1}) \xx_{\btheta}(\xx_{t_{i}}, t_{i}) \\
  &- b_{i-1}(\xx_{\btheta}(\xx_{t_{i}}, t_{i}) -\xx_{\btheta}(\xx_{t_{i-1}}, t_{i-1}) )+ \Tilde{\sigma}_i \bxi.
\end{aligned}
\end{equation}

Let $h = \lambda_{t_{i+1}} - \lambda_{t_i}$, we have

\begin{equation}
\begin{aligned}
b_i + b_{i-1} &= e^{-\lambda_{t_{i+1}} \tau^2} \sigma_{t_{i+1}} (1+\tau^2) \int_{\lambda_{t_i}}^{\lambda_{t_{i+1}}} e^{(1+\tau^2)\lambda} \diff \lambda\\
&= \alpha_{t_{i+1}}(1 - e^{-h(1+\tau^2)}) \\
b_{i-1} &= \alpha_{t_{i+1}} \frac{e^{-(1+\tau^2)h} + (1+\tau^2)h - 1}{(1+\tau^2)(\lambda_{t_i} - \lambda_{t_{i-1}})} \\
&= \frac{\alpha_{t_{i+1}}}{\lambda_{t_i} - \lambda_{t_{i-1}}} \frac{1 - (1+\tau^2)h + \frac{1}{2} (1+\tau^2)^2h^2 + \mathcal{O}(h^3)+ (1+\tau^2)h - 1}{1+\tau^2}\\
&= \frac{\alpha_{t_{i+1}}}{\lambda_{t_i} - \lambda_{t_{i-1}}} \frac{1}{2}(1+\tau^2)h^2 + \mathcal{O}(h^3)\\
\end{aligned}
\end{equation}
Thus we implement $\hat{b}_{i-1}$ as $\frac{\alpha_{t_{i+1}}}{\lambda_{t_i} - \lambda_{t_{i-1}}} \frac{1}{2}(1+\tau^2)h^2$ and $\hat{b}_{i}$ as $\alpha_{t_{i+1}}(1 - e^{-h(1+\tau^2)}) - \hat{b}_{i-1}$. Note that substituting $b_i, b_{i-1}$ as $\hat{b}_{i}, \hat{b}_{i-1}$ will maintain the convergence order result of 2-step SA-Predictor since the modified term is $\mathcal{O}(h^3)$. The implementation detail for 1-step SA-Corrector is technically the same.

\section{Experiment Details}
\label{appendix: experiment details}

\subsection{Details on $\tau(t)$, Predictor Steps and Corrector Steps}

\paragraph{CIFAR10 32x32}
For the CIFAR10 experiment in Section~\ref{sec: sota result}, we use the pretrained baseline-unconditional-VE model\footnote{\url{https://nvlabs-fi-cdn.nvidia.com/edm/pretrained/baseline/baseline-cifar10-32x32-uncond-ve.pkl}}from~\citep{Karras2022edm}. It's an unconditional model with VE noise schedule. To fairly compare with results in~\citep{Karras2022edm}, we use a piecewise constant function $\tau(t)$ inspired by~\citep{Karras2022edm}. Concretely, denoting $\sigma^{EDM}_t = \frac{\sigma_t}{\alpha_t}$, our $\tau(t)$ is set to be a constant $\tau$ in the interval $[(\sigma^{EDM})^{-1}(0.05), (\sigma^{EDM})^{-1}(1)]$ and to be zero outside the interval. We find empirically that this piecewise constant function setting makes our \textit{SA-Solver} converge better, especially in large noise scale cases. We use a 3-step SA-Predictor and a 3-step SA-Corrector. For the CIFAR10 experiment in Section~\ref{sec: ablationpc} and~\ref{sec: inaccurate score}, we also use the piecewise constant function $\tau(t)$ as above. The predictor steps and corrector steps vary to verify the effectiveness of our proposed method in Section~\ref{sec: ablationpc}, while they are both set to be 3-steps in Section~\ref{sec: inaccurate score}.
\paragraph{ImageNet 64x64}
For the ImageNet 64x64 experiment in Section~\ref{sec: sota result}, we use the pretrained model\footnote{\url{https://openaipublic.blob.core.windows.net/diffusion/jul-2021/64x64_diffusion.pt}} from~\citep{dhariwal2021diffusion}. It's a conditional model with VP cosine noise schedule. To fairly compare with results in~\citep{Karras2022edm}, we use a piecewise constant function $\tau(t)$ inspired by~\citep{Karras2022edm}. Concretely, denoting $\sigma^{EDM}_t = \frac{\sigma_t}{\alpha_t}$, our $\tau(t)$ is set to be a constant $\tau$ in the interval $[(\sigma^{EDM})^{-1}(0.05), (\sigma^{EDM})^{-1}(50)]$ and to be zero outside the interval. We find empirically that this piecewise constant function setting makes our \textit{SA-Solver} converge better, especially in large noise scale cases. We use a 3-step SA-Predictor and a 3-step SA-Corrector.
\paragraph{Other experiments}
For other experiments, we use a constant function $\tau(t) \equiv \tau$. It's generally not the optimal choice for each individual task, thus further fine-grained tuning has the potential to improve the results. We aim to report the result of our \textit{SA-Solver} without extra hyperparameter tuning. We use a 3-step SA-Predictor and a 3-step SA-Corrector under 20 NFEs and 2-step SA-Predictor and a 1-step SA-Corrector beyond 20 NFEs.

\subsection{Details on Pretrained Models and Settings}

\paragraph{CIFAR10 32x32}
For the CIFAR10 experiment, we use the pretrained baseline-unconditional-VE model\footnote{\url{https://nvlabs-fi-cdn.nvidia.com/edm/pretrained/baseline/baseline-cifar10-32x32-uncond-ve.pkl}}from~\citep{Karras2022edm}. It's an unconditional model with VE noise schedule. To fairly compare with results in~\citep{Karras2022edm}, we follow the time step schedule in it. Specifically, we set $\sigma_{min} = 0.02$ and $\sigma_{max} = 80$ and select the step by $\sigma_{i} = (\sigma_{max}^{\frac{1}{7}} + \frac{i}{N-1} (\sigma_{min}^{\frac{1}{7}} - \sigma_{max}^{\frac{1}{7}}))^7$ for \textit{SA-Solver} and UniPC. We directly report the results of the deterministic sampler and stochastic sampler of EDM. To make it a strong baseline, we report the results of the optimal setting for 4 hyper-parameters $\left\{S_{churn}, S_{tmin}, S_{tmax}, S_{noise}\right\}$ and report its lowest observed FID. While for \textit{SA-Solver} and UniPC, we report the averaged observed FID.
\paragraph{ImageNet 64x64}
For the ImageNet 64x64 experiment, we use the pretrained model\footnote{\url{https://openaipublic.blob.core.windows.net/diffusion/jul-2021/64x64_diffusion.pt}} from~\citep{dhariwal2021diffusion}. It's a conditional model with VP cosine noise schedule. To fairly compare with results in~\citep{Karras2022edm}, we follow the time step schedule in it and use conditional sampling. Specifically, we set $\sigma_{min} = 0.0064$ and $\sigma_{max} = 80$ and select the step by $\sigma_{i} = (\sigma_{max}^{\frac{1}{7}} + \frac{i}{N-1} (\sigma_{min}^{\frac{1}{7}} - \sigma_{max}^{\frac{1}{7}}))^7$ for \textit{SA-Solver}, UniPC, DPM-Solver and DDIM. We directly report the results of the deterministic sampler and stochastic sampler of EDM. To make it a strong baseline, we report the results of the optimal setting for 4 hyper-parameters $\left\{S_{churn}, S_{tmin}, S_{tmax}, S_{noise}\right\}$ and report its lowest observed FID. While for \textit{SA-Solver}, UniPC, DPM-Solver and DDIM, we report the averaged observed FID.
\paragraph{ImageNet 256x256}
For the ImageNet 256x256 experiment, we use three different pretrained models:LDM\footnote{\url{https://ommer-lab.com/files/latent-diffusion/nitro/cin/model.ckpt}}(VP, handcrafted noise schedule) from~\citep{Rombach_2022_CVPR}, DiT-XL/2\footnote{\url{https://dl.fbaipublicfiles.com/DiT/models/DiT-XL-2-256x256.pt}}(VP, linear noise schedule) from~\citep{peebles2022scalable}, Min-SNR\footnote{\url{https://github.com/TiankaiHang/Min-SNR-Diffusion-Training/releases/download/v0.0.0/ema_0.9999_xl.pt}}(VP, cosine noise schedule) from~\citep{hang2023efficient}.
We use classifier-free guidance of scale $s = 1.5$ and a uniform time step schedule because it's the most common setting for guided sampling for ImageNet 256x256.
\paragraph{ImageNet 512x512}
For the ImageNet 256x256 experiment, we use the pre-trained model: DiT-XL/2\footnote{\url{https://dl.fbaipublicfiles.com/DiT/models/DiT-XL-2-512x512.pt}} from~\citep{peebles2022scalable}.
We use classifier-free guidance of scale $s = 1.5$ and a uniform time step schedule following the settings of DiT~\citep{peebles2022scalable}.
\paragraph{LSUN Bedroom 256x256}
For the LSUN Bedroom 256x256 experiment, we use the pretrained model\footnote{\url{https://openaipublic.blob.core.windows.net/diffusion/jul-2021/lsun_bedroom.pt}} from~\citep{dhariwal2021diffusion}.
We use unconditional sampling and a uniform lambda step schedule from~\citep{lu2022dpmsa}.

\section{Additional Results}
We include the detailed FID results in Figure~\ref{fig:ablations}, Figure~\ref{fig:sotas} and Figure~\ref{fig:ins} in the~\cref{tb: results cifar10,tb: results cifar10 sasolver,tb: results imagenet64,tb: results imagenet64 sasolver,tb: results imagenet256,tb: results cifar10 inaccurate score,tb: results imagenet256 inaccurate score,tb: ablation cifar10,tb: ablation imagenet64,tb: ablation imagenet256,tb: ablation lsun 256}. The ablation study shows that stochasticity indeed helps improve sample quality. We find that for small NFEs, the magnitude of stochasticity should be small while for large NFEs, large magnitude of stochasticity helps improve sample quality. It can also be observed that in latent space, SDE converges faster as in Table~\ref{tb: ablation imagenet256}. With only 10 NFEs, $\tau = 0.6$ is better than $\tau = 0$. With 20 NFEs, our \textit{SA-Solver} can achieve 3.87 FID, which outperforms all ODE samplers even with far more steps.

\begin{table}[h]
  \caption{Sample quality measured by FID ↓ on CIFAR10 32x32 dataset (VE-baseline model from~\citep{Karras2022edm}) varying the number of function evaluations (NFE). For the results from EDM$^\dagger$, we reported its lowest observed FID.}
  \label{tb: results cifar10}
  \centering
  \begin{tabular}{lccccccc}
    \toprule
    Method $\backslash$ NFE & 11 & 15 & 23 & 31 & 47 & 63 & 95\\
    \midrule
    DDIM($\eta=0$) & 18.28 & 12.23 & 7.93 & 6.45 & 5.27 & 4.83 & 4.42\\
    DPM-Solver & 9.26 & 5.13 & 4.52 & 4.30 & 4.02 & 3.97 & 3.94\\
    UniPC & \bf6.42 & 5.02 & 4.19 & 4.00 & 3.91 & 3.90 & 3.89\\
    EDM(ODE)$^\dagger$ & 13.46 & 5.62 & 4.04 & 3.82 & 3.79 & 3.80 & 3.79\\
    \midrule
    EDM(SDE)$^\dagger$ & 23.94 & 8.94 & 4.73 & 3.95 & 3.59 & 3.36 & 3.06\\
    SA-Solver & 6.46 & \bf4.91 & \bf3.77 & \bf3.40 & \bf2.92 & \bf2.74 & \bf2.63\\
    \bottomrule
  \end{tabular}
\end{table}

\begin{table}[h]
  \caption{Sample quality measured by FID ↓ on CIFAR10 32x32 dataset (VE-baseline model from~\citep{Karras2022edm}) varying the number of function evaluations (NFE) and the magnitude of stochasticity ($\tau$).}
  \label{tb: results cifar10 sasolver}
  \centering
  \begin{tabular}{lccccccc}
    \toprule
    SA-Solver $\backslash$ NFE & 11 & 15 & 23 & 31 & 47 & 63 & 95\\
    \midrule
    $\tau = 0.0$ & \bf6.46 & 5.06 & 4.22 & 4.02 & 3.93 & 3.92 & 3.91\\
    $\tau = 0.2$ & 6.54 & 5.01 & 4.14 & 3.95 & 3.89 & 3.84 & 3.83\\
    $\tau = 0.4$ & 6.79 & \bf4.91 & 4.03 & 3.81 & 3.76 & 3.74 & 3.67\\
    $\tau = 0.6$ & 7.34 & 4.91 & 3.85 & 3.65 & 3.60 & 3.56 & 3.57\\
    $\tau = 0.8$ & 8.61 & 5.28 & \bf3.77 & 3.48 & 3.45 & 3.43 & 3.50\\
    $\tau = 1.0$ & 10.89 & 6.52 & 3.98 & \bf3.40 & 3.21 & 3.25 & 3.29\\
    $\tau = 1.2$ & 14.49 & 9.33 & 5.19 & 3.69 & 3.00 & 3.03 & 3.07\\
    $\tau = 1.4$ & 20.19 & 13.76 & 7.60 & 4.91 & \bf2.92 & 2.86 & 2.93\\
    $\tau = 1.6$ & 27.90 & 20.51 & 11.89 & 8.07 & 3.25 & \bf2.74 & 2.80\\
    $\tau = 1.8$ & 36.26 & 29.43 & 18.13 & 14.00 & 4.60 & 2.83 & \bf2.63\\
    \bottomrule
  \end{tabular}
\end{table}

\begin{table}[h]
  \caption{Sample quality measured by FID ↓ on ImageNet 64x64 dataset (model from~\citep{dhariwal2021diffusion}) varying the number of function evaluations (NFE). For the results from EDM$^\dagger$, we reported its lowest observed FID.}
  \label{tb: results imagenet64}
  \centering
  \begin{tabular}{lccccccc}
    \toprule
    Method $\backslash$ NFE & 15 & 23 & 31 & 47 & 63 & 95\\
    \midrule
    DDIM($\eta=0$) & 8.48 & 5.39 & 4.27 & 3.46 & 3.17 & 2.95\\
    DPM-Solver & 3.49 & 3.04 & 2.88 & 2.80 & 2.76 & 2.74\\
    UniPC & 3.51 & 2.84 & 2.75 & 2.72 & 2.71 & 2.72\\
    EDM(ODE)$^\dagger$ & 4.78 & 3.12 & 2.84 & 2.73 & 2.73 & 2.67\\
    \midrule
    EDM(SDE)$^\dagger$ & 8.94 & 4.30 & 3.40 & 2.72 & 2.44 & 2.22 \\
    SA-Solver & \bf3.41 & \bf2.61 & \bf2.23 & \bf1.95 & \bf1.88 & \bf1.81\\
    \bottomrule
  \end{tabular}
\end{table}

\begin{table}[h]
  \caption{Sample quality measured by FID ↓ on ImageNet 64x64 dataset (model from~\citep{dhariwal2021diffusion}) varying the number of function evaluations (NFE) and the magnitude of stochasticity ($\tau$).}
  \label{tb: results imagenet64 sasolver}
  \centering
  \begin{tabular}{lccccccc}
    \toprule
    SA-Solver $\backslash$ NFE & 15 & 23 & 31 & 47 & 63 & 95\\
    \midrule
    $\tau = 0.0$ & 3.48 & 2.72 & 2.72 & 2.66 & 2.64 & 2.71\\
    $\tau = 0.2$ & \bf3.41 & 2.80 & 2.63 & 2.63 & 2.64 & 2.60\\
    $\tau = 0.4$ & 3.52 & 2.70 & 2.51 & 2.51 & 2.49 & 2.49\\
    $\tau = 0.6$ & 3.98 & \bf2.61 & 2.44 & 2.39 & 2.34 & 2.35\\
    $\tau = 0.8$ & 5.80 & 2.68 & 2.32 & 2.24 & 2.19 & 2.21 \\
    $\tau = 1.0$ & 10.06 & 3.38 & \bf2.23& 2.09 & 2.08 & 2.08\\
    $\tau = 1.2$ & 18.39 & 5.52 & 2.52 & \bf1.95 & 1.97 & 2.00\\
    $\tau = 1.4$ & 32.42 & 10.37 & 3.83 & 2.05 & 1.89 & 1.89\\
    $\tau = 1.6$ & 52.31 & 19.64 & 7.10 & 2.60 & \bf1.88 & \bf1.81\\
    \bottomrule
  \end{tabular}
\end{table}

\begin{table}[h]
  \caption{Sample quality measured by FID ↓ on CIFAR10 32x32 dataset (model trained by ourselves; see Section~\ref{sec: inaccurate score}) varying the sampling method and the training epoch.}
  \label{tb: results cifar10 inaccurate score}
  \centering
  \begin{tabular}{lccccccc}
    \toprule
    method (NFE = 31) $\backslash$ epoch & 1250 & 1300 & 1350 & 1400 & 1450 & 1500\\
    \midrule
    DDIM & 39.32 & 29.79 & 19.59 & 12.98 & 8.63 & 6.64\\
    DPM-Solver & 30.57 & 22.11 & 13.85 & 8.85 & 5.68 & 4.55\\
    EDM(ODE) &27.51 & 19.82 & 12.37 & 8.03 & 5.33 & 4.32\\
    SA-Solver($\tau = 0.6$) & 20.55 & 14.89 & 9.71 & 6.55 & 4.61 & 4.08\\
    SA-Solver($\tau = 1.0$) & \bf13.62 & \bf10.01 & \bf6.79 & \bf4.81 & \bf3.70 & \bf3.47 \\
    \bottomrule
  \end{tabular}
\end{table}

\begin{table}[h]
  \caption{Sample quality measured by FID ↓ on ImageNet 256x256 dataset (model trained by ourselves; see Section~\ref{sec: inaccurate score}) varying the sampling method and the training epoch.}
  \label{tb: results imagenet256 inaccurate score}
  \centering
  \begin{tabular}{lcccccc}
    \toprule
    method (NFE = 40) $\backslash$ epoch & 50 & 100 & 150 & 200 & 250\\
    \midrule
    DDIM & 19.40 & 9.61 & 6.75 & 5.86 & 5.12 \\
    DPM-Solver & 18.75 & 8.96 & 6.15 & 5.28 & 4.62 \\
    SA-Solver($\tau = 0.4$) & 17.93 & 8.39 & 5.69 & 4.84 & 4.24 \\
    SA-Solver($\tau = 0.8$) & \bf16.57 & \bf7.54 & \bf5.15 & \bf4.48 & \bf3.99  \\
    \bottomrule
  \end{tabular}
\end{table}

\begin{table}[h]
  \caption{Sample quality measured by FID ↓ on ImageNet 256x256 dataset(model from~\citep{Rombach_2022_CVPR}) varying the number of function evaluations (NFE).}
  \label{tb: results imagenet256}
  \centering
  \begin{tabular}{lccccccc}
    \toprule
    Method $\backslash$ NFE & 5 & 10 & 20 & 40 & 60 & 80 & 100\\
    \midrule
    DDIM($\eta=0$) & 58.68 & 16.32 & 6.82 & 4.71 & 4.45 & 4.28 & 4.23\\
    DPM-Solver & 166.88 & 6.19 & 5.51 & 4.17 & 4.18 & 4.21 & 4.15\\
    UniPC & 12.79 & 4.96 & 4.21 & 4.14 & 4.12 & 4.09 & 4.10\\
    \midrule
    DDIM($\eta=1$) & 138.91 & 50.05 & 14.60 & 6.09 & 4.56 & 4.12 & 3.87\\
    SA-Solver & \bf11.46 & \bf4.82 & \bf3.88 & \bf3.47 & \bf3.37 & \bf3.37 & \bf3.33\\
    \bottomrule
  \end{tabular}
\end{table}

\begin{table}[h]
  \caption{Ablation study on the effect of the magnitude of stochasticity using \textit{SA-Solver}. Sample quality measured by FID ↓ on CIFAR10 32x32 dataset(model from~\citep{Karras2022edm}) varying the number of function evaluations (NFE) and the magnitude of stochasticity($\tau$).}
  \label{tb: ablation cifar10}
  \centering
  \begin{tabular}{lccccccc}
    \toprule
    $\tau$ $\backslash$ NFE & 15 & 23 & 31 & 47 & 63 & 95 & 127\\
    \midrule
    0 & \bf4.84 & 4.11 & 3.94 & 3.86 & 3.88 & 3.87 & 3.87\\
    0.2 & 4.96 & 4.04 & 3.84 & 3.75 & 3.74 & 3.79 & 3.75\\
    0.4 & 5.27 & 4.00 & 3.87 & 3.64 & 3.71 & 3.70 & 3.62\\
    0.6 & 6.05 & \bf3.95 & 3.61 & 3.49 & 3.46 & 3.53 & 3.43\\
    0.8 & 7.40 & 4.12 & 3.53 & 3.28 & 3.37 & 3.32 & 3.30\\
    1.0 & 10.00 & 4.49 & \bf3.41 & 3.18 & 3.24 & 3.17 & 3.15\\
    1.2 & 13.58 & 5.14 & 3.59 & 3.10 & 3.02 & 2.97 & 3.05\\
    1.4 & 17.88 & 6.55 & 3.94 & \bf3.04 & \bf3.01 & \bf2.89 & 2.95\\
    1.6 & 22.42 & 8.44 & 4.69 & 3.20 & 3.02 & 2.94 & \bf2.89\\
    \bottomrule
  \end{tabular}
\end{table}

\begin{table}[h]
  \caption{Ablation study on the effect of the magnitude of stochasticity using \textit{SA-Solver}. Sample quality measured by FID ↓ on ImageNet 64x64 dataset(model from~\citep{dhariwal2021diffusion}) varying the number of function evaluations (NFE) and the magnitude of stochasticity($\tau$).}
  \label{tb: ablation imagenet64}
  \centering
  \begin{tabular}{lccccccc}
    \toprule
    $\tau$ $\backslash$ NFE & 20 & 40 & 60 & 80 & 100\\
    \midrule
    0 & \bf3.30 & 2.83 & 2.78 & 2.79 & 2.82 \\
    0.2 & 3.32 & 2.77 & 2.72 & 2.74 & 2.79\\
    0.4 & 3.37 & 2.68 & 2.63 & 2.62 & 2.59\\
    0.6 & 3.61 & 2.57 & 2.49 & 2.49 & 2.47\\
    0.8 & 4.19 & \bf2.51 & 2.40 & 2.34 & 2.30\\
    1.0 & 5.55 & 2.54 & 2.32 & 2.21 & 2.20\\
    1.2 & 7.93 & 2.77 & \bf2.29 & \bf2.14 & 2.14\\
    1.4 & 11.55 & 3.20 & 2.40 & \bf2.14 & \bf2.08\\
    1.6 & 16.15 & 3.97 & 2.60 & 2.20 & 2.09\\
    \bottomrule
  \end{tabular}
\end{table}

\begin{table}[h]
  \caption{Ablation study on the effect of the magnitude of stochasticity using \textit{SA-Solver}. Sample quality measured by FID ↓ on ImageNet 256x256 dataset(model from~\citep{Rombach_2022_CVPR}) varying the number of function evaluations (NFE) and the magnitude of stochasticity($\tau$).}
  \label{tb: ablation imagenet256}
  \centering
  \begin{tabular}{lccccccc}
    \toprule
    $\tau$ $\backslash$ NFE & 5 & 10 & 20 & 40 & 60 & 80 & 100\\
    \midrule
    0 & \bf11.46 & 5.04 & 4.30 & 4.16 & 4.12 & 4.10 & 4.16\\
    0.2 & 11.88 & 4.89 & 4.29 & 4.05 & 4.02 & 4.01 & 4.03\\
    0.4 & 12.69 & 4.84 & 4.14 & 3.86 & 3.84 & 3.83 & 3.84\\
    0.6 & 14.84 & \bf4.82 & 3.99 & 3.63 & 3.62 & 3.63 & 3.61\\
    0.8 & 18.82 & 5.09 & \bf3.87 & 3.55 & 3.50 & 3.47 & 3.47\\
    1.0 & 25.96 & 6.06 & 3.88 & \bf3.47 & 3.41 & 3.39 & 3.38\\
    1.2 & 37.20 & 8.23 & 3.92 & \bf3.47 & \bf3.37 & \bf3.37 & \bf3.33\\
    1.4 & 53.03 & 12.93 & 4.08 & 3.53 & 3.40 & 3.38 & 3.36\\
    1.6 & 71.30 & 24.08 & 4.43 & 3.56 & 3.44 & 3.45 & 3.33\\
    \bottomrule
  \end{tabular}
\end{table}

\begin{table}[h]
  \caption{Ablation study on the effect of the magnitude of stochasticity using \textit{SA-Solver}. Sample quality measured by FID ↓ on LSUN Bedroom 256x256 dataset(model from~\citep{dhariwal2021diffusion}) varying the number of function evaluations (NFE) and the magnitude of stochasticity($\tau$).}
  \label{tb: ablation lsun 256}
  \centering
  \begin{tabular}{lccccccc}
    \toprule
    $\tau$ $\backslash$ NFE & 20 & 40 & 60 & 80 & 100\\
    \midrule
    0 & 3.60 & 3.14 & 3.06 & 3.09 & 3.07\\
    0.2 & \bf3.51 & 3.12 & 3.00 & 2.99 & 2.99\\
    0.4 & 3.70 & 3.09 & 2.97 & 3.03 & 3.16\\
    0.6 & 4.10 & \bf3.08 & \bf2.95 & 2.99 & 3.03\\
    0.8 & 4.75 & 3.11 & 2.97 & 2.89 & 2.99\\
    1.0 & 6.18 & 3.28 & 2.98 & 2.90 & \bf2.91\\
    1.2 & 8.54 & 3.53 & 3.12 & \bf2.86 & 3.00\\
    1.4 & 12.14 & 4.25 & 3.24 & 2.98 & 2.93\\
    1.6 & 16.63 & 5.50 & 3.75 & 3.18 & 3.10\\
    \bottomrule
  \end{tabular}
\end{table}

\section{Additional Samples}
We include additional samples in this section. In Figure~\ref{figure: cifar10} and Figure~\ref{figure: imagenet64} we compare samples of our proposed \textit{SA-Solver} with other diffusion samplers. In Figure~\ref{figure: lsun} and Figure~\ref{figure: dit512}, we compare samples of our proposed \textit{SA-Solver} under different NFEs and $\tau$. In Figure~\ref{figure: t2i1} and Figure~\ref{figure: t2i2}, we compare samples of our proposed \textit{SA-Solver} with other diffusion samplers on text-to-image tasks. Our \textit{SA-Solver} can generate more diverse samples with more details.

\begin{figure}[!htb]
    \centering
    \setlength{\tabcolsep}{1pt}
    \begin{tabular}{c c c c c}
     & NFE = 15 & NFE = 23 & NFE = 47 & NFE = 95 \\
    \raisebox{5.0\height}{DDIM($\eta=0$)} &
    \begin{subfigure}{0.2\textwidth}
        \includegraphics[width=\textwidth]{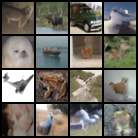}
    \end{subfigure} &
    \begin{subfigure}{0.2\textwidth}
        \includegraphics[width=\textwidth]{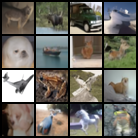}
    \end{subfigure} &
    \begin{subfigure}{0.2\textwidth}
        \includegraphics[width=\textwidth]{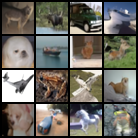}
    \end{subfigure} &
    \begin{subfigure}{0.2\textwidth}
        \includegraphics[width=\textwidth]{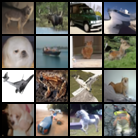}
    \end{subfigure} \\
    \raisebox{5.0\height}{DPM-Solver} &
    \begin{subfigure}{0.2\textwidth}
        \includegraphics[width=\textwidth]{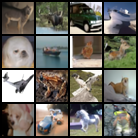}
    \end{subfigure} &
    \begin{subfigure}{0.2\textwidth}
        \includegraphics[width=\textwidth]{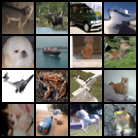}
    \end{subfigure} &
    \begin{subfigure}{0.2\textwidth}
        \includegraphics[width=\textwidth]{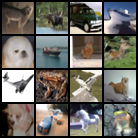}
    \end{subfigure} &
    \begin{subfigure}{0.2\textwidth}
        \includegraphics[width=\textwidth]{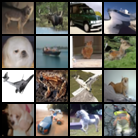}
    \end{subfigure} \\
    \raisebox{5.0\height}{UniPC} &
    \begin{subfigure}{0.2\textwidth}
        \includegraphics[width=\textwidth]{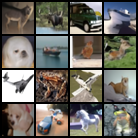}
    \end{subfigure} &
    \begin{subfigure}{0.2\textwidth}
        \includegraphics[width=\textwidth]{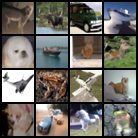}
    \end{subfigure} &
    \begin{subfigure}{0.2\textwidth}
        \includegraphics[width=\textwidth]{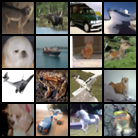}
    \end{subfigure} &
    \begin{subfigure}{0.2\textwidth}
        \includegraphics[width=\textwidth]{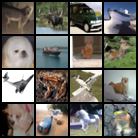}
    \end{subfigure} \\
        \raisebox{5.0\height}{EDM(ODE)} &
    \begin{subfigure}{0.2\textwidth}
        \includegraphics[width=\textwidth]{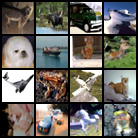}
    \end{subfigure} &
    \begin{subfigure}{0.2\textwidth}
        \includegraphics[width=\textwidth]{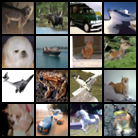}
    \end{subfigure} &
    \begin{subfigure}{0.2\textwidth}
        \includegraphics[width=\textwidth]{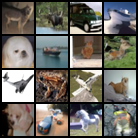}
    \end{subfigure} &
    \begin{subfigure}{0.2\textwidth}
        \includegraphics[width=\textwidth]{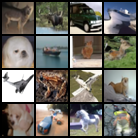}
    \end{subfigure} \\
        \raisebox{5.0\height}{EDM(SDE)} &
    \begin{subfigure}{0.2\textwidth}
        \includegraphics[width=\textwidth]{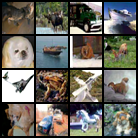}
    \end{subfigure} &
    \begin{subfigure}{0.2\textwidth}
        \includegraphics[width=\textwidth]{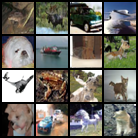}
    \end{subfigure} &
    \begin{subfigure}{0.2\textwidth}
        \includegraphics[width=\textwidth]{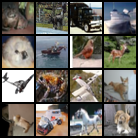}
    \end{subfigure} &
    \begin{subfigure}{0.2\textwidth}
        \includegraphics[width=\textwidth]{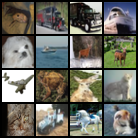}
    \end{subfigure} \\
            \raisebox{5.0\height}{SA-Solver(ours)} &
    \begin{subfigure}{0.2\textwidth}
        \includegraphics[width=\textwidth]{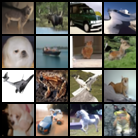}
    \end{subfigure} &
    \begin{subfigure}{0.2\textwidth}
        \includegraphics[width=\textwidth]{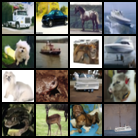}
    \end{subfigure} &
    \begin{subfigure}{0.2\textwidth}
        \includegraphics[width=\textwidth]{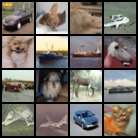}
    \end{subfigure} &
    \begin{subfigure}{0.2\textwidth}
        \includegraphics[width=\textwidth]{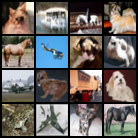}
    \end{subfigure} \\
    \end{tabular}
    \caption{Samples by DDIM, DPM-Solver, UniPC, EDM(ODE), EDM(SDE) and our SA-Solver with 15, 23, 47, 95 NFEs with the same random seed from CIFAR10 32x32 VE baseline model~\citep{Karras2022edm}}
\label{figure: cifar10}
\end{figure}

\begin{figure}[!htb]
    \centering
    \setlength{\tabcolsep}{1pt}
    \begin{tabular}{c c c c c}
     & NFE = 15 & NFE = 23 & NFE = 47 & NFE = 95 \\
    \raisebox{5.0\height}{DDIM($\eta=0$)} &
    \begin{subfigure}{0.2\textwidth}
        \includegraphics[width=\textwidth]{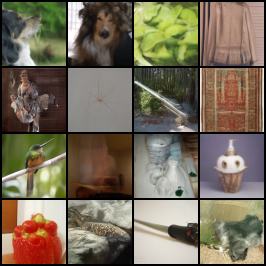}
    \end{subfigure} &
    \begin{subfigure}{0.2\textwidth}
        \includegraphics[width=\textwidth]{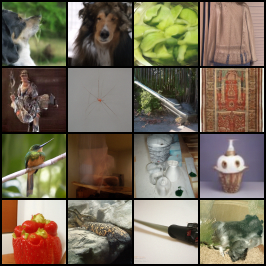}
    \end{subfigure} &
    \begin{subfigure}{0.2\textwidth}
        \includegraphics[width=\textwidth]{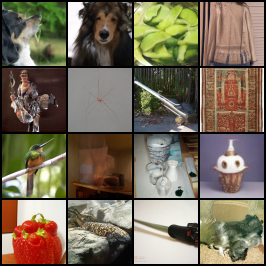}
    \end{subfigure} &
    \begin{subfigure}{0.2\textwidth}
        \includegraphics[width=\textwidth]{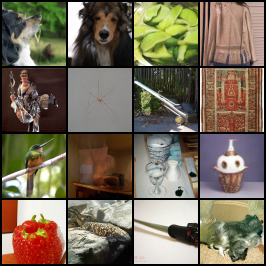}
    \end{subfigure} \\
    \raisebox{5.0\height}{DPM-Solver} &
    \begin{subfigure}{0.2\textwidth}
        \includegraphics[width=\textwidth]{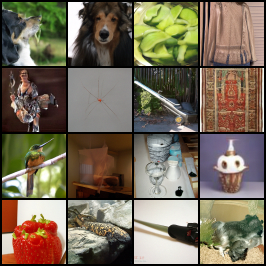}
    \end{subfigure} &
    \begin{subfigure}{0.2\textwidth}
        \includegraphics[width=\textwidth]{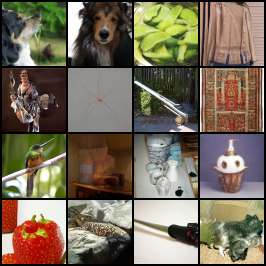}
    \end{subfigure} &
    \begin{subfigure}{0.2\textwidth}
        \includegraphics[width=\textwidth]{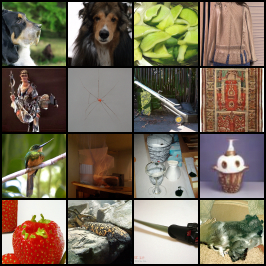}
    \end{subfigure} &
    \begin{subfigure}{0.2\textwidth}
        \includegraphics[width=\textwidth]{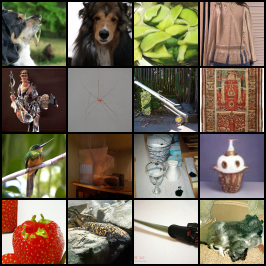}
    \end{subfigure} \\
    \raisebox{5.0\height}{UniPC} &
    \begin{subfigure}{0.2\textwidth}
        \includegraphics[width=\textwidth]{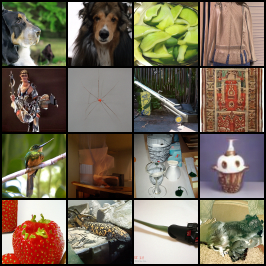}
    \end{subfigure} &
    \begin{subfigure}{0.2\textwidth}
        \includegraphics[width=\textwidth]{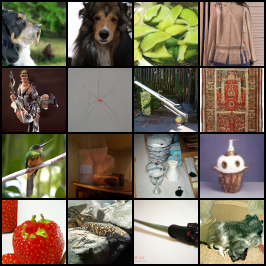}
    \end{subfigure} &
    \begin{subfigure}{0.2\textwidth}
        \includegraphics[width=\textwidth]{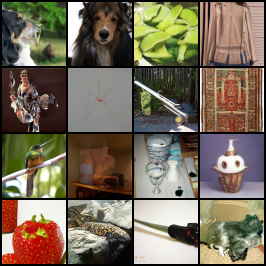}
    \end{subfigure} &
    \begin{subfigure}{0.2\textwidth}
        \includegraphics[width=\textwidth]{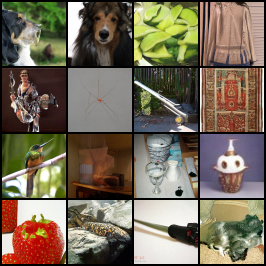}
    \end{subfigure} \\
    \raisebox{5.0\height}{SA-Solver(ours)} &
    \begin{subfigure}{0.2\textwidth}
        \includegraphics[width=\textwidth]{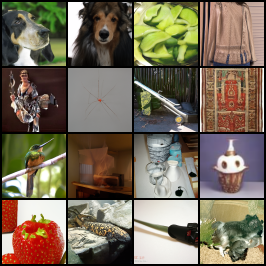}
    \end{subfigure} &
    \begin{subfigure}{0.2\textwidth}
        \includegraphics[width=\textwidth]{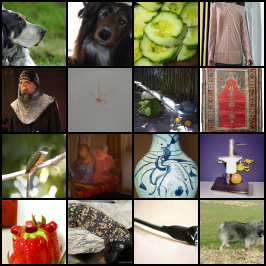}
    \end{subfigure} &
    \begin{subfigure}{0.2\textwidth}
        \includegraphics[width=\textwidth]{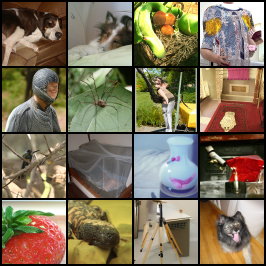}
    \end{subfigure} &
    \begin{subfigure}{0.2\textwidth}
        \includegraphics[width=\textwidth]{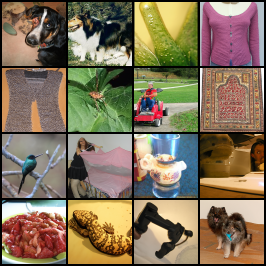}
    \end{subfigure} \\
    \end{tabular}
    \caption{Samples by DDIM, DPM-Solver, UniPC, and our SA-Solver with 15, 23, 47, 95 NFEs with the same random seed from ImageNet 64x64 model~\citep{Karras2022edm}(conditional sampling)}
\label{figure: imagenet64}
\end{figure}

\begin{figure}[!htb]
    \centering
    \setlength{\tabcolsep}{1pt}
    \begin{tabular}{c c c c c}
     & NFE = 20 & NFE = 40 & NFE = 60 & NFE = 100\\
    \raisebox{5.0\height}{SA-Solver($\eta=0$)} &
    \begin{subfigure}{0.2\textwidth}
        \includegraphics[width=\textwidth]{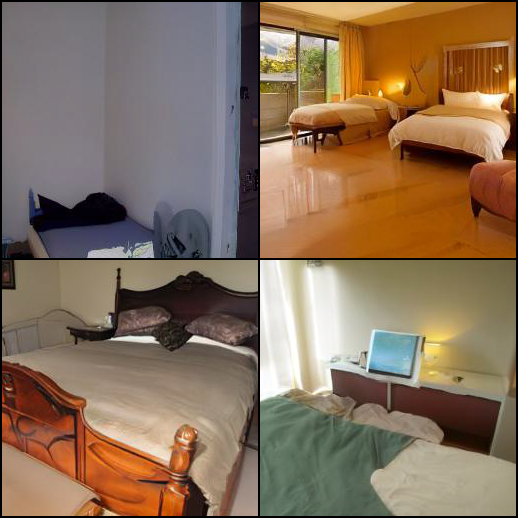}
    \end{subfigure} &
    \begin{subfigure}{0.2\textwidth}
        \includegraphics[width=\textwidth]{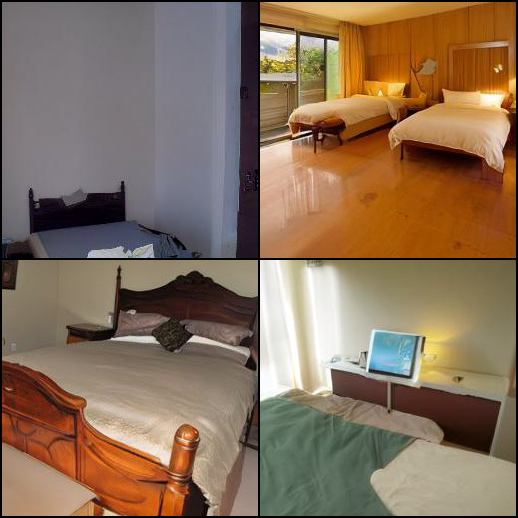}
    \end{subfigure} &
    \begin{subfigure}{0.2\textwidth}
        \includegraphics[width=\textwidth]{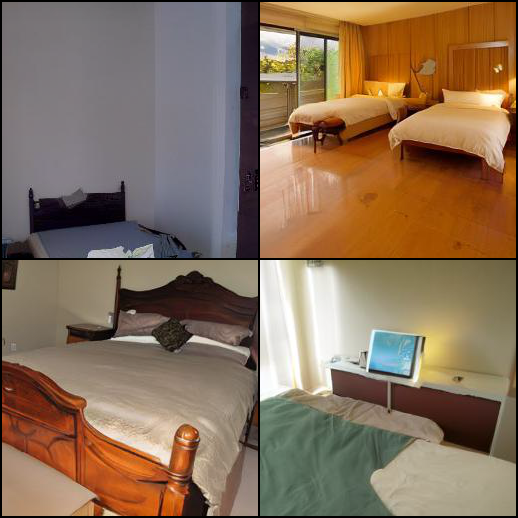}
    \end{subfigure} &
    \begin{subfigure}{0.2\textwidth}
        \includegraphics[width=\textwidth]{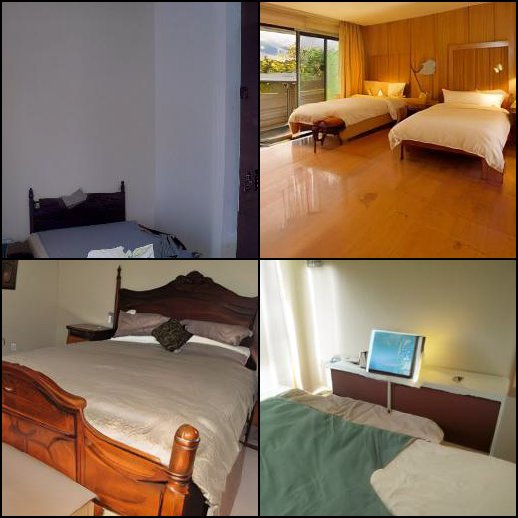}
    \end{subfigure} \\
    \raisebox{5.0\height}{SA-Solver($\eta=0.2$)} &
    \begin{subfigure}{0.2\textwidth}
        \includegraphics[width=\textwidth]{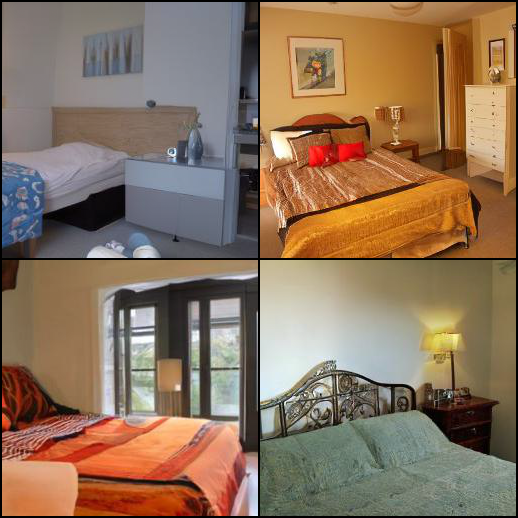}
    \end{subfigure} &
    \begin{subfigure}{0.2\textwidth}
        \includegraphics[width=\textwidth]{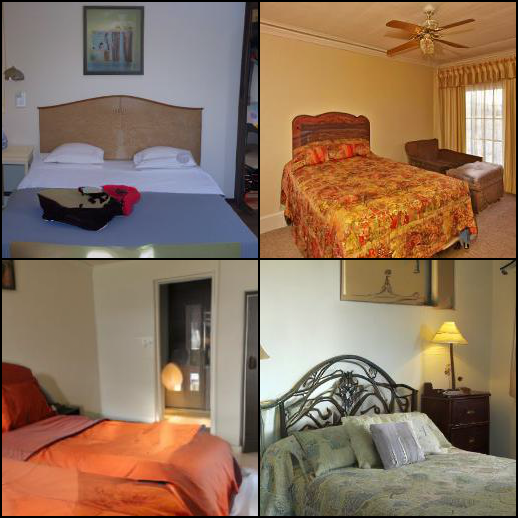}
    \end{subfigure} &
    \begin{subfigure}{0.2\textwidth}
        \includegraphics[width=\textwidth]{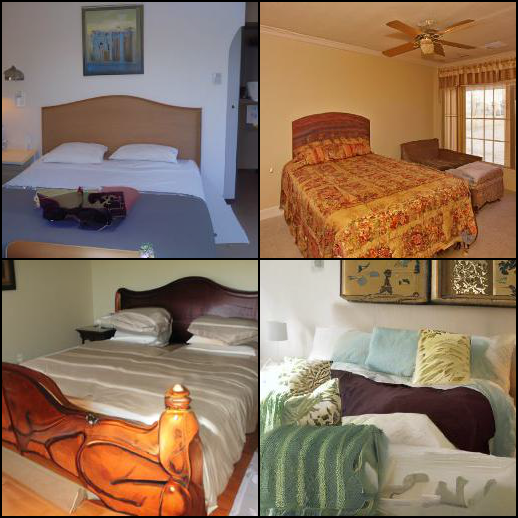}
    \end{subfigure} &
    \begin{subfigure}{0.2\textwidth}
        \includegraphics[width=\textwidth]{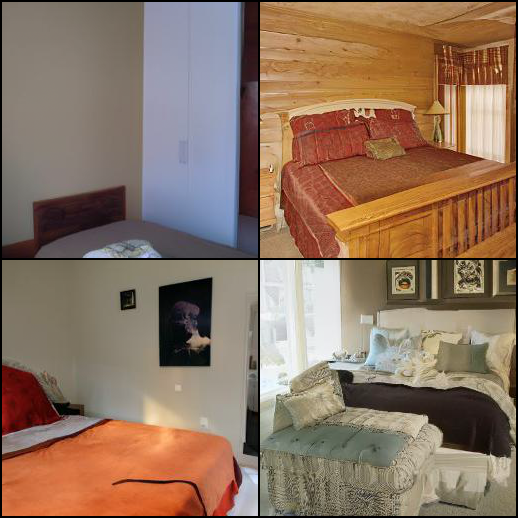}
    \end{subfigure} \\
    \raisebox{5.0\height}{SA-Solver($\eta=0.4$)} &
    \begin{subfigure}{0.2\textwidth}
        \includegraphics[width=\textwidth]{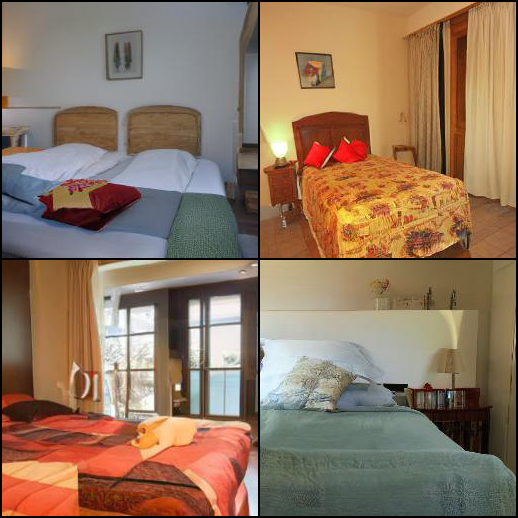}
    \end{subfigure} &
    \begin{subfigure}{0.2\textwidth}
        \includegraphics[width=\textwidth]{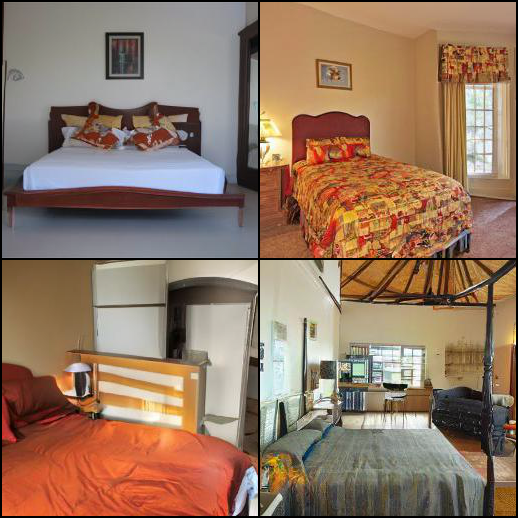}
    \end{subfigure} &
    \begin{subfigure}{0.2\textwidth}
        \includegraphics[width=\textwidth]{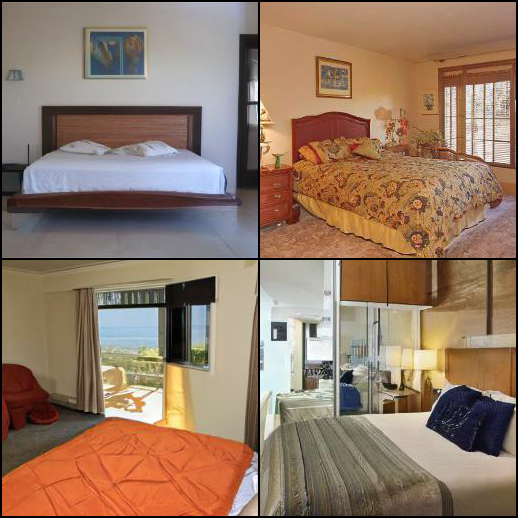}
    \end{subfigure} &
    \begin{subfigure}{0.2\textwidth}
        \includegraphics[width=\textwidth]{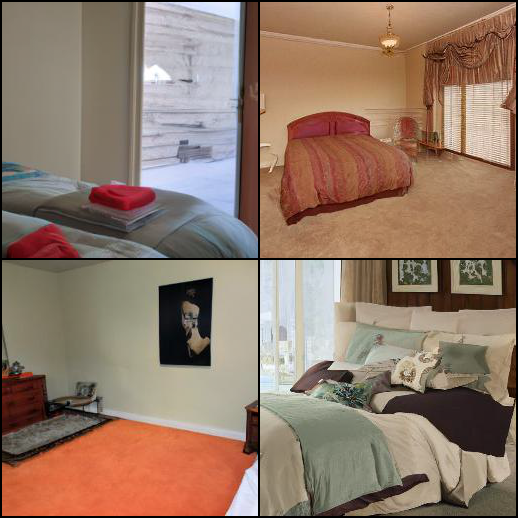}
    \end{subfigure} \\
    \raisebox{5.0\height}{SA-Solver($\eta=0.6$)} &
    \begin{subfigure}{0.2\textwidth}
        \includegraphics[width=\textwidth]{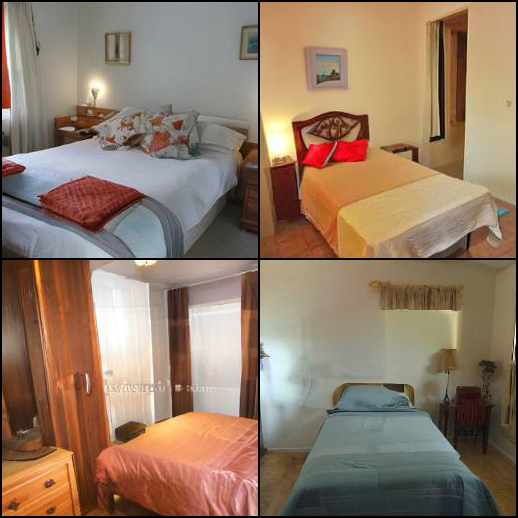}
    \end{subfigure} &
    \begin{subfigure}{0.2\textwidth}
        \includegraphics[width=\textwidth]{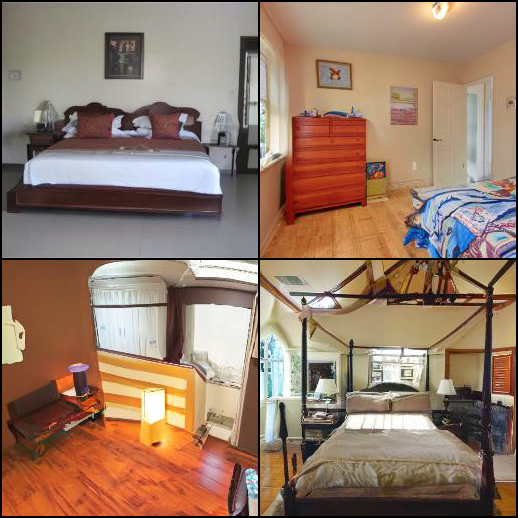}
    \end{subfigure} &
    \begin{subfigure}{0.2\textwidth}
        \includegraphics[width=\textwidth]{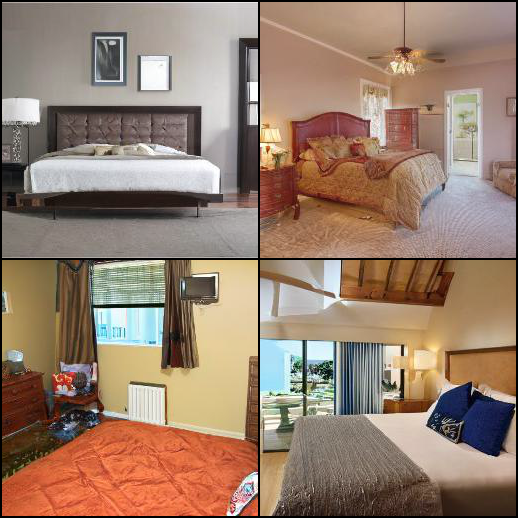}
    \end{subfigure} &
    \begin{subfigure}{0.2\textwidth}
        \includegraphics[width=\textwidth]{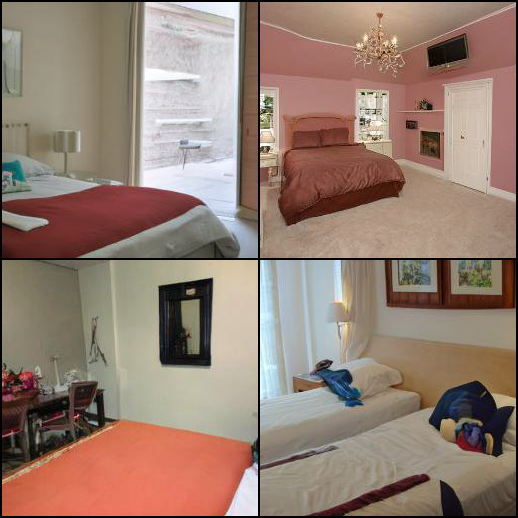}
    \end{subfigure} \\
    \raisebox{5.0\height}{SA-Solver($\eta=0.8$)} &
    \begin{subfigure}{0.2\textwidth}
        \includegraphics[width=\textwidth]{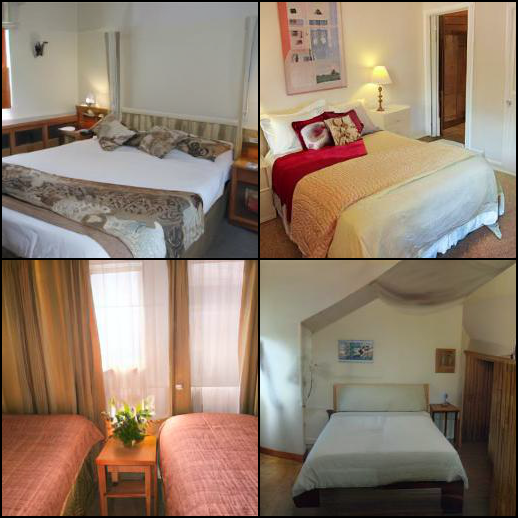}
    \end{subfigure} &
    \begin{subfigure}{0.2\textwidth}
        \includegraphics[width=\textwidth]{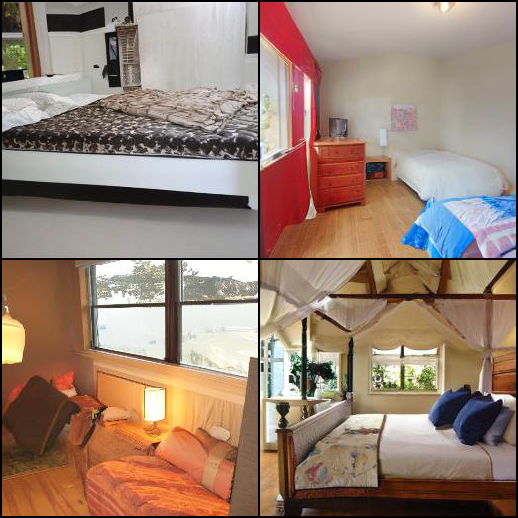}
    \end{subfigure} &
    \begin{subfigure}{0.2\textwidth}
        \includegraphics[width=\textwidth]{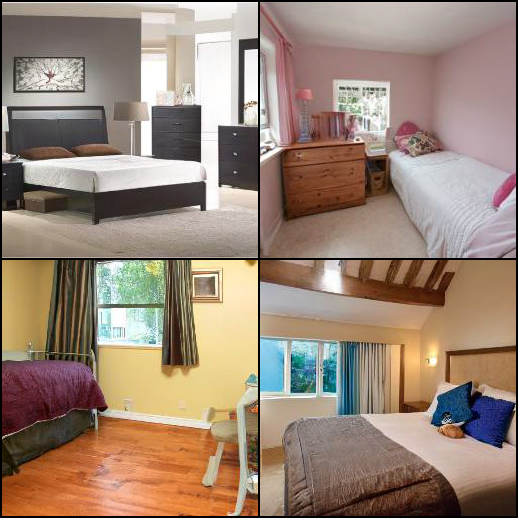}
    \end{subfigure} &
    \begin{subfigure}{0.2\textwidth}
        \includegraphics[width=\textwidth]{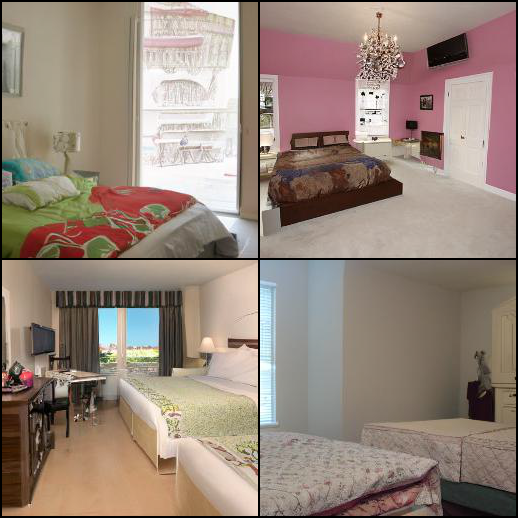}
    \end{subfigure} \\    
    \raisebox{5.0\height}{SA-Solver($\eta=1.0$)} &
    \begin{subfigure}{0.2\textwidth}
        \includegraphics[width=\textwidth]{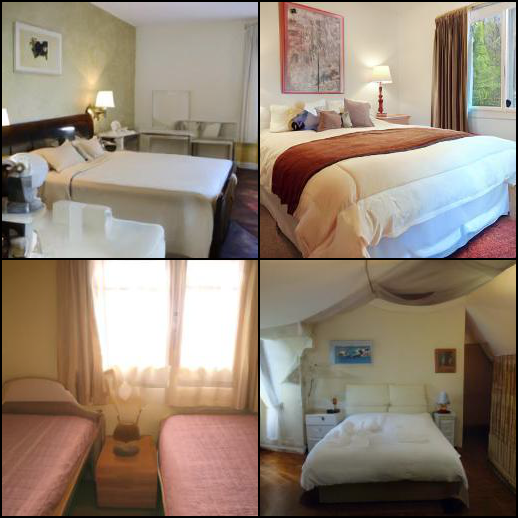}
    \end{subfigure} &
    \begin{subfigure}{0.2\textwidth}
        \includegraphics[width=\textwidth]{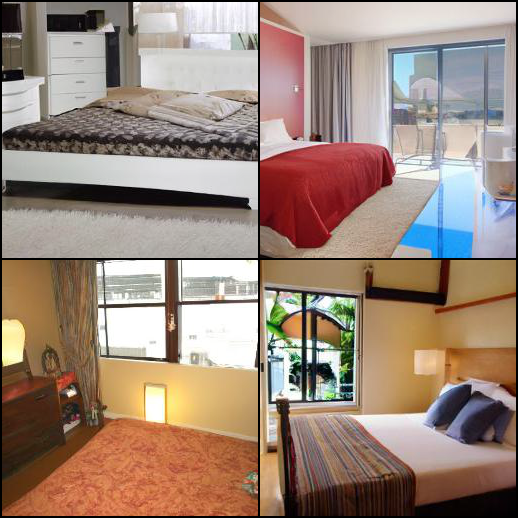}
    \end{subfigure} &
    \begin{subfigure}{0.2\textwidth}
        \includegraphics[width=\textwidth]{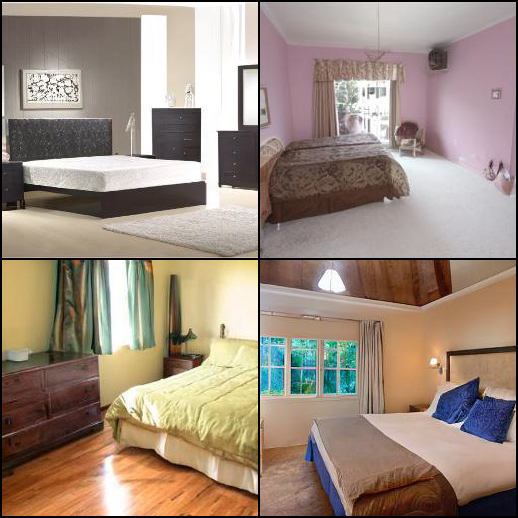}
    \end{subfigure} &
    \begin{subfigure}{0.2\textwidth}
        \includegraphics[width=\textwidth]{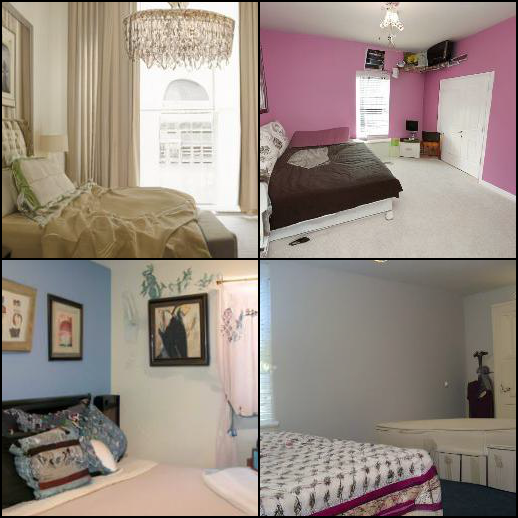}
    \end{subfigure} \\
    \raisebox{5.0\height}{SA-Solver($\eta=1.2$)} &
    \begin{subfigure}{0.2\textwidth}
        \includegraphics[width=\textwidth]{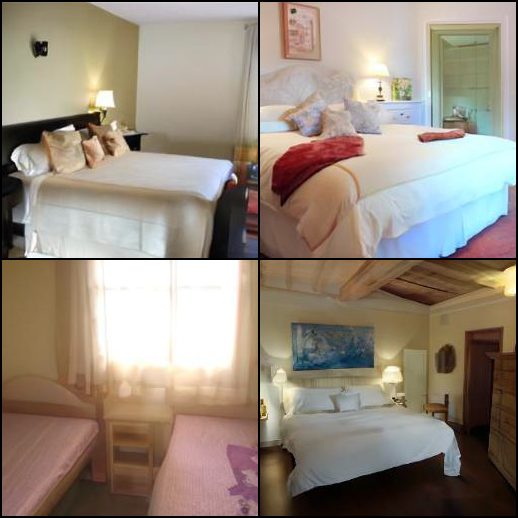}
    \end{subfigure} &
    \begin{subfigure}{0.2\textwidth}
        \includegraphics[width=\textwidth]{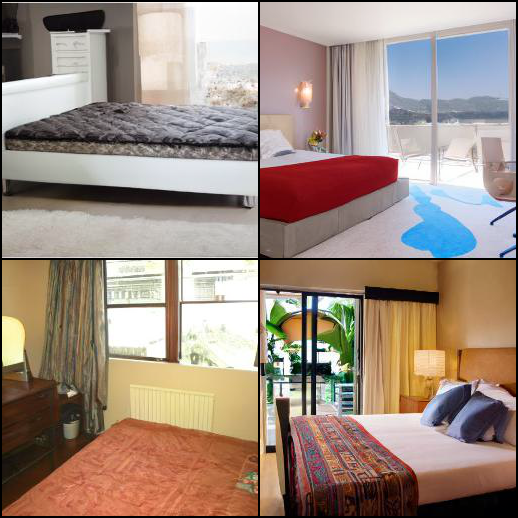}
    \end{subfigure} &
    \begin{subfigure}{0.2\textwidth}
        \includegraphics[width=\textwidth]{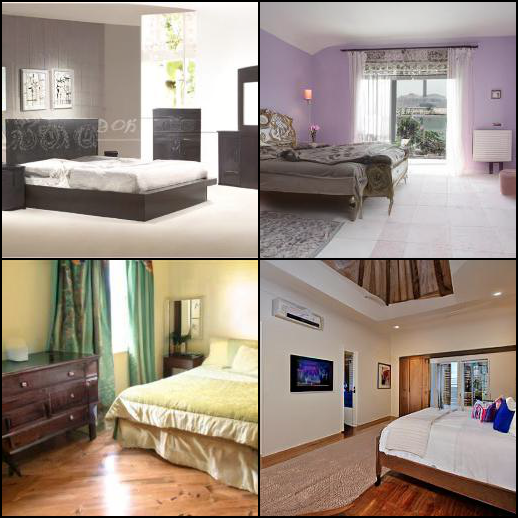}
    \end{subfigure} &
    \begin{subfigure}{0.2\textwidth}
        \includegraphics[width=\textwidth]{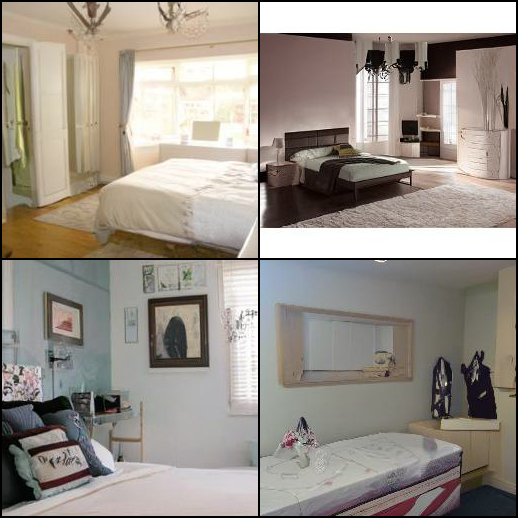}
    \end{subfigure} \\
    \end{tabular}
    \caption{Samples by SA-Solver with 20, 40, 60, 100 NFEs varying stochasticity($\tau$) with the same random seed from LSUN-Bedroom 256x256 model~\citep{dhariwal2021diffusion}(unconditional sampling).}
\label{figure: lsun}
\end{figure}

\begin{figure}[!htb]
    \centering
    \setlength{\tabcolsep}{1pt}
    \begin{tabular}{c c }
     & NFE = 60  \\
    \raisebox{10.0\height}{SA-Solver($\tau = 0$)} &
    \begin{subfigure}{0.7\textwidth}
        \includegraphics[width=\textwidth]{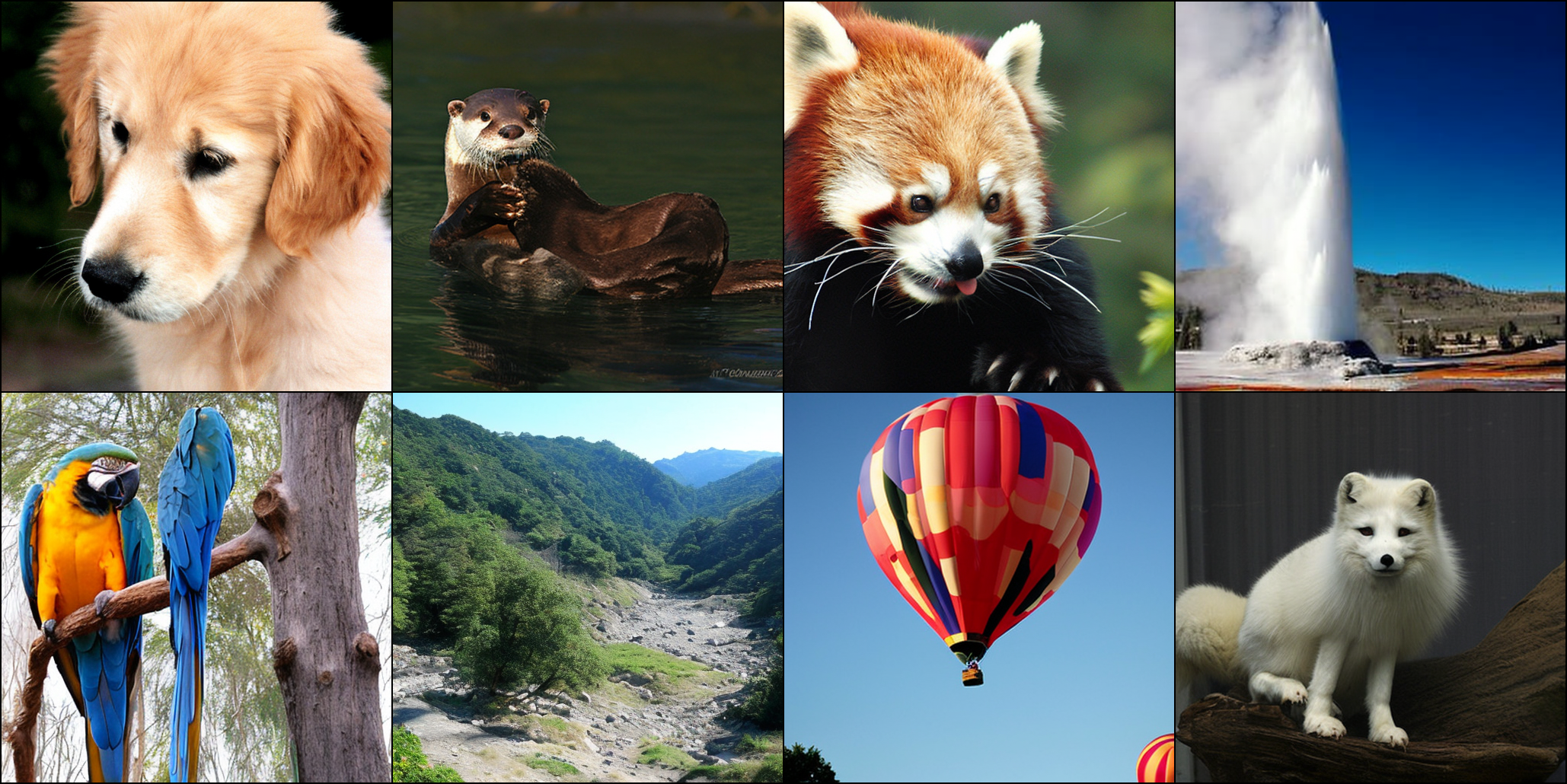} 
    \end{subfigure} \\
        \raisebox{10.0\height}{SA-Solver($\tau = 0.4$)} &
    \begin{subfigure}{0.7\textwidth}
        \includegraphics[width=\textwidth]{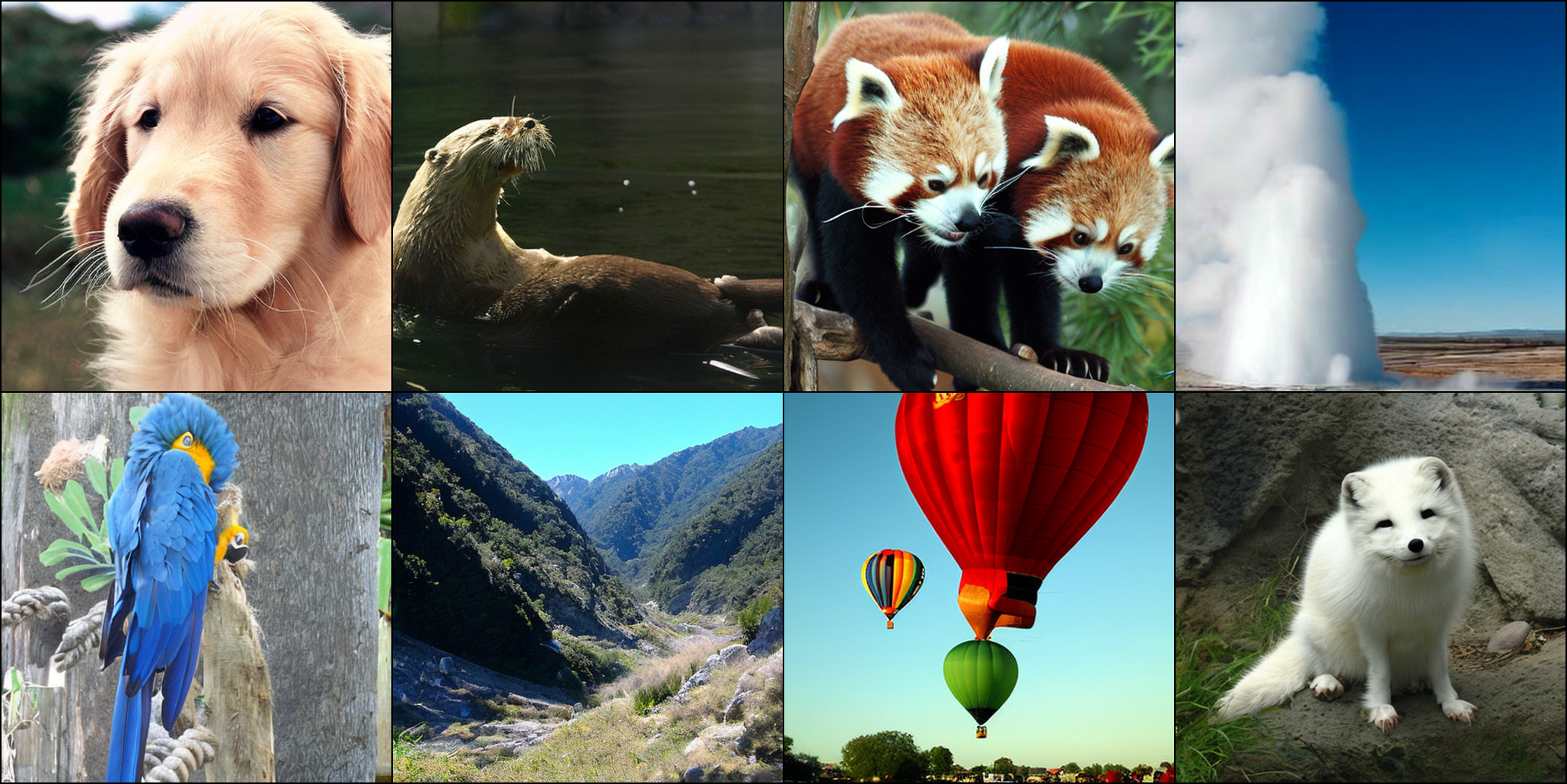} 
    \end{subfigure} \\
        \raisebox{10.0\height}{SA-Solver($\tau = 0.8$)} &
    \begin{subfigure}{0.7\textwidth}
        \includegraphics[width=\textwidth]{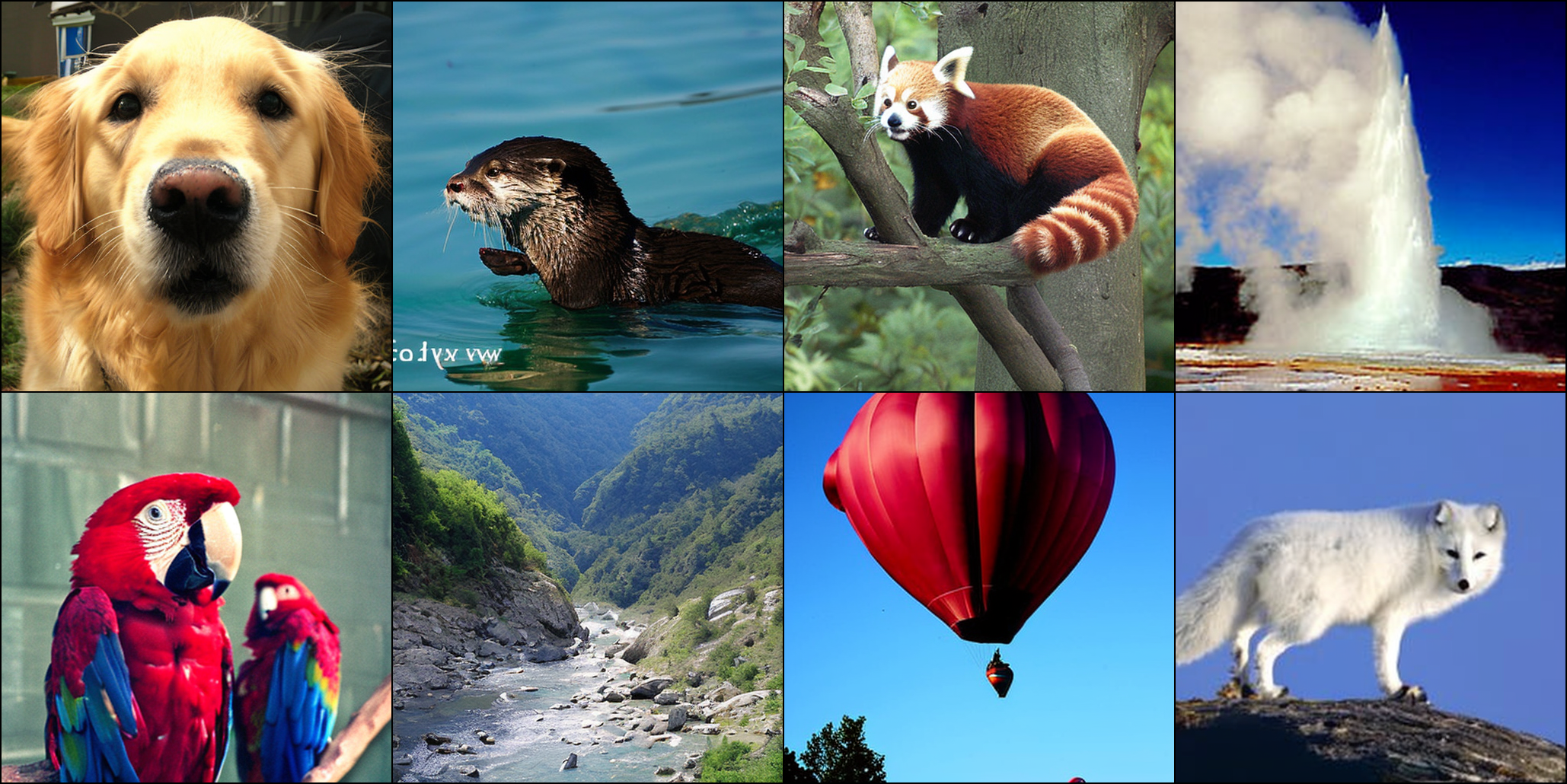} 
    \end{subfigure} \\
        \raisebox{10.0\height}{SA-Solver($\tau = 1.0$)} &
    \begin{subfigure}{0.7\textwidth}
        \includegraphics[width=\textwidth]{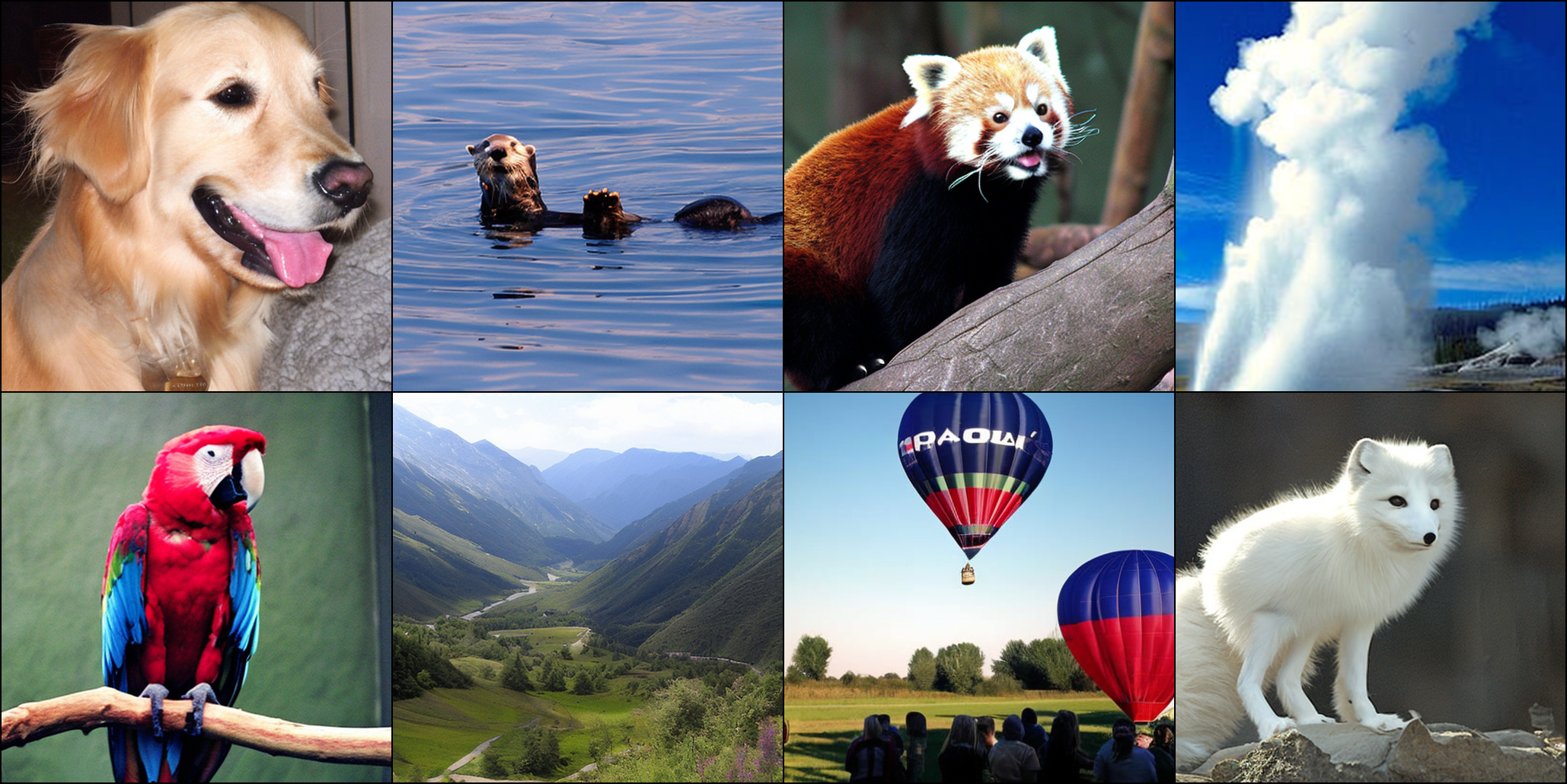} 
    \end{subfigure} \\

    \end{tabular}
    \caption{Samples by SA-Solver with 60 NFEs varying stochasticity($\tau$) with the same random seed from ImageNet 512x512 DiT model~\citep{peebles2022scalable} with classifer-free guidance scale $s = 4.0$(default setting to show image).}
\label{figure: dit512}
\end{figure}

\begin{figure}[!htb]
    \centering
    \setlength{\tabcolsep}{1pt}
    \begin{tabular}{c c c c}
     & NFE = 20 & NFE = 50 & NFE = 100  \\
    \raisebox{7.0\height}{DDIM($\eta=0$)} &
    \begin{subfigure}{0.25\textwidth}
        \includegraphics[width=\textwidth]{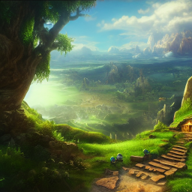}
    \end{subfigure} &
    \begin{subfigure}{0.25\textwidth}
        \includegraphics[width=\textwidth]{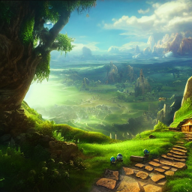}
    \end{subfigure} &
    \begin{subfigure}{0.25\textwidth}
        \includegraphics[width=\textwidth]{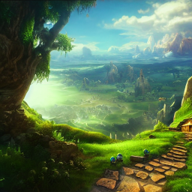}
    \end{subfigure} \\
    \raisebox{7.0\height}{UniPC} &
    \begin{subfigure}{0.25\textwidth}
        \includegraphics[width=\textwidth]{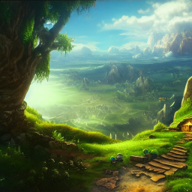}
    \end{subfigure} &
    \begin{subfigure}{0.25\textwidth}
        \includegraphics[width=\textwidth]{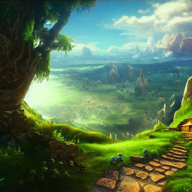}
    \end{subfigure} &
    \begin{subfigure}{0.25\textwidth}
        \includegraphics[width=\textwidth]{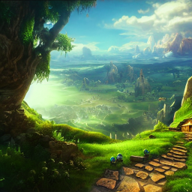}
    \end{subfigure} \\
        \raisebox{7.0\height}{SA-Solver(Ours)} &
    \begin{subfigure}{0.25\textwidth}
        \includegraphics[width=\textwidth]{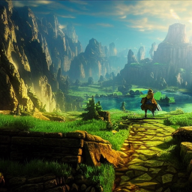}
    \end{subfigure} &
    \begin{subfigure}{0.25\textwidth}
        \includegraphics[width=\textwidth]{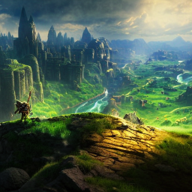}
    \end{subfigure} &
    \begin{subfigure}{0.25\textwidth}
        \includegraphics[width=\textwidth]{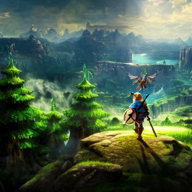}
    \end{subfigure} \\
    \end{tabular}
    \caption{Samples using Stable-Diffusion v1.5~\citep{Rombach_2022_CVPR} with a classifier-free guidance scale 7.5 with different solvers and NFEs. Prompt:The Legend of Zelda landscape atmospheric, hyper realistic, 8k, epic composition, cinematic, octane render, artstation landscape vista photography by Carr Clifton Galen Rowell, 16K resolution, Landscape veduta photo by Dustin Lefevre tdraw, 8k resolution, detailed landscape painting by Ivan Shishkin, DeviantArt, Flickr, rendered in Enscape, Miyazaki, Nausicaa Ghibli, Breath of The Wild, 4k detailed post processing, artstation, rendering by octane, unreal engine.}
\label{figure: t2i1}
\end{figure}

\begin{figure}[!htb]
    \centering
    \setlength{\tabcolsep}{1pt}
    \begin{tabular}{c c c c}
     & NFE = 20 & NFE = 50 & NFE = 100  \\
    \raisebox{7.0\height}{DDIM($\eta=0$)} &
    \begin{subfigure}{0.25\textwidth}
        \includegraphics[width=\textwidth]{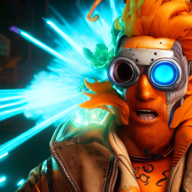}
    \end{subfigure} &
    \begin{subfigure}{0.25\textwidth}
        \includegraphics[width=\textwidth]{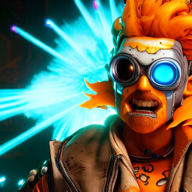}
    \end{subfigure} &
    \begin{subfigure}{0.25\textwidth}
        \includegraphics[width=\textwidth]{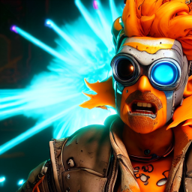}
    \end{subfigure} \\
    \raisebox{7.0\height}{UniPC} &
    \begin{subfigure}{0.25\textwidth}
        \includegraphics[width=\textwidth]{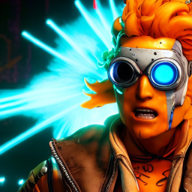}
    \end{subfigure} &
    \begin{subfigure}{0.25\textwidth}
        \includegraphics[width=\textwidth]{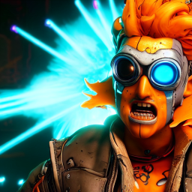}
    \end{subfigure} &
    \begin{subfigure}{0.25\textwidth}
        \includegraphics[width=\textwidth]{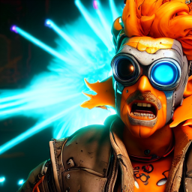}
    \end{subfigure} \\
        \raisebox{7.0\height}{SA-Solver(Ours)} &
    \begin{subfigure}{0.25\textwidth}
        \includegraphics[width=\textwidth]{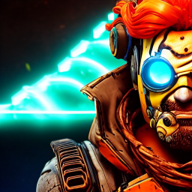}
    \end{subfigure} &
    \begin{subfigure}{0.25\textwidth}
        \includegraphics[width=\textwidth]{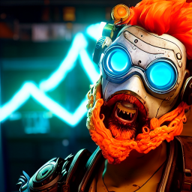}
    \end{subfigure} &
    \begin{subfigure}{0.25\textwidth}
        \includegraphics[width=\textwidth]{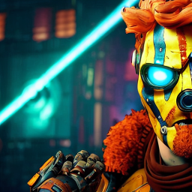}
    \end{subfigure} \\
    \end{tabular}
    \caption{Samples using Stable-Diffusion v1.5~\citep{Rombach_2022_CVPR} with a classifier-free guidance scale 7.5 with different solvers and NFEs. Prompt:glowwave portrait of curly orange haired mad scientist man from borderlands 3, au naturel, hyper detailed, digital art, trending in artstation, cinematic lighting, studio quality, smooth render, unreal engine 5 rendered, octane rendered, art style by pixar dreamworks warner bros disney riot games and overwatch.}
\label{figure: t2i2}
\end{figure}


\end{document}